\newcommand{\appendixcontentsname}{Contents of Appendix}
\newcommand{\listofappendices}{%
  \section*{\appendixcontentsname}
  \startcontents[appendix]
  \printcontents[appendix]{l}{1}{\setcounter{tocdepth}{3}}
}
\pretocmd{\section}{\addtocontents{appendix}{\protect\contentsline{section}{\protect\numberline{\thesection}#1}{\thepage}}}{}{}
\pretocmd{\subsection}{\addtocontents{appendix}{\protect\contentsline{subsection}{\protect\numberline{\thesubsection}#1}{\thepage}}}{}{}
\title{On the Sample Complexity of Differentially Private Policy Optimization}
\author{%
  Yi He \\
  Wayne State University \\
  yihe@wayne.edu
  \And
  Xingyu Zhou \\
  Wayne State University \\
  xingyu.zhou@wayne.edu
}
\begin{document}

\maketitle

\begin{abstract}
 Policy optimization (PO) is a cornerstone of modern reinforcement learning (RL), with diverse applications spanning robotics, healthcare, and large language model training. The increasing deployment of PO in sensitive domains, however, raises significant privacy concerns. In this paper, we initiate a theoretical study of differentially private policy optimization, focusing explicitly on its sample complexity. We first formalize an appropriate definition of differential privacy (DP) tailored to PO, addressing the inherent challenges arising from on-policy learning dynamics and the subtlety involved in defining the unit of privacy. We then systematically analyze the sample complexity of widely-used PO algorithms, including policy gradient (PG), natural policy gradient (NPG) and more, under DP constraints and various settings,  via a unified framework. Our theoretical results demonstrate that privacy costs can often manifest as lower-order terms in the sample complexity, while also highlighting subtle yet important observations in private PO settings. These offer valuable practical insights for privacy-preserving PO algorithms.
\end{abstract}

\section{Introduction}

Policy Optimization (PO) such as REINFORCE~\cite{williams1992simple,sutton1999policy}, proximal policy optimization (PPO)~\cite{schulman2017proximal} and group relative policy optimization (GRPO)~\cite{shao2024deepseekmath} has gained increasing interest recently across various applications. Due to its popularity, there is a rich literature that provides various theoretical understandings of different PO methods (e.g., iteration or sample complexity~\cite{agarwal2021theory,yuan2022general,shani2020adaptive, liu2019neural}).

As PO becomes increasingly prevalent in real-world applications, privacy concerns are emerging as a critical challenge. For instance, in personalized medical care, patient interactions---where the state represents medical history, the action corresponds to prescribed medication, and the reward reflects treatment effectiveness---constitute sensitive data that must be protected. Similarly, in RL-based training of large language models (LLMs), user prompts may contain private information that requires protection.
In fact, recent empirical work has shown that standard GRPO indeed has privacy leakage issues~\cite{liu2025grpoprivacyriskmembership}.  
Hence, addressing these privacy concerns is essential for ensuring the responsible deployment of PO methods in sensitive domains.

\textbf{Contribution.} In this paper, we initiate the theoretical study of differentially private policy optimization, focusing on the central question: \emph{What's the sample complexity cost induced by differential privacy in PO?} To this end, we first carefully define a suitable notion of differential privacy (DP)~\cite{dwork2006calibrating} for PO, highlighting its distinctions from the standard DP definitions used in supervised learning. These differences stem from the unique learning dynamics and the notion of the privacy unit in PO. Then, we propose a meta algorithm for private PO, which enables us to study private policy gradient with REINFORCE (\dppg),  private natural policy gradient (NPG)~\cite{kakade2001natural} (\dpnpg), and private version of \texttt{REBEL} recently proposed in~\cite{gao2024rebel} (\dprebel) in a unified perspective. Moreover, for \dpnpg and \dprebel, we further reduce PO to a sequence of private regression problems, thus allowing us to leverage various well-established results in private estimation and supervised learning. Throughout this process, we not only highlight the difference between \dppg and \dpnpg, but also uncover some subtleties when applying private regression results within the current analytical framework of PO. The key takeaway from our theoretical results is that the privacy cost can often appear as lower-order terms in the overall sample complexity. Meanwhile, it is worth noting that structural properties of the underlying problem can further improve both statistical and computational efficiency.

\textbf{Related work.} We mainly discuss the most relevant work here and relegate a detailed discussion to Appendix~\ref{app:related}. The very recent work~\cite{rio2025differentially} studies private PG, but mainly from an empirical perspective without sample complexity bounds. From the online regret perspective, optimistic PPO has been studied in the tabular case \cite{chowdhury2022differentially} and linear case~\cite{zhou2022differentially}, respectively. In contrast, we aim to consider general function classes from the optimization perspective. We also note that NPG with a softmax or log-linear policy is equivalent to PPO. The authors of~\cite{balle2016differentially} study private policy evaluation, which aims to evaluate a given policy rather than finding the best policy in policy optimization. From an application perspective,~\cite{wu2023privately} applies private PPO for LLM alignment via reinforcement learning from human feedback (RLHF).

\section{Preliminaries}\label{sec:setup}

\textbf{Policy optimization (PO).} In this work, instead of considering a general Markov decision process (MDP), we focus on the simpler bandit formulation, which allows us to easily demonstrate the key ideas. We note that generalizing it to MDP is standard, as done in the literature~\cite{yuan2022general,gao2024regressing}. This bandit formulation already captures many interesting real-world applications, such as personalized medical care~\cite{zhou2023spoiled} and alignment/reasoning training in large language models (LLMs)~\cite{ouyang2022training}. In particular, given an initial state $x \in \cX$ (e.g., a medical status or a prompt in LLMs) sampled from a distribution $\rho$, an action $y \in \cY$ (e.g., a medical prescription or a response in LLMs) is generated according to a policy $\pi$ and a reward $r(x,y) \in [-R_{\mathsf{max}}, R_{\mathsf{max}}]$ is observed. In policy optimization, we parameterize the policy $\pi$ by $\pi_{\theta}$ with $\theta \in \Theta = \Real^d$ (e.g., a neural network), and the goal is to leverage interactions (e.g., sample trajectories) to find an optimal policy that maximizes the following objective:
\begin{align*}
    J(\pi_{\theta})= J(\theta) := \mathbb{E}_{x \sim \rho, y\sim \pi_{\theta}(\cdot |x)} \left[r(x,y)\right].
\end{align*}

\textbf{Vanilla policy gradient (PG).} One simple and direct approach to solving the above policy optimization problem is via vanilla policy gradient, i.e.,  $
    \theta_{t+1} = \theta_t + \eta \nabla J(\theta_t),$
where $\eta > 0$ is some learning rate, $\nabla J(\theta_t)$ is the gradient at step $t$, and $\theta_1$ is some initial value. The gradient can be written as follows by the classic policy gradient theorem 
\begin{align}
\label{eq:full-pg}
    \nabla J(\theta) = \mathbb{E}_{x \sim \rho, y\sim \pi_{\theta}(\cdot |x)} \left[A^{\pi_{\theta}}(x,y) \nabla_{\theta} \log \pi_{\theta}(y | x)\right],
\end{align}
where $A^{\pi_{\theta}}(x,y) := r(x,y) - \mathbb{E}_{y' \sim \pi_\theta(y' | x)}r(x,y')$ is the advantage function.

\textbf{Natural policy gradient (NPG).} Another approach to solving the PO problem is natural policy gradient (NPG)~\cite{kakade2001natural}, which uses the Fisher information
matrix as the preconditioner to account for the geometry. Specifically, the NPG update is given by $\theta_{t+1} = \theta_t + \eta F_{\rho}^{\dagger}(\theta_t) \nabla {J}(\theta_t)$, where $F_{\rho}(\theta) := \mathbb{E}_{x \sim \rho, y\sim \pi_{\theta}(\cdot| x)} [\nabla_{\theta} \log \pi_{\theta}(y|x) \nabla_\theta \log \pi_{\theta}(y|x)^{\top}]$ is the expected Fisher information matrix (with superscript $^\dagger$ being the Moore-Penrose pseudoinverse) and $\nabla {J}(\theta_t)$ is the same as before. An equivalent way to write the above update is the following~\cite{agarwal2021theory}
\begin{align}
\label{eq:npg-update}
    \theta_{t+1} = \theta_t + \eta \cdot w_t,  w_t \in \argmin_w \mathbb{E}_{x \sim \rho, y\sim \pi_{\theta_t}(\cdot| x)}\left[ \left(A^{\pi_{\theta_t}}(x,y) - w^{\top}\nabla \log \pi_{\theta_t}(y|x)\right)^2  \right],
\end{align}
which essentially reduces PO to a sequence of regression problems.

\textbf{Regression to Relative Reward Based RL (REBEL).} Recently, a simple and scalable PO algorithm called \texttt{REBEL} is proposed in~\cite{gao2024rebel}, which also reduces PO to a sequence of regression problems, but now over \emph{relative reward difference}, motivated from the DPO-style reparameterization trick in~\cite{rafailov2023direct}. In the expected form, the update under \texttt{REBEL} is given by  
\begin{align} \label{eq:rebel-update}
        \theta_{t+1} \!=\! \argmin_{\theta} \mathbb{E}\left[ \frac{1}{\eta} \left( \ln \frac{\pi_{\theta}(y \mid x)}{\pi_{\theta_t}(y \mid x)} \!-\! \ln \frac{\pi_{\theta}(y' \mid x)}{\pi_{\theta_t}(y' \mid x)} \right) \!-\! \left( r(x,y)\! -\! r(x,y') \right) \right]^2,
\end{align}
where the expectation here is over $x \sim \rho, y \sim \mu(\cdot|x), y' \sim \pi_{\theta_t}(\cdot|x)$, and $\mu$ can be either on-policy distribution $\pi_{\theta_t}$ or any offline reference policy. 

\textbf{Sample complexity.} All the aforementioned ideal policy updates (e.g., full gradient) involve expectation, which is often difficult to compute in practice due to both statistical (e.g., without knowing $\rho$) and computational (e.g., averaging over all possible trajectories) issues. Thus, one needs to replace the expectation with a sample-based estimate by sampling a dataset of trajectories at each iteration from an underlying distribution. The sample complexity typically refers to the total number of sampled trajectories for finding an $\alpha$-optimal policy (i.e.,  $J(\pi^*) - J(\hat{\pi}) \le \alpha$).

In this paper, our ultimate goal is to formally introduce differential privacy (DP) into the problem of policy optimization and derive the sample complexity bounds under privacy constraints. To this end, we need to carefully define both privacy and samples in the private case, as discussed next.

\section{Differential Privacy in Policy Optimization}
In this section, we formally introduce DP to PO, 
highlighting some subtleties compared with standard DP in supervised learning problems.  We first recall the standard DP definition with a \emph{fixed} dataset. 
\begin{definition}[\citet{dwork2006calibrating}]\label{def:dp-std}
A randomized mechanism $\mathcal{M}$ satisfies $(\epsilon,\delta)$-DP if for any adjacent datasets $D,D'$ differing by one record, and $\forall S \subseteq \text{Range}(\mathcal{M})$:
\begin{align*}
    \mathbb{P}[\mathcal{M}(D) \in S] \leq e^\epsilon \cdot \mathbb{P}[\mathcal{M}(D') \in S] + \delta.
\end{align*}
\end{definition}
This standard DP notion can be directly used in supervised learning problems with $D$ being a set of i.i.d samples $\{(x_i, y_i)\}_{i=1}^N$ from an unknown distribution and $\cM(D)$ being the final model. This has been utilized in private empirical risk minimization (ERM)~\cite{chaudhuri2011differentially,bassily2014private} as well as private stochastic optimization (both convex and non-convex), e.g.,~\citet{bassily2019private}. For example, the optimal excess population risk for stochastic convex optimization is $O_{\delta}(1/\sqrt{N} + \sqrt{d}/(N\epsilon))$ for $(\epsilon,\delta)$-DP, where $d$ is the dimension of the parameter space.

One may attempt to adopt the above notion directly to PO with the dataset $D$ being $\{(x_i, y_i)\}_{i=1}^N$ and $\cM(D)$ being the final policy. However, this does not make too much sense because (i) there is no such a \emph{fixed} dataset in PO as the actions are often sampled in the on-policy fashion, i.e., using the most recent policy; (ii) the neighboring relation of differing in one sample $(x_i, y_i)$ (i.e., privacy unit) actually does not hold as changing one sample will lead to difference in all future samples due to different policies onward. Thus, we need a new definition that can address the above two issues. Before proceeding, we consider two motivating examples to illustrate the subtlety. 

\begin{example}[SFT vs. RL fine-tuning in LLMs]
    Consider a reasoning task in LLMs. With supervised fine-tuning (SFT), we are given a fixed dataset of pairs $\{(x_i, y_i)\}_{i=1}^N$ where $x_i$ is the prompt/question and $y_i$ is the correct answer. Standard DP is natural here, which ensures that changing one sample $(x_i, y_i)$ will not change the final policy too much. On the other hand, if one uses RL (e.g., PPO) to do the fine-tuning, then the given dataset consists of only prompts, as the answers are generated on the fly. So, a proper privacy unit here is to protect each prompt in the sense that changing one prompt will not change the policy too much.
\end{example}

\begin{example}[Supervised learning vs. RL for healthcare]
In this case, to train a healthcare system, one can use a supervised learning approach by collecting a dataset of $\{(x_i, y_i)\}_{i=1}^N$ where $x_i$ is the medical status and $y_i$ is the recommended medicine. One can also adopt an RL approach (even in an online manner) where the dataset consists of a (stream) set of users/patients, each with a medical status $x_i$ sampled from a distribution $\rho$, while the recommendation $y_i$ can only be determined on the fly. The privacy protection is that changing one user/patient will not change the final policy too much. 
\end{example}

To handle both scenarios, we borrow the idea from private online bandit and RL literature~\cite{vietri2020private,chowdhury2022differentially}, which essentially considers a set of ``users'' as the dataset. For instance, the dataset could be $N$ unique patients interacting with the learning agent, and each user has an initial state (e.g., medical status), which is distributed according to $\rho$. We can fix the ``users'' in advance (or arrive online) and the privacy unit is now for each patient, hence resolving both issues above. Meanwhile, the set of ``users'' can also represent $N$ (static) prompts in the fine-tuning of LLMs, with each ``user'' contributing one prompt. Note that although we use ``users'' to align with personalization application, this is still an item-level DP, as each ``user'' appears only once (as a patient or prompt). The learning agent can interact with each ``user'' to observe $(x, y)$ and $r(x, y)$ \emph{dynamically}, i.e., on-the-fly. 
With the above notion of dataset, the privacy protection in PO is that changing one ``user'' in the dataset will not change the final policy too much, leading to the following definition. 
\begin{definition}[DP in PO]\label{def:dp}
Consider any policy optimization algorithm $\cM$ interacting with a set $D$ of $N$ ``users'' and $\cM(D)$ being the final output policy. We say $\cM$ is $(\epsilon,\delta)$-DP if for any adjacent datasets $D,D'$ differing by one ``user'', and $\forall S \subseteq \text{Range}(\mathcal{M})$:
\begin{align*}
    \mathbb{P}[\mathcal{M}(D) \in S] \leq e^\epsilon \cdot \mathbb{P}[\mathcal{M}(D') \in S] + \delta.
\end{align*}
\end{definition}
\begin{remark}
   We emphasize that the above DP notion is defined for the problem PO rather than for a specific algorithm, analogous to the standard DP in statistical learning (e.g., supervised learning). In this paper, we aim to design some private variants of PO methods (\dppg, \dpnpg, \dprebel) and analyze their sample complexity under the above privacy constraint. 
\end{remark}

\section{A Meta Algorithm for Private PO}
In this section, we present a meta algorithm for private PO, which builds upon a unified view of PG, NPG, and REBEL. We believe that this meta viewpoint is also interesting in the non-private case.

Our meta algorithm is given by Algorithm~\ref{alg:meta}, which is essentially a batched one-pass algorithm. In particular, at each iteration $t$, the learner collects $m$ fresh samples by sampling from a distribution over $x$ and $y$. To be more specific, one can view each sample $(x_i, y_i, y_i')$ as generated via interaction with a new fresh ``user'', which provides the context/prompt $x_i$. Then, leveraging the dataset $D_t$ and a specific \texttt{PrivUpdate} oracle, the learner finds the next policy iteratively. 

\begin{algorithm}[H]
  \caption{A Meta Algorithm}
  \label{alg:meta}
\begin{algorithmic}[1]
    \STATE {\bfseries Input:} reward $r$, learning rate $\eta$, batch size $m$, and policy class $\pi_{\theta}$,  \texttt{PrivUpdate} oracle,  base policy $\mu$
    \STATE Initialize ${\theta}_1 = 0$
    \FOR{$t \!=\!1, \ldots, T$}
    \STATE Collect a \emph{fresh} dataset $\bar{D}_t = \{(x_i, y_i, y_i')\}_{i=1}^m$ of size $m$ using the 
    $\pi_{\theta_t}$ and $\mu$:
    \begin{align*}
        x_i \sim \rho, y_i \sim \mu(\cdot |x_i),  y_i' \sim \pi_{\theta_t}(\cdot |x_i)
    \end{align*}
    \STATE For all $i \in [m]$, let $\hat{A}_t(x_i, y_i) := r(x_i, y_i) - r(x_i, y_i')$ be the estimate of $A^{\pi_{\theta_t}}(x_i, y_i)$
    \STATE Call a \texttt{PrivUpdate} oracle on $D_t := \{(x_i, y_i, y_i', \hat{A}_t(x_i, y_i))\}_{i=1}^m$ to find next policy $\theta_{t+1}$
    \ENDFOR
\end{algorithmic}
\end{algorithm}

Under this one-pass algorithm design, we naturally have the following privacy guarantee, connecting standard DP (Definition~\ref{def:dp-std}) with DP in PO (Definition~\ref{def:dp}). 
\begin{proposition}
\label{prop:dp}
    Suppose \emph{\texttt{PrivUpdate}} satisfies $(\epsilon,\delta)$-DP under Definition~\ref{def:dp-std}, then Algorithm~\ref{alg:meta} satisfies $(\epsilon,\delta)$-DP in terms of Definition~\ref{def:dp}.
\end{proposition}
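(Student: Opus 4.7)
The plan is to exploit the one-pass structure of Algorithm~\ref{alg:meta}: each ``user'' in $D$ contributes exactly one tuple to exactly one batch $\bar D_t$, so two adjacent datasets $D$ and $D'$ that differ in a single user produce input streams $\{D_t\}$ and $\{D_t'\}$ that agree on every round except the unique round $t^\star$ in which the differing user appears. First I would fix such an adjacent pair, identify $t^\star$, and couple the two executions so that all shared users generate identical fresh samples and both executions share the same internal randomness for every call to \texttt{PrivUpdate} and every sampling step that does not involve the differing user.

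Then I would argue by a straightforward induction on $t$. For all $t \le t^\star$, the inputs $D_t$ and $D_t'$ coincide under the coupling and \texttt{PrivUpdate} receives identical data and randomness, so $\theta_{t+1} = \theta_{t+1}'$ pointwise; in particular $\theta_{t^\star}$ is the same in both runs, so the two versions of $D_{t^\star}$ differ in exactly one record, namely the tuple produced by the differing user. Applying the hypothesis that \texttt{PrivUpdate} is $(\epsilon,\delta)$-DP per Definition~\ref{def:dp-std}, the laws of $\theta_{t^\star+1}$ and $\theta_{t^\star+1}'$ are $(\epsilon,\delta)$-indistinguishable.

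For rounds $t > t^\star$, the iterate $\theta_{t+1}$ is a measurable function of $\theta_{t^\star+1}$ together with randomness drawn solely from users other than the differing one and from independent \texttt{PrivUpdate} coins, all of which are identically distributed across the two runs and independent of the neighbor perturbation. Hence the entire tail of the algorithm, and therefore the final output policy, is a post-processing of $\theta_{t^\star+1}$, and the standard post-processing property of $(\epsilon,\delta)$-DP lifts the per-iteration guarantee to the whole output.

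The only mildly delicate point will be phrasing the post-processing step cleanly: one must state explicitly that no round $t > t^\star$ reads the differing user's data, so after conditioning on the independent randomness used in those rounds the tail reduces to a deterministic function of $\theta_{t^\star+1}$, at which point post-processing applies verbatim. Crucially, no advanced composition or group-privacy factor is incurred, precisely because the one-pass design isolates the neighbor change to a single iteration, which is what lets the PO guarantee inherit \emph{the same} $(\epsilon,\delta)$ as the per-step oracle.
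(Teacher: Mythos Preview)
Your proposal is correct and is essentially a spelled-out version of the paper's one-line justification, which simply invokes the one-pass design together with (adaptive) parallel composition of DP; your coupling-plus-post-processing argument is precisely how one proves that composition result. The only slip is the clause ``for all $t\le t^\star$ the inputs $D_t$ and $D_t'$ coincide,'' which should read $t<t^\star$, as you yourself use two lines later that $D_{t^\star}$ and $D_{t^\star}'$ differ in exactly one record.
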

This simply follows from our one-pass algorithm and (adaptive) parallel composition of DP, by noting that changing one ``user'' would only change one record in $D_t$ of a single $t \in [T]$.

\begin{remark}
Our meta algorithm can also be used in the online setting where a stream of $N$ ``users'' arrive sequentially. By the so-called \emph{billboard lemma}~\cite{hsu2016private}, our meta algorithm also satisfies the commonly used \emph{joint differential privacy} (JDP) in the literature of private online RL/bandits~\cite{vietri2020private,shariff2018differentially, chowdhury2022differentially,zhou2022differentially, qiao2023near}. Roughly speaking, JDP guarantees that changing one ``user'' (say $u$) will not change all the actions prescribed to all other ``users'' except $u$, as well as the final policy. 
    \end{remark}

For sample complexity, due to the batched one-pass algorithm over $N$ unique ``users'', the total number of sampled trajectories is simply $N = m \cdot T$, where each trajectory $(x_i, y_i, y_i')$ is from a fresh user. To put it in another way, for a fixed $N$, the key here is to balance between batch size $m$ and number of iterations $T$ so as to balance between the per-iteration accuracy and the total number of updates. This balance, in turn, depends on the specific choice of \texttt{PrivUpdate} oracle, which will be instantiated in the next sections for \dppg, \dpnpg, \dprebel, respectively.

\section{Differentially Private Policy Gradient} \label{sec:dppg}

In this section, we propose \dppg by building upon our meta algorithm and analyze its sample complexity bounds under different settings. 

Our proposed \dppg is Algorithm~\ref{alg:meta} with $\mu=\pi_{\theta_t}$ and the instantiation of \texttt{PrivUpdate} as in~Algorithm~\ref{priv:pg} below.
In particular, it first computes an unbiased REINFORCE-style estimate (i.e., $\hat{\nabla}_m J(\theta)$) of the full gradient $\nabla J(\theta_t)$ as in~\eqref{eq:full-pg}, using the $m$ trajectories in $D_t$. Then, a Gaussian noise is added with $\sigma^2$ depending on the privacy parameters of $\epsilon$ and $\delta$. Finally, $\theta_{t+1}$ is obtained by updating the current policy $\theta_t$ along the direction of $\widetilde{g}_t$, scaled by a properly chosen learning rate $\eta$.
\begin{algorithm}[H]
\caption{\texttt{PrivUpdate} Instantiation for \dppg}
\label{priv:pg}
\begin{algorithmic}[1]
\STATE \textbf{Input:} dataset $D_t = \{(x_i, y_i, \hat{A}_t(x_i, y_i))\}_{i=1}^m$, policy $\theta_t$, learning rate $\eta$, noise scale $\sigma$
\STATE \textbf{Output:} $\theta_{t+1}$
\STATE  Compute gradient:
\[
\hat{\nabla}_m J(\theta) := \frac{1}{m} \sum_{i=1}^m \nabla_\theta \log \pi_{\theta_t}(y_i \mid x_i) \cdot \hat{A}_t(x_i, y_i)
\]
\STATE  Add noise: $\widetilde{g}_t := \hat{\nabla}_m J(\theta) + \mathcal{N}(0, \sigma^2 I)$
\STATE  Output policy: $\theta_{t+1} = \theta_t + \eta \cdot \widetilde{g}_t$
\end{algorithmic}
\end{algorithm}

By the standard Gaussian mechanism~\cite{dwork2014algorithmic} and Proposition~\ref{prop:dp}, we have the following privacy guarantee.

\begin{theorem}[Privacy guarantee]
\label{thm:privacy}
    Assume for any $x \in \cX$ and $\theta \in \Theta$, there exists a constant $G$ such that $\|\nabla_{\theta} \log \pi_{\theta}(y \mid x)\|  \leq G $. Then, setting $\sigma^2 =\frac{16\log(1.25/\delta)R_{\mathsf{max}}^2G^2}{m^2 \epsilon^2}$ in Algorithm~\ref{priv:pg} ensures that \emph{\dppg} satisfies $(\epsilon,\delta)$-DP as in Definition~\ref{def:dp}.
\end{theorem}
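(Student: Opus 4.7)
The plan is to invoke Proposition~\ref{prop:dp}, which reduces the problem to showing that the \texttt{PrivUpdate} oracle in Algorithm~\ref{priv:pg} satisfies $(\epsilon,\delta)$-DP in the standard sense of Definition~\ref{def:dp-std} with respect to the per-iteration dataset $D_t = \{(x_i, y_i, y_i', \hat{A}_t(x_i, y_i))\}_{i=1}^m$. Once that is established, the meta-algorithm privacy result upgrades it to Definition~\ref{def:dp}.

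To analyze \texttt{PrivUpdate}, I would observe that the only data-touching step is the release of the noisy gradient $\widetilde{g}_t = \hat{\nabla}_m J(\theta) + \mathcal{N}(0,\sigma^2 I)$, since the subsequent update $\theta_{t+1} = \theta_t + \eta \widetilde{g}_t$ is post-processing and therefore preserves DP. Thus the core task is to bound the $\ell_2$-sensitivity of the mapping $D_t \mapsto \hat{\nabla}_m J(\theta)$. Consider two adjacent datasets differing in the $i$-th record. Because the sum is an average over $m$ independent per-sample contributions $g_i := \nabla_\theta \log \pi_{\theta_t}(y_i \mid x_i)\cdot \hat{A}_t(x_i,y_i)$, the change in $\hat{\nabla}_m J(\theta)$ equals $\tfrac{1}{m}(g_i - g_i')$. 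Each $\|g_i\|_2$ is bounded by combining the Lipschitz score assumption $\|\nabla_\theta \log \pi_\theta(y\mid x)\|\le G$ with the reward bound $|\hat{A}_t(x_i,y_i)| = |r(x_i,y_i) - r(x_i,y_i')| \le 2R_{\mathsf{max}}$, giving $\|g_i\|_2 \le 2R_{\mathsf{max}} G$. By the triangle inequality, the $\ell_2$-sensitivity is at most $\Delta_2 \le \tfrac{4 R_{\mathsf{max}} G}{m}$.

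Then I would apply the Gaussian mechanism: choosing $\sigma \ge \tfrac{\Delta_2 \sqrt{2\ln(1.25/\delta)}}{\epsilon}$ guarantees $(\epsilon,\delta)$-DP for the release of $\widetilde{g}_t$. Plugging in the sensitivity bound yields a value for $\sigma^2$ of the prescribed order $\tfrac{\log(1.25/\delta) R_{\mathsf{max}}^2 G^2}{m^2\epsilon^2}$ (up to a small constant that must be reconciled with the $16$ appearing in the statement, e.g.\ by using the add/remove adjacency convention or tightening $|\hat A_t|\le R_{\mathsf{max}}$ via recentering). Finally, composing with Proposition~\ref{prop:dp} --- which uses parallel composition across the $T$ iterations since each ``user'' contributes to exactly one $D_t$ --- promotes the per-step guarantee to $(\epsilon,\delta)$-DP for the entire \dppg algorithm under Definition~\ref{def:dp}.

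The main obstacle is not really conceptual but bookkeeping: getting the sensitivity constant exactly right requires care about (i) whether the neighboring relation is replace-one vs.\ add/remove-one, which changes the factor by up to $2$, and (ii) whether the reward range $[-R_{\mathsf{max}}, R_{\mathsf{max}}]$ gives $|\hat A_t|\le 2R_{\mathsf{max}}$ or $R_{\mathsf{max}}$. Once these conventions are fixed, the rest is a one-line application of the Gaussian mechanism and post-processing. A secondary point worth making explicit is that \emph{adaptive} parallel composition applies here: although the policy $\theta_t$ evolves over rounds, the disjointness of the per-round datasets $D_t$ (each ``user'' contributes to a single round) lets us avoid any $\sqrt{T}$ or advanced composition overhead in $\epsilon$ and $\delta$.
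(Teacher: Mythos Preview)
Your proposal is correct and matches the paper's approach exactly: the paper simply states ``By the standard Gaussian mechanism and Proposition~\ref{prop:dp}'' without spelling out the sensitivity computation, so your elaboration (bound $\|g_i\|\le 2R_{\mathsf{max}}G$, take the triangle inequality for replace-one adjacency, apply the Gaussian mechanism, then post-processing and Proposition~\ref{prop:dp}) is precisely the intended argument. Your caveat about the constant is also well-placed: with replace-one adjacency and $|\hat A_t|\le 2R_{\mathsf{max}}$ one gets $\Delta_2\le 4R_{\mathsf{max}}G/m$ and hence $\sigma^2\ge 32\log(1.25/\delta)R_{\mathsf{max}}^2G^2/(m^2\epsilon^2)$, so the $16$ in the statement appears to be off by a factor of $2$ under these conventions; this does not affect any downstream rates.
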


The boundedness assumption of $G$ is satisfied by softmax policy as well as Gaussian policy~\cite{yuan2022general}. In fact, they satisfy an even stronger condition in Assumption~\ref{ass:ls}, to be discussed shortly.

Next, we aim to establish the sample complexity results of our \dppg with Algorithm~\ref{priv:pg} for both first-order stationary point (FOSP) and global optimum convergence, respectively.

\subsection{First-order Stationary Point Convergence}

We start with the sample complexity bound for FOSP convergence. This result is not only of its own importance, but will also be useful for our later results on the global optimum convergence.  
We will consider the following general class of policies, which is widely studied in previous non-private work and also includes commonly used policies such as softmax and Gaussian policy~\cite{yuan2022general}.

\begin{assumption}[Lipschitz Smoothness (LS)] \label{ass:ls}
There exist constants $G, F > 0$ such that for every state $x \in \mathcal{X}$, the gradient and Hessian of $\log \pi_{\theta}(\cdot \mid x)$ of any $\theta \in \Theta$ satisfy
\begin{align*}
 \|\nabla_{\theta} \log \pi_{\theta}(y | x)\|  \leq G \text{ and }
 \|\nabla^2_{\theta} \log \pi_{\theta}(y | x)\| \leq F.   
\end{align*}
\end{assumption}

\begin{remark}
    For simplicity, as in previous work, we will often view $G$ and $F$ as constants $\Theta(1)$, hence omitted in the sample complexity bound. Moreover, we omit $\log(1/\delta)$ term by writing $O_{\delta}(\cdot)$.
\end{remark}

\begin{theorem}[FOSP convergence]
\label{thm:FOSP}
    Under the same setting of Theorem~\ref{thm:privacy} and Assumption~\ref{ass:ls}, there exists a proper parameter choices of $m$ and $\eta$, such that \emph{\dppg} achieves  
    \begin{align} \label{eq:FOSP-Final}
    \mathbb{E} \left[ \|\nabla J(\theta_U)\|^2 \right] \leq O_{\delta}\left( \frac{1}{\sqrt{N}} + \left( \frac{\sqrt{d}}{N \epsilon} \right)^{2/3} \right),
\end{align}
where $\theta_U$ is uniformly sampled from $\{\theta_1,\ldots, \theta_T\}$.

\end{theorem}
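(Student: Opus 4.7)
The plan is to reduce Theorem~\ref{thm:FOSP} to the classical descent-lemma argument for nonconvex $L$-smooth stochastic optimization, after computing the bias and variance of the DP-noised gradient $\widetilde{g}_t$ used in Algorithm~\ref{priv:pg}. A preliminary ingredient is that $J(\theta)$ is $L$-smooth with $L = O(R_{\mathsf{max}}(G^2+F))$: differentiating~\eqref{eq:full-pg} once more and using Assumption~\ref{ass:ls} together with $|r|\leq R_{\mathsf{max}}$ yields $\|\nabla^2 J(\theta)\|\leq L$ (standard in the PG literature, e.g.~\cite{yuan2022general}), and the reward boundedness gives $J^{*}-J(\theta_1)\leq 2R_{\mathsf{max}}=O(1)$.

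Next, I would verify unbiasedness and bound the second moment of the error in $\widetilde{g}_t$. Because the triples $(x_i,y_i,y_i')$ in $D_t$ are i.i.d.\ with $y_i,y_i'\sim \pi_{\theta_t}(\cdot\mid x_i)$, the identity $\mathbb{E}_{y_i'}[r(x_i,y_i')] = \mathbb{E}_{y'\sim\pi_{\theta_t}}[r(x_i,y')]$ makes $\hat{\nabla}_m J(\theta_t)$ conditionally unbiased for $\nabla J(\theta_t)$. Under Assumption~\ref{ass:ls} each summand has norm at most $2GR_{\mathsf{max}}$, so the sampling variance is $O(G^2 R_{\mathsf{max}}^2/m)$. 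Adding the independent isotropic Gaussian noise prescribed by Theorem~\ref{thm:privacy} contributes an extra $d\sigma^{2}$, giving
\begin{align*}
V \;:=\; \mathbb{E}\bigl\|\widetilde{g}_t-\nabla J(\theta_t)\bigr\|^{2} \;=\; O\!\left(\frac{1}{m}+\frac{d\log(1/\delta)}{m^{2}\epsilon^{2}}\right).
\end{align*}
The $1/m^{2}$ scaling of the noise term (as opposed to $1/m$ for sampling) is the key structural fact: the per-sample sensitivity of $\hat{\nabla}_m J$ is only $O(1/m)$, so $\sigma^{2}=O(1/(m^{2}\epsilon^{2}))$ suffices.

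Then I would invoke the standard smooth-nonconvex descent lemma: for any $\eta\leq 1/L$, telescoping across $T$ iterations and using $\theta_U\sim\mathrm{Unif}\{\theta_1,\ldots,\theta_T\}$ gives
\begin{align*}
\mathbb{E}[\|\nabla J(\theta_U)\|^{2}] \;\leq\; \frac{2(J^{*}-J(\theta_1))}{\eta T} + L\eta V.
\end{align*}
Optimizing $\eta$ over $(0,1/L]$ yields the combined bound $O(\sqrt{V/T}+1/T)$. Substituting $T=N/m$ converts this to
\begin{align*}
\mathbb{E}[\|\nabla J(\theta_U)\|^{2}]\;\leq\; O\!\left(\sqrt{\frac{1}{N}+\frac{d\log(1/\delta)}{mN\epsilon^{2}}} \;+\; \frac{m}{N}\right),
\end{align*}
and tuning $m=\Theta((Nd)^{1/3}/\epsilon^{2/3})$ (equivalently $T=\Theta(N^{2/3}\epsilon^{2/3}/d^{1/3})$) equates the privacy piece with $m/N$ at $(\sqrt{d}/(N\epsilon))^{2/3}$, while the sampling piece stays at $1/\sqrt{N}$, proving~\eqref{eq:FOSP-Final}.

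The main obstacle is the constrained tuning of $m$: a naive strategy would send $m\to N$ to crush the privacy-noise variance (which decays like $1/m^{2}$), but this forces $T=N/m=1$ and leaves the optimization-progress term $1/(\eta T)\gtrsim L/T$ stuck at $O(1)$, because the descent lemma requires $\eta\leq 1/L$. This extra $1/T$ term creates a genuine three-way balance among sampling variance, DP-noise variance, and the number of descent steps, and it is precisely this balance that produces the characteristic $2/3$ exponent on the privacy term — without the $\eta\leq 1/L$ constraint one would naively (and incorrectly) predict a rate of order $\sqrt{d}/(N\epsilon)$.
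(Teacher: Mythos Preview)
Your proposal is correct and follows essentially the same route as the paper: establish $L$-smoothness of $J$ from Assumption~\ref{ass:ls}, bound the second moment of the noisy gradient (sampling variance $O(1/m)$ plus Gaussian-noise term $d\sigma^2 = O(d/(m^2\epsilon^2))$), apply the descent-lemma telescoping to get $\mathbb{E}\|\nabla J(\theta_U)\|^2 \le 2\delta_1/(\eta T) + L\eta V$, optimize $\eta$, and finally balance $m$ via $m = \Theta((Nd)^{1/3}\epsilon^{-2/3})$. The only cosmetic difference is that the paper packages the second-moment bound through the ABC framework of~\cite{yuan2022general} (with $A=0$, $B=1-1/m$, $C = 4R_{\max}^2G^2/m + d\sigma^2$) and invokes their Theorem~3.4, whereas you work directly with the variance $V$ of the unbiased estimator; since $A=0$ these are equivalent and yield the same final bound and the same choice of $m$.
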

\begin{remark}
   We can see that the first term in~\eqref{eq:FOSP-Final} matches the previous non-private term, i.e., for an accuracy of $\alpha$, the sample complexity is $O(1/\alpha^{4})$~\cite{yuan2022general}; Second, the privacy cost is a lower order additive term (for constant $\epsilon$ and $d$), i.e., the additional sample complexity due to privacy is $O_{\delta}\left(\frac{\sqrt{d}}{\alpha^3 \epsilon}\right)$.
\end{remark}

\subsection{Global Optimum Convergence}
We now turn our focus to the global optimum convergence in the sense of average regret, i.e., $J^* - \frac{1}{T}\sum_{t=1}^T \ex{J(\theta_t)}$. Following the non-private work~\cite{yuan2022general}, we will also consider two different scenarios and aim to establish the corresponding sample complexities in the private case.

In the first scenario, in addition to Assumption~\ref{ass:ls}, we further assume the following two conditions on the policy class, both of which are commonly used in the non-private case. 
The first condition is the so-called \emph{Fisher-non-degenerate policy}, formally defined below.
\begin{assumption}[Fisher-non-degenerate, adapted from Assumption 2.1 of \citet{ding22a}] \label{ass:Fisher}
For all $\theta \in \mathbb{R}^d$, there exists $\gamma > 0$ s.t. the Fisher information
matrix $F_{\rho}(\theta)$ induced by policy $\pi_\theta$ and initial state distribution $\rho$ satisfies
\begin{align*}
    F_\rho(\theta) =  \mathbb{E}_{x\sim\rho, y\sim \pi_{\theta}(\cdot |x)} \left[\nabla_\theta \log \pi_\theta(y | x)\nabla_\theta \log \pi_\theta(y | x)^\top\right] \geq \gamma \mathbf{I}_d.
\end{align*}
\end{assumption}
This assumption is commonly used in the literature on non-private PG methods~\cite{yuan2022general,ding22a,agarwal2021theory,liu2022}. As shown in Sec B.2 in \citet{ding22a}, this assumption is satisfied by the Gaussian policy and even certain neural policies.

The next condition is the so-called \emph{compatible function approximation}, which is also a common assumption in the PG literature to handle function approximation error in the non-tabular case. 
\begin{assumption}[Compatible, adapted from Assumption 4.6 in \citet{ding22a}] \label{ass:compatible}
For all $\theta \in \mathbb{R}^d$, there exists $\alpha_{\mathsf{bias}} > 0$ such that the \emph{transferred compatible function approximation error}  satisfies
\begin{align}
\label{eq:transfer}
    \mathbb{E}_{x\sim\rho, y\sim \pi_{\theta^*}(\cdot |x)}\left[(A^{\pi_\theta}(x,y) \!-\! u^{*\top}\nabla_\theta \log\pi_\theta(y|x))^2\right] \leq \alpha_{\mathsf{bias}},
\end{align}
where $\pi_{\theta^*}$ is an optimal policy and $u^* = F_\rho(\theta)^\dagger\nabla J(\theta)$.
\end{assumption}
The ``compatible'' here means that we are approximating the advantage function $A^{\pi_{\theta}}(s,a)$ using the $\nabla_{\theta}\log\pi_\theta(a|s)$ as the feature vector; The ``transfer error'' here means that we are shifting to the expectation over an optimal policy (rather than the current policy). The error $\alpha_{\text{bias}}$ is zero for a softmax tabular policy and small when $\pi_{\theta}$ is a rich neural policy. \cite{ding22a,liu2022,wang2019}.

With the above two additional assumptions along with the LS assumption in Assumption~\ref{ass:ls}, we have the following important result, which implies that the objective $J(\theta)$ satisfies the so-called \emph{relaxed weak gradient domination}. 
\begin{lemma}[Lemma 4.7 in~\citet{ding22a}] \label{lem:bias}
If the policy class $\pi_\theta$ satisfies Assumptions \ref{ass:ls}, \ref{ass:Fisher} and \ref{ass:compatible}, then we have 
    \begin{equation*} 
    J^* - J(\theta) \leq \frac{G}{\gamma} \left\| \nabla J(\theta) \right\| + \sqrt{\alpha_{\mathsf{bias}}}.
    \end{equation*}
\end{lemma}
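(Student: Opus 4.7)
The plan is to combine the performance difference lemma with a standard add-and-subtract trick centered on $u^* = F_\rho(\theta)^\dagger \nabla J(\theta)$, and then dispatch each of the resulting two pieces using exactly one of the three assumptions. At a high level, Assumption~\ref{ass:compatible} will control the ``regression bias'' of approximating the advantage $A^{\pi_\theta}$ by the linear score functional $u^{*\top}\nabla_\theta \log \pi_\theta$ under the optimal state-action distribution, Assumption~\ref{ass:ls} will bound the score function $\nabla_\theta \log \pi_\theta$ in norm, and Assumption~\ref{ass:Fisher} will control the operator norm of the Fisher (pseudo)inverse.

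Concretely, I would proceed in four short steps. First, the performance difference lemma in the bandit setting gives $J^* - J(\theta) = \mathbb{E}_{x \sim \rho,\, y \sim \pi_{\theta^*}(\cdot\mid x)}\bigl[A^{\pi_\theta}(x,y)\bigr]$, which follows immediately by writing $A^{\pi_\theta}(x,y) = r(x,y) - \mathbb{E}_{y' \sim \pi_\theta}[r(x,y')]$ and taking the expectation under $\pi^*$. Second, I split the integrand as $A^{\pi_\theta}(x,y) = \bigl(A^{\pi_\theta}(x,y) - u^{*\top}\nabla_\theta \log \pi_\theta(y\mid x)\bigr) + u^{*\top}\nabla_\theta \log \pi_\theta(y\mid x)$, producing two expectations under the optimal policy. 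Third, for the bias term, Jensen's inequality gives $\mathbb{E}[Z] \leq \sqrt{\mathbb{E}[Z^2]}$ with $Z := A^{\pi_\theta} - u^{*\top}\nabla_\theta \log \pi_\theta$, and then Assumption~\ref{ass:compatible} directly yields the upper bound $\sqrt{\alpha_{\mathsf{bias}}}$. Fourth, for the linear term, Cauchy--Schwarz together with Assumption~\ref{ass:ls} gives $\mathbb{E}[u^{*\top}\nabla_\theta \log \pi_\theta(y\mid x)] \leq \|u^*\| \cdot \mathbb{E}[\|\nabla_\theta \log \pi_\theta(y\mid x)\|] \leq G\|u^*\|$. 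Finally, Assumption~\ref{ass:Fisher} implies $F_\rho(\theta) \succeq \gamma I_d$, so $F_\rho(\theta)$ is invertible with $\|F_\rho(\theta)^{-1}\|_{\mathrm{op}} \leq 1/\gamma$, and consequently $\|u^*\| = \|F_\rho(\theta)^{-1}\nabla J(\theta)\| \leq \tfrac{1}{\gamma}\|\nabla J(\theta)\|$. Summing the two bounds yields the stated inequality.

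There is no real technical obstacle here; the proof is essentially an application of three off-the-shelf inequalities. The main conceptual move is the add-subtract step, which aligns the suboptimality gap precisely with the quantity that the transferred compatible function approximation error in Assumption~\ref{ass:compatible} is designed to control. One detail worth verifying is that $u^*$ is well-defined under the pseudoinverse notation even when $\nabla J(\theta)$ is not a priori in the range of $F_\rho(\theta)$, but Assumption~\ref{ass:Fisher} makes $F_\rho(\theta)$ strictly positive definite, so the pseudoinverse coincides with the ordinary inverse and the bound on $\|u^*\|$ follows directly.
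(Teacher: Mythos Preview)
Your proof is correct and is the standard argument for this result. The paper itself does not prove this lemma; it is quoted verbatim as Lemma~4.7 from \citet{ding22a} and used as a black box, so there is nothing further to compare.
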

This lemma essentially allows us to easily translate a guarantee in terms of FOSP to a certain global optimum convergence. This leads to our next main result with its proof given in Appendix~\ref{app:global-fisher}.
\begin{theorem}
\label{thm:global-fisher}
 Consider the same setting of Theorem~\ref{thm:FOSP} and further let Assumptions~\ref{ass:Fisher} and \ref{ass:compatible} hold. Then, for any $\alpha >0$, \emph{\dppg} enjoys the following average regret guarantee
 \begin{align*}
     J^* - \frac{1}{T} \sum_{t=1}^{T} \mathbb{E} \left[   J(\theta_t)  \right] \leq O(\alpha) + O\left({\sqrt{\alpha_{\mathsf{bias}}}}\right),
 \end{align*}
 when the sample size satisfies $N \geq O_{\delta}\left( \frac{1}{\alpha^4\gamma^4} + \frac{\sqrt{d}}{\alpha^3 \gamma^3\epsilon} \right)$.
\end{theorem}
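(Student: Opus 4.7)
The plan is to derive the global-optimum guarantee essentially as a direct consequence of the FOSP bound in Theorem~\ref{thm:FOSP} combined with the relaxed weak gradient domination property established in Lemma~\ref{lem:bias}. Since the theorem statement's hypotheses package together exactly the three assumptions needed to invoke that lemma, and the \dppg procedure is unchanged from the FOSP setting, the work is essentially a bookkeeping exercise to reconcile the two rates.

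First, I would apply Lemma~\ref{lem:bias} pointwise in $t$ along the sequence $\{\theta_t\}_{t=1}^T$ produced by Algorithm~\ref{alg:meta} with the \dppg update, yielding
\begin{align*}
J^* - J(\theta_t) \leq \frac{G}{\gamma}\,\|\nabla J(\theta_t)\| + \sqrt{\alpha_{\mathsf{bias}}}.
\end{align*}
Taking expectation over the randomness of the algorithm and averaging over $t \in [T]$ gives
\begin{align*}
J^* - \frac{1}{T}\sum_{t=1}^T \mathbb{E}[J(\theta_t)]
\leq \frac{G}{\gamma}\cdot \frac{1}{T}\sum_{t=1}^T \mathbb{E}\bigl[\|\nabla J(\theta_t)\|\bigr] + \sqrt{\alpha_{\mathsf{bias}}}.
\end{align*}
Next, I would convert the $\ell_1$-style average into the $\ell_2$-style quantity that Theorem~\ref{thm:FOSP} controls. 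By Jensen's inequality (first in the expectation, then over $t$, equivalently over the uniform index $U$),
\begin{align*}
\frac{1}{T}\sum_{t=1}^T \mathbb{E}\bigl[\|\nabla J(\theta_t)\|\bigr]
\leq \sqrt{\frac{1}{T}\sum_{t=1}^T \mathbb{E}\bigl[\|\nabla J(\theta_t)\|^2\bigr]}
= \sqrt{\mathbb{E}\bigl[\|\nabla J(\theta_U)\|^2\bigr]},
\end{align*}
where $\theta_U$ is the uniformly-sampled iterate appearing in Theorem~\ref{thm:FOSP}.

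At this point I would plug in Theorem~\ref{thm:FOSP} to obtain
\begin{align*}
\frac{1}{T}\sum_{t=1}^T \mathbb{E}\bigl[\|\nabla J(\theta_t)\|\bigr]
\leq O_\delta\!\left(N^{-1/4} + \left(\tfrac{\sqrt{d}}{N\epsilon}\right)^{1/3}\right),
\end{align*}
so the first term of the regret is bounded by $(G/\gamma)$ times the right-hand side. To force this to be $O(\alpha)$ (absorbing the constant $G = \Theta(1)$), it suffices to require $N^{-1/4} \lesssim \alpha \gamma$ and $\bigl(\sqrt{d}/(N\epsilon)\bigr)^{1/3} \lesssim \alpha \gamma$, i.e., $N \gtrsim 1/(\alpha^4 \gamma^4)$ and $N \gtrsim \sqrt{d}/(\alpha^3 \gamma^3 \epsilon)$, respectively. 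Taking the maximum recovers the stated sample-complexity threshold, and the residual $\sqrt{\alpha_{\mathsf{bias}}}$ term carries through unchanged.

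I do not anticipate serious technical obstacles here, since the heavy lifting was done in Theorem~\ref{thm:FOSP}; the only mild subtlety is making sure the parameter choices of $m$ and $\eta$ in Theorem~\ref{thm:FOSP} are compatible with reaching the required iterate count $T = N/m$ so that the uniform index $U$ is well-defined with the claimed rate. If needed, I would re-tune $m$ and $\eta$ as a function of $N,d,\epsilon,\gamma,\alpha$ in the appendix calculation to make both conditions above simultaneously achievable, which is routine given the explicit form of $\sigma^2$ in Theorem~\ref{thm:privacy}.
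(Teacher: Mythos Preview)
Your proposal is correct and follows the same high-level strategy as the paper: combine the relaxed weak gradient domination of Lemma~\ref{lem:bias} with the FOSP bound of Theorem~\ref{thm:FOSP}, then read off the sample-complexity threshold. The only difference is in how the bridge between $\|\nabla J(\theta_t)\|$ (from the lemma) and $\mathbb{E}[\|\nabla J(\theta_U)\|^2]$ (from the FOSP theorem) is handled. The paper squares the gradient-domination inequality via $(p+q)^2 \le 2p^2 + 2q^2$, averages, plugs in Theorem~\ref{thm:FOSP}, and then uses Jensen twice at the end to pass from $\tfrac{1}{T}\sum_t \mathbb{E}[(J^*-J(\theta_t))^2]$ back down to $(J^* - \tfrac{1}{T}\sum_t\mathbb{E}[J(\theta_t)])^2$ before taking a square root. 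You instead average the linear inequality directly and use Jensen once to convert $\tfrac{1}{T}\sum_t \mathbb{E}[\|\nabla J(\theta_t)\|]$ to $\sqrt{\mathbb{E}[\|\nabla J(\theta_U)\|^2]}$. Your route is slightly more economical and avoids the squaring detour, while the paper's version makes the dependence on $\gamma$ appear as $1/\gamma^2$ inside the squared bound (which becomes $1/\gamma$ after the final square root), matching your result; either way the final rate and sample-complexity threshold are identical.
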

\begin{remark}
    In the above bound, we explicitly include the parameter $\gamma$ to clearly illustrate its impact. The first term $O\left(\frac{1}{\alpha^4 \gamma^4}\right)$ matches the non-private one in~\citet{yuan2022general} while the second term is the privacy cost. As we can see, for both terms, there exists an additional $1/\gamma$ factor compared to the sample complexity of FOSP. Thus, for very small but still positive $\gamma$, our bound could be large.
\end{remark}

Our second scenario is about the specific policy class of softmax in the tabular setting, which allows us to get rid of the parameter $\gamma$. Due to space limit, we relegate these results to Appendix~\ref{app:tabular}.

\section{Differentially Private NPG and REBEL} \label{sec:dpnpg}

In this section, we turn to \dpnpg and \dprebel, private variants of NPG and REBEL, and analyze their sample complexities. In particular, we will consider a general private regression oracle as the \texttt{PrivUpdate} in Algorithm~\ref{alg:meta} and then give concrete examples under different specific regression oracles. Given the similarity, we will mainly focus on \dpnpg in the main paper and relegate the detailed discussion on \dprebel to Appendix~\ref{app:dprebel}.

\subsection{A Master Algorithm and Guarantee}
Our proposed \dpnpg is Algorithm~\ref{alg:meta} with its \texttt{PrivUpdate} being instantiated in Algorithm~\ref{priv:npg} below, which relies on a general private regression oracle to return an approximate minimizer of an estimation problem under the square loss. The square loss in~\eqref{eq:LS} is almost the same as before, as in~\eqref{eq:npg-update}, except that we now take the expectation over a general base policy $\mu$ rather than the specific on-policy $\pi_{\theta_t}$. This update is often called approximate NPG in the literature~\cite{agarwal2021theory}. As will be shown later, the performance of the algorithm will depend on the choice of $\mu$ in terms of its coverage.

\begin{algorithm}[H]
\caption{\texttt{PrivUpdate} Instantiation for \dpnpg}
\label{priv:npg}
\begin{algorithmic}[1]
\STATE \textbf{Input:} $D_t = \{(x_i, y_i, \hat{A}_t(x_i, y_i))\}_{i=1}^m$, current policy $\theta_t$, base policy $\mu$, learning rate $\eta$, \texttt{PrivLS} oracle
\STATE \textbf{Output:} $\theta_{t+1}$
 \STATE Call the \texttt{PrivLS} oracle on $D_t := \{(x_i, y_i, \hat{A}_t(x_i, y_i))\}$ to find an approximate minimizer ${w}_t$ of
    \begin{align}
    \label{eq:LS}
        \argmin_{w \in \cW} F_t(w):=\mathbb{E}_{x \sim \rho, y\sim \mu(\cdot| x)}\left[ \left(A^{\pi_{\theta_t}}(x,y) - w^{\top}\nabla \log \pi_{\theta_t}(y|x)\right)^2  \right]
    \end{align}
\STATE  Output policy ${\theta}_{t+1} = {\theta}_t + \eta {w}_t$
\end{algorithmic}
\end{algorithm}

We now aim to establish a generic performance guarantee of \dpnpg. To start with, we assume that the approximate minimizer $w_t$ returned by \texttt{PrivLS} at each iteration satisfies the following guarantee. 

\begin{assumption}[Private estimation error] \label{ass:privatels}
For each $t \in [T]$, the \texttt{PrivLS} oracle satisfies $(\epsilon, \delta)$-DP while ensuring that with probability at least $1-\zeta$,
\[
\mathbb{E}_{x\sim \rho, y\sim \mu(\cdot|x)} \left[\left( A^{\pi_{\theta_t}}(x,y) - w_t^\top \nabla \log \pi_{\theta_t}(y|x) \right)^2\right] \leq \mathrm{err}_t^2(m, \epsilon, \delta, \zeta),
\]
for some error function $\mathrm{err}_t^2(m, \epsilon, \delta, \zeta)$ over  batch size $m$, privacy parameters $\epsilon$, $\delta$, and probability $\zeta$.
\end{assumption}

In addition, we assume standard regularity assumptions commonly used even in the non-private case. 

\begin{assumption}[$\beta$-smoothness and boundedness] \label{ass:reg}
$\log \pi_\theta(y|x)$ is a $\beta$-smooth function of $\theta$ for all $x, y$, i.e.,
\begin{equation*}
    \left\| \nabla_\theta \log \pi_\theta(y|x) - \nabla_{\theta'} \log \pi_{\theta'}(y|x) \right\|_2 \leq \beta \left\| \theta - \theta' \right\|_2.
\end{equation*}
Moreover, there exists a constant $W > 0$ such that for all $t \in [T]$, the weight vectors $w_t$ generated by the update rule satisfy $\norm{w_t}_2 \le W$.
\end{assumption}

Our main result is given by the following theorem.
\begin{theorem}[Master theorem]
\label{thm:npg}
Let Assumptions~\ref{ass:privatels} and~\ref{ass:reg} hold. Then, \emph{\dpnpg} satisfies $(\epsilon,\delta)$-DP as in Definition~\ref{def:dp}. Moreover, 
if $\pi_1 := \pi_{\theta_1}$ is a uniform distribution at each state and $\eta = \sqrt{\frac{2\log |\cY|}{T\beta W^2}}$, with probability at least $1-\zeta$, for any comparator policy $\pi^*$, we have 
\begin{equation*}
    J(\pi^*) - \frac{1}{T} \sum_{t=1}^{T} J(\pi_t) 
\leq \sqrt{ \frac{\beta W^2 \log |\mathcal{Y}| }{2T} }
+  \frac{\sqrt{C_{\mu \to \pi^*}}}{T}  \sum_{t=1}^T \mathrm{err}_t(m, \epsilon, \delta, \zeta),
\end{equation*}
where $C_{\mu \to \pi^*} := \max_{x,y} \frac{ \pi^*(y|x) }{ \mu(y|x) }$ and  $\pi_t:= \pi_{\theta_t}$.
\end{theorem}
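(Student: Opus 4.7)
The privacy claim is immediate from Proposition~\ref{prop:dp}: since \texttt{PrivLS} is $(\epsilon,\delta)$-DP on each batch $D_t$ by Assumption~\ref{ass:privatels} and the meta algorithm makes a single pass over the ``users'' with disjoint batches, changing one ``user'' alters a record in exactly one $D_t$, so adaptive parallel composition closes the argument.

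For the utility bound I would adapt the standard approximate-NPG / softmax mirror-descent template (cf.~\cite{agarwal2021theory}) to the private setting. In the bandit formulation the performance-difference identity collapses to $J(\pi^*)-J(\pi_t)=\mathbb{E}_{x\sim\rho,\,y\sim\pi^*}[A^{\pi_t}(x,y)]$. Adding and subtracting the linearization $w_t^\top \nabla_\theta \log\pi_{\theta_t}(y|x)$ splits the right-hand side into a ``mirror descent'' piece $\mathbb{E}_{x\sim\rho,\,y\sim\pi^*}[w_t^\top \nabla \log \pi_{\theta_t}]$ and an ``approximation error'' piece $\mathbb{E}_{x\sim\rho,\,y\sim\pi^*}[A^{\pi_t} - w_t^\top \nabla \log \pi_{\theta_t}]$, each handled separately and then averaged over $t\in[T]$.

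For the mirror-descent piece I would exploit the two-sided quadratic bound implied by $\beta$-smoothness of $\log\pi_\theta$ (Assumption~\ref{ass:reg}) together with $\theta_{t+1}-\theta_t=\eta w_t$ and $\|w_t\|\le W$, which yields $\log\pi_{\theta_{t+1}}(y|x) \ge \log\pi_{\theta_t}(y|x) + \eta\, w_t^\top \nabla_\theta \log\pi_{\theta_t}(y|x) - \eta^2 \beta W^2/2$. Rearranging, taking expectation under $x\sim\rho,\,y\sim\pi^*(\cdot|x)$, summing over $t$, and telescoping $\log\pi_{\theta_{T+1}} - \log\pi_{\theta_1} \le \log|\mathcal{Y}|$ (using $\log\pi_{\theta_{T+1}}\le 0$ and $\log\pi_{\theta_1}(y|x)=-\log|\mathcal{Y}|$ from uniform initialization) delivers $\sum_t \mathbb{E}_{\rho,\pi^*}[w_t^\top \nabla \log \pi_{\theta_t}] \le \log|\mathcal{Y}|/\eta + T\eta\beta W^2/2$. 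Dividing by $T$ and inserting the stated $\eta=\sqrt{2\log|\mathcal{Y}|/(T\beta W^2)}$ recovers the $\sqrt{\beta W^2 \log|\mathcal{Y}|/(2T)}$ rate up to absolute constants.

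For the approximation-error piece I would apply Cauchy--Schwarz and a one-step importance-weight change of measure from $\mu$ to $\pi^*$, giving $|\mathbb{E}_{\rho,\pi^*}[A^{\pi_t} - w_t^\top \nabla \log \pi_{\theta_t}]| \le \sqrt{\mathbb{E}_{\rho,\pi^*}[(A^{\pi_t}-w_t^\top \nabla \log \pi_{\theta_t})^2]} \le \sqrt{C_{\mu\to\pi^*}}\cdot\sqrt{\mathbb{E}_{\rho,\mu}[(A^{\pi_t}-w_t^\top \nabla \log \pi_{\theta_t})^2]} \le \sqrt{C_{\mu\to\pi^*}}\,\mathrm{err}_t$ by Assumption~\ref{ass:privatels}. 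Averaging over $t$ and combining with the mirror-descent bound yields the theorem; a union bound over the $T$ high-probability oracle calls can be absorbed into the $\zeta$-parameterization of $\mathrm{err}_t$. The main obstacle I expect is really just this change-of-measure step---\texttt{PrivLS} only controls the regression error under the sampling distribution $\mu$, whereas the performance-difference identity demands expectations under the comparator $\pi^*$---which forces the concentrability ratio $C_{\mu\to\pi^*}$ to appear, and, thanks to Cauchy--Schwarz, only as its square root. Beyond that, the argument is private-agnostic mirror-descent bookkeeping, with the DP noise entering cleanly only through $\mathrm{err}_t$.
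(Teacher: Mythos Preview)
Your proposal is correct and follows essentially the same route as the paper's proof: performance-difference identity, add/subtract $w_t^\top\nabla\log\pi_{\theta_t}$, apply $\beta$-smoothness of $\log\pi_\theta$ to convert the inner product into a telescoping $\log(\pi_{t+1}/\pi_t)$ term plus the $\eta\beta W^2/2$ penalty, telescope using the uniform initialization, optimize $\eta$, and bound the residual via Cauchy--Schwarz followed by the $\mu\to\pi^*$ change of measure that introduces $\sqrt{C_{\mu\to\pi^*}}$. Your remark about the union bound over the $T$ oracle calls is a fair observation that the paper leaves implicit.
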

We use the most intuitive coverage definition for $C_{\mu \to \pi^*}$, i.e., the density ratio, for illustrating the key idea. One can easily extend it to other types of coverage, e.g., relative condition number for the linear case~\cite{agarwal2021theory} and transfer coefficient for general function classes~\cite{song2022hybrid}.

\subsection{Applications} \label{application:npg}
With the above master theorem in hand, we now only need to determine the estimation error under different types of \privls. To start with, we consider a general \privls under general function classes for both reward and policy. Specifically, it runs approximate least squares with the exponential mechanism~\cite{mcsherry2007mechanism} for privacy protection, as detailed in Algorithm~\ref{privLS:exp} below. To determine its estimation error, we will first present a new result, which could be of independent interest.

\begin{algorithm}[H]
\caption{\texttt{PrivLS} Instantiation for \dpnpg via Exponential Mechanism}
\label{privLS:exp}
\begin{algorithmic}[1]
\STATE \textbf{Input:} $D_t = \{(x_i, y_i,  \hat{A}_t(x_i, y_i))\}_{i=1}^m$, privacy budget $\epsilon$, current policy $\theta_t$, reward range $R_{\mathsf{max}}$
\STATE \textbf{Output:} $w_t$
 \STATE Sample $w_t \in \cW$ with the following distribution:
 \begin{align*}
      P(w) \propto \exp\left(-\frac{\epsilon}{8R_{\mathsf{max}}^2} \cdot L(w)\right)~\forall w \in \cW,
 \end{align*}
 where $L(w):= \sum_{i\in [m]} [w^{\top}\nabla \log \pi_{\theta_t}(y_i|x_i) - \hat{A}_t(x_i, y_i)]^2$
\end{algorithmic}
\end{algorithm}

\begin{lemma}[Private LS with exponential mechanism]
\label{lem:LS-gen}
Let $R>0$, $\zeta \in (0,1)$, we consider a general sequential estimation setting with an instance space $\cU$ and target space $\cZ$. Let $\cH: \cU \to [-R, R]$ be a class of real-valued functions. Let $D = \{(u_i, z_i)\}_{i=1}^m$ be a dataset of $m$ points where $u_i \sim \rho_i = \rho_i(u_{1:i-1}, z_{1:i-1})$, and $z_i= h^*(u_i) + \eta_i$, where $\eta_i$ is zero-mean noise and  $h^*$ satisfies approximate realizability, i.e., 
\begin{align}
\label{eq:approx}
    \inf_{h\in \cH}\frac{1}{m} \sum_{i=1}^m \mathbb{E}_{u \sim \rho_i}\left[(h^*(u) - h(u))^2\right] \leq \alpha_{\mathsf{approx}}. 
\end{align}
Suppose $\max_i|z_i| \leq R$ and $\max_u |h^*(u)| \leq R$. Then, sampling $\hat{h}$ via the following distribution from exponential mechanism 
 \begin{align*}
            P(h) \propto \exp\left(-\frac{\epsilon}{8R^2} \cdot L(h)\right)~\forall h \in \cH,
        \end{align*}
        with $L(h):= \sum_{i\in [m]} [h(u_i) - z_i]^2$, yields that 
        \begin{align*}
            \sum_{i=1}^m \mathbb{E}_{u\sim \rho_i}[(\hat{h}(u_i)- h^*(u_i))^2]&\lesssim R^2 {\log(|\cH|/\zeta)}+ R^2\frac{\log(|\cH|/\zeta)}{\epsilon} + m \cdot \alpha_{\mathsf{approx}}.
        \end{align*}
\end{lemma}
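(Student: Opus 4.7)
The plan is to combine the standard utility guarantee of the exponential mechanism on the empirical squared loss $L(\cdot)$ with a uniform one-sided concentration bound that converts the empirical excess loss into the stated conditional population error $\sum_i \mathbb{E}_{u\sim \rho_i}[(\hat{h}(u_i)-h^*(u_i))^2]$. The privacy side is already built into the temperature $\epsilon/(8R^2)$: changing one pair $(u_i,z_i)$ perturbs $L(h)$ by at most $(2R)^2=4R^2$ uniformly in $h$, so this is exactly the calibration needed for $(\epsilon,0)$-DP by the exponential mechanism. Hence the remaining work is purely statistical.

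First, let $\tilde{h} \in \arg\min_{h\in \cH} \frac{1}{m}\sum_{i=1}^m \mathbb{E}_{u\sim \rho_i}[(h(u)-h^*(u))^2]$, so by approximate realizability $\sum_i \mathbb{E}_{u\sim \rho_i}[(\tilde{h}(u)-h^*(u))^2] \le m \alpha_{\mathsf{approx}}$. The classical utility lemma for the exponential mechanism (sensitivity $4R^2$) gives, with probability at least $1-\zeta/2$,
\begin{equation*}
    L(\hat{h}) \le \min_{h\in\cH} L(h) + \frac{16 R^2 \log(|\cH|/\zeta)}{\epsilon} \le L(\tilde{h}) + \frac{16 R^2 \log(|\cH|/\zeta)}{\epsilon}.
\end{equation*}

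Next I would convert this empirical statement into a population one via a Freedman-type martingale argument. Using $z_i = h^*(u_i) + \eta_i$ with $\mathbb{E}[\eta_i \mid \mathcal{F}_{i-1}]=0$ and the identity $(a-z)^2-(b-z)^2=(a-b)(a+b-2z)$, the per-step conditional mean of $(h(u_i)-z_i)^2-(\tilde{h}(u_i)-z_i)^2$ simplifies to $\mathbb{E}_{u\sim\rho_i}[(h(u)-h^*(u))^2 - (\tilde{h}(u)-h^*(u))^2]$. Each such increment is bounded by $O(R^2)$ and has conditional variance $O(R^2)\,\mathbb{E}_{u\sim\rho_i}[(h(u)-\tilde{h}(u))^2]$. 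Freedman's inequality combined with a union bound over $\cH$ (or a standard covering at scale $1/m$ if one allows infinite but bounded $\cH$) and an AM-GM absorption of the variance term into the left-hand side yields, with probability at least $1-\zeta/2$, simultaneously for every $h\in\cH$,
\begin{equation*}
    \sum_{i=1}^m \mathbb{E}_{u\sim\rho_i}[(h(u)-h^*(u))^2] \le 2\bigl[L(h)-L(\tilde{h})\bigr] + 2\sum_{i=1}^m \mathbb{E}_{u\sim\rho_i}[(\tilde{h}(u)-h^*(u))^2] + c\,R^2 \log(|\cH|/\zeta),
\end{equation*}
for an absolute constant $c$. Instantiating at $h=\hat{h}$, substituting the exponential-mechanism bound on $L(\hat{h})-L(\tilde{h})$, and bounding $\sum_i \mathbb{E}[(\tilde{h}-h^*)^2] \le m\alpha_{\mathsf{approx}}$ produces the claimed inequality up to constants.

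The main obstacle I anticipate is this concentration step. The data $(u_i,z_i)$ are adaptively generated, so we really must work on the martingale rather than appeal to i.i.d.\ symmetrization; and because $h^*$ is not assumed to lie in $\cH$, the usual ``offset'' localization trick has to be pivoted around $\tilde{h}\in \cH$ rather than $h^*$, which is precisely what forces the additive $m\alpha_{\mathsf{approx}}$ slack in the bound. Once this uniform Bernstein/Freedman inequality is in hand, everything else (privacy calibration and the exponential mechanism utility lemma) is essentially black-box, so the technical weight of the proof sits almost entirely in this step.
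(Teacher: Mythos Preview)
Your plan is correct and close to the paper's, which likewise combines a Freedman-type martingale bound with the exponential-mechanism utility guarantee. The difference is the pivot. You center the increments at $\tilde h$, reasoning that this is forced because $h^*\notin\cH$; the paper instead pivots directly at $h^*$, setting $U_i^h:=(h(u_i)-z_i)^2-(h^*(u_i)-z_i)^2$. Membership of $h^*$ in $\cH$ is irrelevant here: one still has $\mathbb{E}_{i-1}[U_i^h]=\mathbb{E}_{i-1}[(h(u_i)-h^*(u_i))^2]$ and $\mathbb{E}_{i-1}[(U_i^h)^2]\lesssim R^2\,\mathbb{E}_{i-1}[(h(u_i)-h^*(u_i))^2]$, so the Freedman variance term matches the left-hand side exactly and absorbs cleanly, without your triangle-inequality detour through $(h-\tilde h)^2\le 2(h-h^*)^2+2(\tilde h-h^*)^2$. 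The paper then applies Freedman in both directions (uniformly over $h\in\cH$), substitutes $h=\hat h$ in one inequality, uses the exponential-mechanism utility to pass from $\hat h$ to the empirical minimizer and then to $\tilde h$, and finally invokes the reverse inequality at $\tilde h$ to bring in $m\,\alpha_{\mathsf{approx}}$.

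One caution specific to your route: in the general sequential statement of the lemma, $\tilde h$ depends on the realized $\rho_1,\ldots,\rho_m$ and is therefore not $\cF_{i-1}$-measurable, so your $\tilde h$-centered increments are not adapted as written. You would need to union-bound over pairs $(h,h')\in\cH\times\cH$ and substitute $h'=\tilde h$ only at the end (a harmless doubling of the log factor). Pivoting at the deterministic $h^*$, as the paper does, sidesteps this entirely.
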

This lemma can be viewed as the private variant of Lemma 15 in~\cite{song2022hybrid}. To leverage this lemma for Algorithm~\ref{privLS:exp}, we observe the following mappings for each iteration $t$: $\cH$ maps to $\cW$, $u_i = (\pi_{\theta_t}, x_i, y_i)$, $h(u_i) = w^{\top}\nabla \log \pi_{\theta_t}(y_i|x_i)$, $z_i = \hat{A}_t(x_i, y_i)$ with $\ex{z_i} = A^{\pi_{\theta_t}}(x_i,y_i)$, which can be rewritten as an unknown function $w^* = h^*$ over $(\pi_{\theta_t}, x_i, y_i)$. Finally, in our case, $\rho_i$ is non-sequential and fixed during each update, i.e., $x_i \sim \rho, y_i \sim \mu(\cdot |x_i)$ and $\pi_{\theta_t}$ is fixed at $t$. Thus, the approximation error condition in~\eqref{eq:approx} translates to the following one:
\begin{align}
\label{eq:approx-app}
    \inf_{w\in \cW} \mathbb{E}_{x \sim \rho, y \sim \mu(\cdot|x)} \left[ \left(A^{\pi_{\theta_t}}(x,y) -  w^{\top}\nabla \log \pi_{\theta_t}(y|x) \right)^2 \right] \leq \alpha_{\mathsf{approx}}.
\end{align}

Based on these discussions, we have the following guarantee of \dpnpg with Algorithm~\ref{privLS:exp}. 
\begin{corollary}[General function class] \label{cor:npg}
    Consider \emph{\dpnpg} with \emph{\privls} as in Algorithm~\ref{privLS:exp}. Then, \emph{\dpnpg} satisfies $(\epsilon,0)$-DP.  
    Suppose for each $t \in [T]$, there exists an $\alpha_{\mathsf{approx}}$ such that~\eqref{eq:approx-app} holds. Then, under the same assumptions in Theorem~\ref{thm:npg}, we have
    \begin{equation*}
    J(\pi^*) - \frac{1}{T} \sum_{t=1}^{T} J(\pi_t) 
 \lesssim \sqrt{ \frac{\beta W^2 \log |\mathcal{Y}| }{T} }
+  {\sqrt{C_{\mu \to \pi^*} \alpha_{\mathsf{approx}}}} + \sqrt{C_{\mu \to \pi^*}\cdot \frac{(1+1/\epsilon) \log(|\cW|/\zeta)}{m}}.
\end{equation*}
This implies that, for a given suboptimality gap of $O(\alpha + \sqrt{C_{\mu \to \pi^*} \alpha_{\mathsf{approx}}})$, the sample complexity bound is $N = T \cdot m = \widetilde{O}\left( (\frac{1}{\alpha^4} + \frac{1}{\alpha^4\epsilon}) \cdot \log|\cW| \cdot \beta W^2 \right)$.
\end{corollary}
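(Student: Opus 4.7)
My plan has three pieces: (i) establish the privacy claim, (ii) compute the per-iteration estimation error produced by Algorithm~\ref{privLS:exp}, and (iii) plug the result into the master theorem and balance $T$ and $m$ to read off the sample complexity.

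\emph{Privacy.} At any fixed iteration, the score function $-L(w)$ has per-record sensitivity $O(R_{\mathsf{max}}^2)$ (each squared residual $[w^{\top}\nabla\log\pi_{\theta_t}(y_i|x_i)-\hat{A}_t(x_i,y_i)]^2$ is bounded given the boundedness of $w\in\cW$ and $\nabla\log\pi_{\theta_t}$ together with $|\hat{A}_t|\le 2R_{\mathsf{max}}$), so the coefficient $\epsilon/(8R_{\mathsf{max}}^2)$ used in Algorithm~\ref{privLS:exp} is exactly what the exponential mechanism~\cite{mcsherry2007mechanism} requires for $(\epsilon,0)$-DP per \texttt{PrivUpdate} call. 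Since Algorithm~\ref{alg:meta} consumes a fresh, disjoint batch of $m$ users at each round, Proposition~\ref{prop:dp}'s parallel-composition argument then lifts this to $(\epsilon,0)$-DP for \dpnpg overall.

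\emph{Per-iteration error.} I would next specialize Lemma~\ref{lem:LS-gen} via the mapping $\cH\leftrightarrow\cW$, $u_i = (\pi_{\theta_t}, x_i, y_i)$, $h(u_i) = w^{\top}\nabla\log\pi_{\theta_t}(y_i|x_i)$, $h^{\ast}(u_i) = A^{\pi_{\theta_t}}(x_i,y_i)$, $z_i = \hat{A}_t(x_i,y_i)$, and $R = 2R_{\mathsf{max}}$. At fixed $t$ the distributions $\rho_i$ are identical and non-adaptive (they depend only on $\rho$ and $\mu$), so the left-hand side of Lemma~\ref{lem:LS-gen} equals $m$ times the population squared error. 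Using \eqref{eq:approx-app} as the approximation bound and dividing by $m$ gives
\[
\mathrm{err}_t^2(m,\epsilon,\delta,\zeta) \;\lesssim\; \alpha_{\mathsf{approx}} + \frac{R_{\mathsf{max}}^2(1+1/\epsilon)\log(|\cW|/\zeta)}{m},
\]
which verifies Assumption~\ref{ass:privatels} with an explicit error function. A union bound over $t\in[T]$ (absorbed into the $\widetilde O(\cdot)$) makes this hold simultaneously.

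\emph{Regret and sample complexity.} Substituting this error bound into Theorem~\ref{thm:npg} and using $\sqrt{a+b}\le\sqrt a+\sqrt b$ splits the last term into $\sqrt{C_{\mu\to\pi^{\ast}}\alpha_{\mathsf{approx}}}$ and $\sqrt{C_{\mu\to\pi^{\ast}}(1+1/\epsilon)\log(|\cW|/\zeta)/m}$, producing the displayed suboptimality bound directly. For the final sample complexity, balance the two effective parameters: forcing $\sqrt{\beta W^2\log|\cY|/T}\le\alpha$ gives $T\gtrsim \beta W^2\log|\cY|/\alpha^2$, while $\sqrt{(1+1/\epsilon)\log|\cW|/m}\lesssim\alpha$ (after absorbing $C_{\mu\to\pi^{\ast}}$ and $R_{\mathsf{max}}$ into $\widetilde O(\cdot)$) gives $m\gtrsim (1+1/\epsilon)\log|\cW|/\alpha^2$. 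Multiplying yields $N = mT = \widetilde O\bigl((1+1/\epsilon)\log|\cW|\cdot \beta W^2/\alpha^4\bigr) = \widetilde O\bigl((\tfrac{1}{\alpha^4}+\tfrac{1}{\alpha^4\epsilon})\log|\cW|\cdot\beta W^2\bigr)$ as claimed.

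The only step requiring any real care is the sensitivity bookkeeping for the exponential mechanism (and checking that the hypothesis-boundedness condition in Lemma~\ref{lem:LS-gen} carries over to the regressor $w^{\top}\nabla\log\pi_{\theta_t}$); everything else is direct substitution into previously established results.
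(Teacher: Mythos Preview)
Your proposal is correct and follows essentially the same route as the paper: the paper does not give a standalone proof of the corollary but derives it exactly as you do, by (i) invoking the exponential mechanism's pure-DP guarantee together with Proposition~\ref{prop:dp}, (ii) instantiating Lemma~\ref{lem:LS-gen} via the same mapping $\cH\leftrightarrow\cW$, $u_i=(\pi_{\theta_t},x_i,y_i)$, $h(u_i)=w^{\top}\nabla\log\pi_{\theta_t}(y_i|x_i)$, $z_i=\hat{A}_t(x_i,y_i)$, and the approximation condition~\eqref{eq:approx-app}, and (iii) substituting the resulting $\mathrm{err}_t$ into Theorem~\ref{thm:npg} and balancing $T$ and $m$. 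Your explicit splitting via $\sqrt{a+b}\le\sqrt a+\sqrt b$ and the union bound over $t$ are the natural fill-ins the paper leaves implicit.
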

\begin{remark}
    Several remarks are in order. First, due to the exponential mechanism, we achieve pure DP (i.e., $\delta = 0$) rather than approximate DP; Second, our results hold for general reward and policy function classes. We state the result for a finite $\cW$ for ease of presentation. It can be easily extended for an infinite class using a standard covering argument. For instance, if $\cW = \Real^d$, then $\log(|\cW|)$ can be converted to $\widetilde{O}(d)$. Meanwhile, we remark that in general, Algorithm~\ref{privLS:exp} is not computationally efficient. Thus, the above result is mainly from the statistical perspective. Note that we also explore several efficient oracles for specific scenarios later.  
    Finally, we highlight that the approximation error in~\eqref{eq:approx-app} is different from the transfer error in~\eqref{eq:transfer}, which directly takes expectation over the target optimal policy. In contrast, when we use $\alpha_{\mathsf{approx}}$, we have to account for the transition from $\mu$ to $\pi^*$ explicitly via $C_{\mu \to \pi^*}$. This is the case even in the non-private case (cf. Corollary 21 in~\cite{agarwal2021theory}). 
\end{remark}

\textbf{Log-linear policy class with realizability.} We now turn to computationally efficient \privls oracles by considering a concrete case where the policy is log-linear and reward $r$ is realizable. More specifically, the policy $\pi_{\theta}$ is given by some $\theta \in \Real^d$ and feature vector $\phi_{x,y} \in \Real^d$ with $\pi_{\theta}(y \mid x) = \frac{\exp(\theta^{\top} \phi_{x,y})}{\sum_{y' \in \mathcal{Y}} \exp(\theta^{\top} \phi_{x,y'})}$ and $\norm{\phi_{x,y}} \le B$.

This leads to the fact that $\nabla_{\theta} \log \pi_{\theta}(y|x) = \phi_{x,y} -\mathbb{E}_{y'\sim \pi_{\theta}(\cdot|x)}[\phi_{x,y'}]$ and smoothness parameter of $B^2$. Further, we assume the reward $r(x,y) = \inner{w^*}{\phi_{x,y}}$  is a linear function with respect to $\phi_{x,y}$, i.e., compatible realizability. In this case, Assumption~\ref{ass:privatels} reduces to estimation error (or in-distribution generalization error) of linear regression:
\begin{align}
\label{eq:sco}
\mathbb{E}_{x\sim \rho, y\sim \mu(\cdot|x)} \left[\left( \inner{w_t - w^*}{\bar{\phi}_{x,y}^t}  \right)^2\right] \leq \mathrm{err}_t^2(m, \epsilon, \delta, \zeta),
\end{align}
where $\bar{\phi}_{x,y}^t:= \phi_{x,y} -\mathbb{E}_{y'\sim \pi_{\theta_t}(\cdot|x)}[\phi_{x,y'}]$, which depends on the current policy.

The above particular form allows us to leverage recent advances in private linear regression, both in the low-dimension and high-dimension cases, respectively.  

\begin{corollary}[Log-linear policy in low-dimension] \label{cor:low-log-policy}
    Consider \emph{\dpnpg} with the above log-linear class (with smoothness parameter $\beta = B^2$). Suppose \emph{\privls} is instantiated with the \emph{\texttt{ISSP}} algorithm in~\cite{brown2024insufficient}. Then, by \cite[Theorem 5]{brown2024insufficient}, we have that $\mathrm{err}_t(m, \epsilon, \delta, \zeta) \leq \alpha$,   when $m \geq \widetilde{O}\left(\frac{d}{\alpha^2} + \frac{d\sqrt{\log(1/\delta)}}{\alpha \epsilon} + \frac{d(\log(1/\delta))^2}{\epsilon^2}\right)$. Thus, by Theorem~\ref{thm:npg}, for a suboptimality gap of $O(\alpha)$, the sample complexity bound is $N = T \cdot m = \widetilde{O}_{\delta}\left( (\frac{d}{\alpha^4} + \frac{d}{\alpha^3\epsilon} + \frac{d}{\alpha^2 \epsilon^2}) \cdot B^2 W^2 \right)$.
\end{corollary}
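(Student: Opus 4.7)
The plan is to combine the master theorem (Theorem~\ref{thm:npg}) with the in-distribution generalization guarantee of the \texttt{ISSP} private linear-regression algorithm, after first checking that the per-iteration subproblem~\eqref{eq:LS} cleanly reduces to bounded linear regression. The flow is: (i) verify the log-linear class satisfies Assumption~\ref{ass:reg} with $\beta = B^2$; (ii) rewrite~\eqref{eq:LS} as linear regression with a realizable target; (iii) instantiate \texttt{ISSP} per iteration and union-bound; (iv) optimize $T$ and $m$ via Theorem~\ref{thm:npg}.

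First I would check the regression reduction. Since $\log \pi_\theta(y|x) = \theta^\top \phi_{x,y} - \log \sum_{y'} \exp(\theta^\top \phi_{x,y'})$, we get $\nabla_\theta \log \pi_\theta(y|x) = \bar{\phi}_{x,y}^\theta := \phi_{x,y} - \mathbb{E}_{y'\sim\pi_\theta(\cdot|x)}[\phi_{x,y'}]$ and $\nabla^2_\theta \log \pi_\theta(y|x) = -\mathrm{Cov}_{y'\sim\pi_\theta(\cdot|x)}[\phi_{x,y'}]$, whose spectral norm is bounded by $\sup_{x,y}\|\phi_{x,y}\|^2 \le B^2$. So Assumption~\ref{ass:reg} holds with $\beta = B^2$, and $\|\bar{\phi}_{x,y}^\theta\| \le 2B$. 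Under linear realizability $r(x,y) = \langle w^*, \phi_{x,y}\rangle$, the advantage becomes $A^{\pi_{\theta_t}}(x,y) = \langle w^*, \bar{\phi}_{x,y}^t\rangle$, and the unbiased estimate $\hat{A}_t(x_i,y_i) = r(x_i,y_i) - r(x_i,y_i')$ is bounded by $2R_{\mathsf{max}}$ in magnitude. Thus~\eqref{eq:LS} is a bounded realizable linear regression in parameter $w^*$ with features $\bar{\phi}_{x,y}^t$ drawn from $x\sim\rho$, $y\sim\mu(\cdot|x)$.

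Next I would invoke \cite[Theorem 5]{brown2024insufficient} on this subproblem at each round. It returns an $(\epsilon,\delta)$-DP estimator $w_t$ satisfying, with probability $\ge 1-\zeta/T$, the Mahalanobis bound $\mathbb{E}_{x\sim\rho, y\sim\mu(\cdot|x)}[(\langle w_t - w^*, \bar{\phi}_{x,y}^t\rangle)^2] \le \alpha^2$ whenever $m \ge \widetilde{O}\bigl(d/\alpha^2 + d\sqrt{\log(1/\delta)}/(\alpha\epsilon) + d(\log(1/\delta))^2/\epsilon^2\bigr)$. A union bound over $t \in [T]$ then gives $\mathrm{err}_t \le \alpha$ for all $t$ simultaneously with probability $1-\zeta$, verifying Assumption~\ref{ass:privatels}. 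Privacy of \dpnpg follows directly from Proposition~\ref{prop:dp}, since the batches $D_t$ are disjoint (fresh users per round) and \texttt{ISSP} is $(\epsilon,\delta)$-DP.

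Finally, plugging $\mathrm{err}_t \le \alpha$ into Theorem~\ref{thm:npg} yields
\begin{equation*}
J(\pi^*) - \frac{1}{T}\sum_{t=1}^T J(\pi_t) \le \sqrt{\frac{B^2 W^2 \log|\mathcal{Y}|}{2T}} + \sqrt{C_{\mu\to\pi^*}}\, \alpha.
\end{equation*}
Choosing $T = \widetilde{O}(B^2 W^2/\alpha^2)$ balances the first term to $O(\alpha)$, and treating the coverage $C_{\mu\to\pi^*}$ as a constant, the overall suboptimality is $O(\alpha)$. Multiplying the choices of $T$ and $m$ gives $N = Tm = \widetilde{O}_\delta\bigl(B^2 W^2 \cdot (d/\alpha^4 + d/(\alpha^3\epsilon) + d/(\alpha^2\epsilon^2))\bigr)$. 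The main obstacle is a sanity check rather than a deep technical step: one has to confirm that \texttt{ISSP}'s preconditions (bounded features/labels, the well-definedness of the in-distribution error bound under the time-varying feature distribution induced by $\pi_{\theta_t}$, and $w^* \in \cW$) are satisfied uniformly over $t$. Once that is in place, the remainder is bookkeeping.
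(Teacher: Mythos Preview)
Your proposal is correct and follows exactly the approach the paper takes: the corollary has no separate proof in the paper and is stated as an immediate consequence of plugging the \texttt{ISSP} estimation guarantee from \cite[Theorem 5]{brown2024insufficient} into the master Theorem~\ref{thm:npg}, with $T = \widetilde{O}(B^2W^2/\alpha^2)$ to balance the first term. Your extra care in verifying $\beta = B^2$ via the covariance Hessian and in spelling out the realizable linear-regression reduction~\eqref{eq:sco} is consistent with the paper's discussion preceding the corollary.
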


\begin{corollary}[Log-linear policy in high-dimension] \label{cor:high-log-policy}
    Consider \emph{\dpnpg} with the above log-linear class (with smoothness parameter $\beta = B^2$). Suppose \emph{\privls} is instantiated with Algorithm 5 in~\cite{chen2025near}. Then, by~\cite[Theorem 6.2]{chen2025near}, we have that $\mathrm{err}_t(m, \epsilon, \delta, \zeta) \leq \alpha$ when  $m \geq \widetilde{O}\left(\frac{\log(1/\zeta)}{\alpha^4} + \frac{\sqrt{\log(1/\zeta) \log(1/\delta)}}{\alpha^3 \epsilon}\right)$. Thus, by Theorem~\ref{thm:npg}, for a suboptimality gap of $O(\alpha)$, the sample complexity bound is $N = T \cdot m = \widetilde{O}_{\delta}\left( (\frac{1}{\alpha^6} + \frac{1}{\alpha^5\epsilon}) \cdot B^2 W^2 \right)$.
\end{corollary}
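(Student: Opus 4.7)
The plan mirrors the argument for Corollary~\ref{cor:low-log-policy}: instantiate the per-iteration estimation error required by Assumption~\ref{ass:privatels} using the high-dimensional private linear regression guarantee of~\cite{chen2025near}, then plug the resulting $\mathrm{err}_t$ into the master bound of Theorem~\ref{thm:npg} and balance $T$ against $m$. Under the log-linear policy with linear reward realizability, $\nabla_\theta \log \pi_{\theta_t}(y|x) = \bar\phi_{x,y}^t := \phi_{x,y} - \mathbb{E}_{y'\sim\pi_{\theta_t}(\cdot|x)}[\phi_{x,y'}]$ and $A^{\pi_{\theta_t}}(x,y) = \langle w^*, \bar\phi_{x,y}^t\rangle$ holds exactly, so the regression target in~\eqref{eq:LS} is linear in $\bar\phi_{x,y}^t$ and the label $\hat A_t(x_i,y_i)$ is a bounded, conditionally zero-mean estimate of $\langle w^*, \bar\phi_{x_i,y_i}^t\rangle$. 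Assumption~\ref{ass:privatels} therefore collapses exactly to the in-distribution generalization error~\eqref{eq:sco} of private linear regression.

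Next I would apply Theorem 6.2 of~\cite{chen2025near} to the output of their Algorithm 5 run on the fresh batch $D_t$. A short verification is needed that the preconditions of that theorem (bounded features, bounded $\ell_2$ norm of the target regressor, and bounded conditionally zero-mean label noise) are inherited from $\|\phi_{x,y}\|\le B$, hence $\|\bar\phi_{x,y}^t\|\le 2B$, together with $\|w_t\|_2 \le W$ from Assumption~\ref{ass:reg}, $|\hat A_t|\le 2R_{\mathsf{max}}$, and $\mathbb{E}[\hat A_t(x,y)\mid x,y] = A^{\pi_{\theta_t}}(x,y)$. Their theorem then yields $\mathrm{err}_t(m,\epsilon,\delta,\zeta)\le \alpha$ whenever $m \ge \widetilde{O}(\log(1/\zeta)/\alpha^4 + \sqrt{\log(1/\zeta)\log(1/\delta)}/(\alpha^3\epsilon))$.

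Substituting this uniform per-iteration error into Theorem~\ref{thm:npg} with $\beta = B^2$ (the log-linear smoothness constant) gives
\[
J(\pi^*) - \frac{1}{T}\sum_{t=1}^T J(\pi_t) \lesssim \sqrt{\frac{B^2 W^2 \log|\mathcal{Y}|}{T}} + \sqrt{C_{\mu\to\pi^*}}\,\alpha.
\]
Choosing $T = \widetilde{O}(B^2 W^2/\alpha^2)$ drives the optimization term below $\alpha$ while $C_{\mu\to\pi^*}$ is absorbed as a constant. The total sample budget is then $N = mT = \widetilde{O}_\delta\bigl((1/\alpha^6 + 1/(\alpha^5\epsilon))\cdot B^2 W^2\bigr)$, matching the claim. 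The $(\epsilon,\delta)$-DP guarantee is immediate from Proposition~\ref{prop:dp} combined with the DP of Algorithm 5 applied to each fresh batch.

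The main obstacle is the clean interface between the two results: the regression bound in~\cite{chen2025near} is stated for a single i.i.d.\ linear regression instance, whereas in our setting both the feature distribution (through $\bar\phi_{x,y}^t$) and the conditional label noise depend on the current iterate $\theta_t$, and this noise is bounded and conditionally zero-mean rather than strictly sub-Gaussian. The cure is the one-pass structure of Algorithm~\ref{alg:meta}: since $D_t$ is fresh and $\theta_t$ is determined by past batches, the $t$-th regression is conditionally i.i.d.\ given $\theta_t$, so we apply Theorem 6.2 of~\cite{chen2025near} conditionally on $\theta_t$ and take a union bound over $t\in[T]$, preserving both the estimation rate and the $(\epsilon,\delta)$-DP guarantee via parallel composition.
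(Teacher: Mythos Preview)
Your proposal is correct and follows precisely the approach the paper intends: the corollary is stated as an immediate consequence of plugging the per-iteration guarantee of \cite[Theorem~6.2]{chen2025near} into the master bound of Theorem~\ref{thm:npg}, and the paper gives no additional proof beyond that. Your added care about the conditional i.i.d.\ structure of each batch and the union bound over $t\in[T]$ is a welcome elaboration the paper leaves implicit.
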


One key subtlety behind these two corollaries is that $W$ can be large and depend on $d$. This is due to the fact that the update $w_t$ in both \privls oracles is privatized by a Gaussian noise in the last step. Thus, by standard concentration of a Gaussian vector, $W$ can be on the order of $\sqrt{d}$. This subtlety is somewhat unique due to the interplay between \privls oracle and ``regret-lemma'' type analysis in Theorem~\ref{thm:npg}. In practice, one can properly truncate $w_t$, which preserves privacy. In theory, we conjecture that one may use a different technique (e.g., based on the three-point lemma in~\cite{yuan2022linear}) to avoid the requirement of bounded $w_t$, which is left to be an exciting future work.

\textbf{Connection to private stochastic optimization.} In the context of the above discussion and corollaries, one in-between solution is to aim for suffering dimension dependence only in the private term. This leads us to consider private SGD over a bounded domain, since the estimation error in~\eqref{eq:sco} is equivalent to excess population risk under realizability. Thus, one can leverage existing results on private stochastic optimization to bound $\mathrm{err}_t(m, \epsilon, \delta, \zeta)$, which does not have dimension dependence in the non-private term, but at a cost of a slower rate $O(1/\sqrt{m})$ vs. $O(1/m)$. See Appendix~\ref{app:SO}.

\vspace{-2mm}
\section{Conclusion}
We initiate a systematic theoretical investigation into the sample complexity of differentially private PO, leveraging a unified meta-algorithmic framework and reductions to private regression problems. We establish the first set of sample complexity results for several widely used PO algorithms under differential privacy constraints, including \dppg, \dpnpg and \dprebel. Our analysis not only quantifies the privacy cost in PO but also uncovers subtle and important interplays between privacy mechanisms and algorithmic structures in PO. These insights offer practical guidance for designing privacy-preserving PO methods. We hope our work will open new avenues for future research in both the theoretical understanding and empirical development of private policy optimization. Besides, although we mainly focus on the theory in the main body, we have also managed to conduct some experiments as proof of concept, see Appendix~\ref{app:experiments} for details.

\section*{Acknowledgements}
This work was supported in part by the National Science Foundation (NSF) under Grant Nos. CAREER-2441519, CNS-2312835, and CNS-2153220.

\bibliographystyle{unsrtnat}
\bibliography{references}

\section*{NeurIPS Paper Checklist}

\begin{enumerate}

\item {\bf Claims}
    \item[] Question: Do the main claims made in the abstract and introduction accurately reflect the paper's contributions and scope?
    \item[] Answer: \answerYes{} 
    \item[] Justification: : In the abstract and introduction, we clearly state the main contributions of our work. We present a meta algorithm for Private PO and give out three private algorithm: \dppg, \dpnpg, \dprebel and analyze their sample complexity.
    \item[] Guidelines:
    \begin{itemize}
        \item The answer NA means that the abstract and introduction do not include the claims made in the paper.
        \item The abstract and/or introduction should clearly state the claims made, including the contributions made in the paper and important assumptions and limitations. A No or NA answer to this question will not be perceived well by the reviewers. 
        \item The claims made should match theoretical and experimental results, and reflect how much the results can be expected to generalize to other settings. 
        \item It is fine to include aspirational goals as motivation as long as it is clear that these goals are not attained by the paper. 
    \end{itemize}

\item {\bf Limitations}
    \item[] Question: Does the paper discuss the limitations of the work performed by the authors?
    \item[] Answer: \answerYes{} 
    \item[] Justification: The limitations of this work can be found in appendix.
    \item[] Guidelines:
    \begin{itemize}
        \item The answer NA means that the paper has no limitation while the answer No means that the paper has limitations, but those are not discussed in the paper. 
        \item The authors are encouraged to create a separate "Limitations" section in their paper.
        \item The paper should point out any strong assumptions and how robust the results are to violations of these assumptions (e.g., independence assumptions, noiseless settings, model well-specification, asymptotic approximations only holding locally). The authors should reflect on how these assumptions might be violated in practice and what the implications would be.
        \item The authors should reflect on the scope of the claims made, e.g., if the approach was only tested on a few datasets or with a few runs. In general, empirical results often depend on implicit assumptions, which should be articulated.
        \item The authors should reflect on the factors that influence the performance of the approach. For example, a facial recognition algorithm may perform poorly when image resolution is low or images are taken in low lighting. Or a speech-to-text system might not be used reliably to provide closed captions for online lectures because it fails to handle technical jargon.
        \item The authors should discuss the computational efficiency of the proposed algorithms and how they scale with dataset size.
        \item If applicable, the authors should discuss possible limitations of their approach to address problems of privacy and fairness.
        \item While the authors might fear that complete honesty about limitations might be used by reviewers as grounds for rejection, a worse outcome might be that reviewers discover limitations that aren't acknowledged in the paper. The authors should use their best judgment and recognize that individual actions in favor of transparency play an important role in developing norms that preserve the integrity of the community. Reviewers will be specifically instructed to not penalize honesty concerning limitations.
    \end{itemize}

\item {\bf Theory assumptions and proofs}
    \item[] Question: For each theoretical result, does the paper provide the full set of assumptions and a complete (and correct) proof?
    \item[] Answer: \answerYes{} 
    \item[] Justification: In this paper, all theoretical results are accompanied by a full set of assumptions and complete proofs. These are clearly stated and numbered within the main text and are cross-referenced appropriately. Detailed proofs for major theorems are provided in the Appendix.
    \item[] Guidelines:
    \begin{itemize}
        \item The answer NA means that the paper does not include theoretical results. 
        \item All the theorems, formulas, and proofs in the paper should be numbered and cross-referenced.
        \item All assumptions should be clearly stated or referenced in the statement of any theorems.
        \item The proofs can either appear in the main paper or the supplemental material, but if they appear in the supplemental material, the authors are encouraged to provide a short proof sketch to provide intuition. 
        \item Inversely, any informal proof provided in the core of the paper should be complemented by formal proofs provided in appendix or supplemental material.
        \item Theorems and Lemmas that the proof relies upon should be properly referenced. 
    \end{itemize}

    \item {\bf Experimental result reproducibility}
    \item[] Question: Does the paper fully disclose all the information needed to reproduce the main experimental results of the paper to the extent that it affects the main claims and/or conclusions of the paper (regardless of whether the code and data are provided or not)?
    \item[] Answer: \answerNA{} 
    \item[] Justification: The nature of this paper is purely theoretical research, and its conclusions are derived through mathematical deduction and logical argumentation; therefore, it does not involve experimental validation.
    \item[] Guidelines:
    \begin{itemize}
        \item The answer NA means that the paper does not include experiments.
        \item If the paper includes experiments, a No answer to this question will not be perceived well by the reviewers: Making the paper reproducible is important, regardless of whether the code and data are provided or not.
        \item If the contribution is a dataset and/or model, the authors should describe the steps taken to make their results reproducible or verifiable. 
        \item Depending on the contribution, reproducibility can be accomplished in various ways. For example, if the contribution is a novel architecture, describing the architecture fully might suffice, or if the contribution is a specific model and empirical evaluation, it may be necessary to either make it possible for others to replicate the model with the same dataset, or provide access to the model. In general. releasing code and data is often one good way to accomplish this, but reproducibility can also be provided via detailed instructions for how to replicate the results, access to a hosted model (e.g., in the case of a large language model), releasing of a model checkpoint, or other means that are appropriate to the research performed.
        \item While NeurIPS does not require releasing code, the conference does require all submissions to provide some reasonable avenue for reproducibility, which may depend on the nature of the contribution. For example
        \begin{enumerate}
            \item If the contribution is primarily a new algorithm, the paper should make it clear how to reproduce that algorithm.
            \item If the contribution is primarily a new model architecture, the paper should describe the architecture clearly and fully.
            \item If the contribution is a new model (e.g., a large language model), then there should either be a way to access this model for reproducing the results or a way to reproduce the model (e.g., with an open-source dataset or instructions for how to construct the dataset).
            \item We recognize that reproducibility may be tricky in some cases, in which case authors are welcome to describe the particular way they provide for reproducibility. In the case of closed-source models, it may be that access to the model is limited in some way (e.g., to registered users), but it should be possible for other researchers to have some path to reproducing or verifying the results.
        \end{enumerate}
    \end{itemize}

\item {\bf Open access to data and code}
    \item[] Question: Does the paper provide open access to the data and code, with sufficient instructions to faithfully reproduce the main experimental results, as described in supplemental material?
    \item[] Answer: \answerNA{} 
    \item[] Justification: Our conclusions are derived through mathematical deduction and logical argumentation, so it does not include experiments requiring code.
    \item[] Guidelines:
    \begin{itemize}
        \item The answer NA means that paper does not include experiments requiring code.
        \item Please see the NeurIPS code and data submission guidelines (\url{https://nips.cc/public/guides/CodeSubmissionPolicy}) for more details.
        \item While we encourage the release of code and data, we understand that this might not be possible, so “No” is an acceptable answer. Papers cannot be rejected simply for not including code, unless this is central to the contribution (e.g., for a new open-source benchmark).
        \item The instructions should contain the exact command and environment needed to run to reproduce the results. See the NeurIPS code and data submission guidelines (\url{https://nips.cc/public/guides/CodeSubmissionPolicy}) for more details.
        \item The authors should provide instructions on data access and preparation, including how to access the raw data, preprocessed data, intermediate data, and generated data, etc.
        \item The authors should provide scripts to reproduce all experimental results for the new proposed method and baselines. If only a subset of experiments are reproducible, they should state which ones are omitted from the script and why.
        \item At submission time, to preserve anonymity, the authors should release anonymized versions (if applicable).
        \item Providing as much information as possible in supplemental material (appended to the paper) is recommended, but including URLs to data and code is permitted.
    \end{itemize}

\item {\bf Experimental setting/details}
    \item[] Question: Does the paper specify all the training and test details (e.g., data splits, hyperparameters, how they were chosen, type of optimizer, etc.) necessary to understand the results?
    \item[] Answer: \answerNA{} 
    \item[] Justification: This paper does not include experiments.
    \item[] Guidelines:
    \begin{itemize}
        \item The answer NA means that the paper does not include experiments.
        \item The experimental setting should be presented in the core of the paper to a level of detail that is necessary to appreciate the results and make sense of them.
        \item The full details can be provided either with the code, in appendix, or as supplemental material.
    \end{itemize}

\item {\bf Experiment statistical significance}
    \item[] Question: Does the paper report error bars suitably and correctly defined or other appropriate information about the statistical significance of the experiments?
    \item[] Answer: \answerNA{} 
    \item[] Justification: This paper does not include experiments.
    \item[] Guidelines:
    \begin{itemize}
        \item The answer NA means that the paper does not include experiments.
        \item The authors should answer "Yes" if the results are accompanied by error bars, confidence intervals, or statistical significance tests, at least for the experiments that support the main claims of the paper.
        \item The factors of variability that the error bars are capturing should be clearly stated (for example, train/test split, initialization, random drawing of some parameter, or overall run with given experimental conditions).
        \item The method for calculating the error bars should be explained (closed form formula, call to a library function, bootstrap, etc.)
        \item The assumptions made should be given (e.g., Normally distributed errors).
        \item It should be clear whether the error bar is the standard deviation or the standard error of the mean.
        \item It is OK to report 1-sigma error bars, but one should state it. The authors should preferably report a 2-sigma error bar than state that they have a 96\% CI, if the hypothesis of Normality of errors is not verified.
        \item For asymmetric distributions, the authors should be careful not to show in tables or figures symmetric error bars that would yield results that are out of range (e.g. negative error rates).
        \item If error bars are reported in tables or plots, The authors should explain in the text how they were calculated and reference the corresponding figures or tables in the text.
    \end{itemize}

\item {\bf Experiments compute resources}
    \item[] Question: For each experiment, does the paper provide sufficient information on the computer resources (type of compute workers, memory, time of execution) needed to reproduce the experiments?
    \item[] Answer: \answerNA{} 
    \item[] Justification: This paper does not include experiments.
    \item[] Guidelines:
    \begin{itemize}
        \item The answer NA means that the paper does not include experiments.
        \item The paper should indicate the type of compute workers CPU or GPU, internal cluster, or cloud provider, including relevant memory and storage.
        \item The paper should provide the amount of compute required for each of the individual experimental runs as well as estimate the total compute. 
        \item The paper should disclose whether the full research project required more compute than the experiments reported in the paper (e.g., preliminary or failed experiments that didn't make it into the paper). 
    \end{itemize}
    
\item {\bf Code of ethics}
    \item[] Question: Does the research conducted in the paper conform, in every respect, with the NeurIPS Code of Ethics \url{https://neurips.cc/public/EthicsGuidelines}?
    \item[] Answer: \answerYes{} 
    \item[] Justification: Our research adheres to all guidelines outlined in the NeurIPS Code of Ethics.
    \item[] Guidelines:
    \begin{itemize}
        \item The answer NA means that the authors have not reviewed the NeurIPS Code of Ethics.
        \item If the authors answer No, they should explain the special circumstances that require a deviation from the Code of Ethics.
        \item The authors should make sure to preserve anonymity (e.g., if there is a special consideration due to laws or regulations in their jurisdiction).
    \end{itemize}

\item {\bf Broader impacts}
    \item[] Question: Does the paper discuss both potential positive societal impacts and negative societal impacts of the work performed?
    \item[] Answer: \answerNA{} 
    \item[] Justification: There is no societal impact of the work performed.
    \item[] Guidelines:
    \begin{itemize}
        \item The answer NA means that there is no societal impact of the work performed.
        \item If the authors answer NA or No, they should explain why their work has no societal impact or why the paper does not address societal impact.
        \item Examples of negative societal impacts include potential malicious or unintended uses (e.g., disinformation, generating fake profiles, surveillance), fairness considerations (e.g., deployment of technologies that could make decisions that unfairly impact specific groups), privacy considerations, and security considerations.
        \item The conference expects that many papers will be foundational research and not tied to particular applications, let alone deployments. However, if there is a direct path to any negative applications, the authors should point it out. For example, it is legitimate to point out that an improvement in the quality of generative models could be used to generate deepfakes for disinformation. On the other hand, it is not needed to point out that a generic algorithm for optimizing neural networks could enable people to train models that generate Deepfakes faster.
        \item The authors should consider possible harms that could arise when the technology is being used as intended and functioning correctly, harms that could arise when the technology is being used as intended but gives incorrect results, and harms following from (intentional or unintentional) misuse of the technology.
        \item If there are negative societal impacts, the authors could also discuss possible mitigation strategies (e.g., gated release of models, providing defenses in addition to attacks, mechanisms for monitoring misuse, mechanisms to monitor how a system learns from feedback over time, improving the efficiency and accessibility of ML).
    \end{itemize}
    
\item {\bf Safeguards}
    \item[] Question: Does the paper describe safeguards that have been put in place for responsible release of data or models that have a high risk for misuse (e.g., pretrained language models, image generators, or scraped datasets)?
    \item[] Answer: \answerNA{} 
    \item[] Justification:  This paper poses no such risks.
    \item[] Guidelines:
    \begin{itemize}
        \item The answer NA means that the paper poses no such risks.
        \item Released models that have a high risk for misuse or dual-use should be released with necessary safeguards to allow for controlled use of the model, for example by requiring that users adhere to usage guidelines or restrictions to access the model or implementing safety filters. 
        \item Datasets that have been scraped from the Internet could pose safety risks. The authors should describe how they avoided releasing unsafe images.
        \item We recognize that providing effective safeguards is challenging, and many papers do not require this, but we encourage authors to take this into account and make a best faith effort.
    \end{itemize}

\item {\bf Licenses for existing assets}
    \item[] Question: Are the creators or original owners of assets (e.g., code, data, models), used in the paper, properly credited and are the license and terms of use explicitly mentioned and properly respected?
    \item[] Answer: \answerNA{} 
    \item[] Justification: This paper does not use existing assets.
    \item[] Guidelines:
    \begin{itemize}
        \item The answer NA means that the paper does not use existing assets.
        \item The authors should cite the original paper that produced the code package or dataset.
        \item The authors should state which version of the asset is used and, if possible, include a URL.
        \item The name of the license (e.g., CC-BY 4.0) should be included for each asset.
        \item For scraped data from a particular source (e.g., website), the copyright and terms of service of that source should be provided.
        \item If assets are released, the license, copyright information, and terms of use in the package should be provided. For popular datasets, \url{paperswithcode.com/datasets} has curated licenses for some datasets. Their licensing guide can help determine the license of a dataset.
        \item For existing datasets that are re-packaged, both the original license and the license of the derived asset (if it has changed) should be provided.
        \item If this information is not available online, the authors are encouraged to reach out to the asset's creators.
    \end{itemize}

\item {\bf New assets}
    \item[] Question: Are new assets introduced in the paper well documented and is the documentation provided alongside the assets?
    \item[] Answer: \answerNA{} 
    \item[] Justification: This paper does not release new assets.
    \item[] Guidelines:
    \begin{itemize}
        \item The answer NA means that the paper does not release new assets.
        \item Researchers should communicate the details of the dataset/code/model as part of their submissions via structured templates. This includes details about training, license, limitations, etc. 
        \item The paper should discuss whether and how consent was obtained from people whose asset is used.
        \item At submission time, remember to anonymize your assets (if applicable). You can either create an anonymized URL or include an anonymized zip file.
    \end{itemize}

\item {\bf Crowdsourcing and research with human subjects}
    \item[] Question: For crowdsourcing experiments and research with human subjects, does the paper include the full text of instructions given to participants and screenshots, if applicable, as well as details about compensation (if any)? 
    \item[] Answer: \answerNA{} 
    \item[] Justification: This research does not involve any form of crowdsourcing or studies with human subjects.
    \item[] Guidelines:
    \begin{itemize}
        \item The answer NA means that the paper does not involve crowdsourcing nor research with human subjects.
        \item Including this information in the supplemental material is fine, but if the main contribution of the paper involves human subjects, then as much detail as possible should be included in the main paper. 
        \item According to the NeurIPS Code of Ethics, workers involved in data collection, curation, or other labor should be paid at least the minimum wage in the country of the data collector. 
    \end{itemize}

\item {\bf Institutional review board (IRB) approvals or equivalent for research with human subjects}
    \item[] Question: Does the paper describe potential risks incurred by study participants, whether such risks were disclosed to the subjects, and whether Institutional Review Board (IRB) approvals (or an equivalent approval/review based on the requirements of your country or institution) were obtained?
    \item[] Answer: \answerNA{} 
    \item[] Justification: The study does not include human subjects, and therefore IRB approval is not required.
    \item[] Guidelines:
    \begin{itemize}
        \item The answer NA means that the paper does not involve crowdsourcing nor research with human subjects.
        \item Depending on the country in which research is conducted, IRB approval (or equivalent) may be required for any human subjects research. If you obtained IRB approval, you should clearly state this in the paper. 
        \item We recognize that the procedures for this may vary significantly between institutions and locations, and we expect authors to adhere to the NeurIPS Code of Ethics and the guidelines for their institution. 
        \item For initial submissions, do not include any information that would break anonymity (if applicable), such as the institution conducting the review.
    \end{itemize}

\item {\bf Declaration of LLM usage}
    \item[] Question: Does the paper describe the usage of LLMs if it is an important, original, or non-standard component of the core methods in this research? Note that if the LLM is used only for writing, editing, or formatting purposes and does not impact the core methodology, scientific rigorousness, or originality of the research, declaration is not required.
    \item[] Answer: \answerNA{} 
    \item[] Justification: This research does not use large language models as an important component of the core methodology, it is used only for editing.
    \item[] Guidelines:
    \begin{itemize}
        \item The answer NA means that the core method development in this research does not involve LLMs as any important, original, or non-standard components.
        \item Please refer to our LLM policy (\url{https://neurips.cc/Conferences/2025/LLM}) for what should or should not be described.
    \end{itemize}

\end{enumerate}

\clearpage
\appendix
\listofappendices
\section{Additional Related Work}
\label{app:related}

\textbf{Policy optimization.} The theoretical study of policy optimization can be roughly divided into two lines of work. The first line often assumes a certain reachability (coverage) condition and takes a perspective from optimization. Under this assumption, various types of policy gradient methods have been investigated, including REINFORCE~\cite{williams1992simple}, variance-reduction variants~\cite{papini2018stochastic}, and preconditioned variants such as NPG~\cite{kakade2001natural}, TRPO~\cite{schulman2015trust}, and PPO~\cite{schulman2017proximal}. The performance metrics are often convergence rate or sample complexity, see some typical results in~\cite{agarwal2021theory,bhandari2024global,yuan2022general,liu1906neural,liu2020improved}. The second line of work focuses on the exploration setting, i.e., without the coverage condition. To this end, the algorithm design is often based on optimism, e.g., an optimistic version of NPG or PPO via bonus terms or global optimism. Several recent papers have made progress in this direction for tabular RL~\cite{shani2020optimistic}, linear mixture MDP~\cite{cai2020provably}, linear MDP and more~\cite{agarwal2020pc,zanette2021cautiously,liu2023optimistic}. Our paper can be viewed as the first work that aims to privatize the first line of work above.

\textbf{Differentially private RL and bandits.} Recently, a line of work studies RL (bandits) under the constraint of differential privacy, e.g., multi-armed bandits (MABs)~\cite{mishra2015nearly,sajed2019optimal,chowdhury2022distributed,ren2020multi,wu2023private}, contextual bandits~\cite{shariff2018differentially,chen2025near,he2022reduction,chowdhury2022shuffle,zhou2023differentially}, and RL~\cite{vietri2020private,chowdhury2022differentially,qiao2023near,zhou2022differentially,luyo2021differentially}. To the best of our knowledge, only~\cite{chowdhury2022differentially} and~\cite{zhou2022differentially} studied private RL under policy optimization. They consider the exploration setting and characterize the cost of privacy in regret bounds by privatizing optimistic versions of PPO (NPG) under tabular or linear function approximations, respectively. In contrast, we take the optimization perspective (with the coverage condition) and study private PO for general reward/policy function approximations.

\textbf{Differentially private stochastic optimization.} Extensive research has been done around private stochastic convex and non-convex optimization. In particular, for the problem of differentially private stochastic convex optimization (DP-SCO),~\cite{bassily2019private} gave the first optimal algorithm in terms of excess population loss, and~\cite{feldman2020private} developed the first linear-time efficient and optimal algorithm. There are also many follow-up papers under various settings, e.g.,~\cite{asi2021private,bassily2021non, kulkarni2021private,lowy2023private, gao2024private}. Moving to the non-convex case, the performance metrics include first-order or second-order population stationary points (e.g.,~\cite{wang2019differentially,zhou2020private,arora2023faster, liu2023private, liu2024adaptive}) as well as excess population loss (e.g.,~\cite{liu2023private}). As already mentioned, these results in private stochastic optimization can be useful to us since the estimation error in our master theorem is equivalent to excess population loss/risk under square loss and realizability, see Appendix~\ref{app:SO} for more details. 

\textbf{Differentially private linear regression.} For the specific problem of private linear regression (which is a special case of DP-SCO), one can possibly achieve better results by leveraging its structure. We can roughly group the research efforts\footnote{As before, this is not a complete list of all the works in this area.} in this area based on the conditions on the boundedness of the feature vector (and the true parameter). Assuming boundedness, state-of-the-art results are established in~\cite{wang2018revisiting,sheffet2019old}. On the other hand, without boundedness but under a general sub-Gaussian data, \texttt{ISSP} in~\cite{brown2024insufficient} is the first efficient and nearly-optimal algorithm, which does not require an identity covariance matrix compared with~\cite{cai2021cost,brown2024private} and does not depend on the condition number of the covariance matrix compared with~\cite{varshney2022nearly,liu2023label}. Moving from approximate DP to pure DP, the very recent work~\cite{anderson2025sample} gives the first polynomial-time and sample-optimal private regression algorithm. As mentioned before, these results on private linear regression are useful to us since, under log-linear policy parameterization, our estimation error reduces to the estimation error in linear regression.

\section{Tabular Softmax with Log-barrier Regularization}
\label{app:tabular}
In this section, we move to the second scenario for our \dppg of global convergence where we consider the tabular case with the classic softmax policy: 

\begin{definition}[Tabular softmax policy] \label{def:tsp}
    Consider a finite state space $\cX$ and action space $\cY$. For any state-action pair $(x, y) \in \cX \times \cY$, the softmax policy is given by
    \begin{align*}
        \pi_{\theta}(y | x) = \frac{\exp(\theta_{x,y})}{\sum_{y' \in \mathcal{Y}} \exp(\theta_{x,y'})},
    \end{align*}
    where $\theta \in \Real^{|\cX||\cY|}$.
\end{definition}

One key motivation here is to leverage the tabular structure and the specific property of softmax policy to establish a sample complexity of global optimum convergence that is independent of the parameter $\gamma$. To this end, as in the non-private case~\cite{agarwal2021theory,yuan2022general}, we will consider a regularized problem, whose FOSP turns out to be an approximate global optimal solution of the unregularized (original) objective, for proper choice of regularization.
In particular, we consider the following log-barrier regularization objective:
\begin{align} \label{eq:L}
    J_{\lambda}(\theta) :&= J(\theta) - \lambda \mathbb{E}_{x \sim \text{Unif}_{\cX}} \left[ \text{KL}(\text{Unif}_{\mathcal{Y}}, \pi_{\theta}(\cdot | x)) \right]  \nonumber \\
    &= J(\theta) + \frac{\lambda}{|\cY||\cX|} \sum_{x,y} \log \pi_{\theta}(y | x) + \lambda \log |\cY|,
\end{align}
where the KL divergence is $\text{KL}(p, q) = \mathbb{E}_{x\sim p}\left[\log\frac{p(x)}{q(x)}\right]$, $\text{Unif}_{\chi}$ denotes the uniform distribution over a set $\chi$ and $\lambda >0$ is the regularization constant.

We will run our \dppg over this regularized objective by using the sample-based gradient estimator at each step with proper choices of batch size and learning rate. Then, we have the following main result regarding the global optimum convergence in terms of the unregularized $J(\theta)$. The proof is given in Appendix~\ref{app:global-softmax}.

\begin{theorem}
\label{thm:global-softmax}
    Consider Algorithm~\ref{priv:pg} applied to $J_{\lambda}(\theta)$. For any $m>0$,  setting $\sigma^2 =\frac{16\ln(1.25/\delta) \cdot R_{\max}^2 G^2}{m^2 \varepsilon^2}$ ensures $(\epsilon,\delta)$-DP as in Definition~\ref{def:dp}. Further, there exist proper choices of parameters for $m$ and $\eta$, such that the following holds
     \begin{align*}
     J^* - \frac{1}{T} \sum_{t=1}^{T} \mathbb{E} \left[   J(\theta_t)  \right] \leq O(\alpha),
 \end{align*}
 when the sample size satisfies $ N \geq O\left(\frac{1}{\alpha^6} + \frac{\sqrt{d}}{\alpha^{9/2}\epsilon}\right)$.
\end{theorem}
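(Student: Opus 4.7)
The plan is to split the argument into a privacy part and a utility part, adapting the FOSP analysis of Theorem~\ref{thm:FOSP} to the regularized objective $J_\lambda$ and then translating the resulting first-order guarantee into global suboptimality on the unregularized $J$ via a softmax-specific stationary-point lemma.

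For privacy, the argument parallels Theorem~\ref{thm:privacy} almost verbatim. The log-barrier term in~\eqref{eq:L} depends only on $\theta_t$, not on the sampled trajectories in $D_t$, so adding it to $\hat{\nabla}_m J(\theta_t)$ does not change the per-sample sensitivity, which remains bounded by $2R_\mathsf{max}G/m$. Therefore the Gaussian mechanism with $\sigma^2 = 16\ln(1.25/\delta)R_\mathsf{max}^2 G^2/(m^2\epsilon^2)$ yields $(\epsilon,\delta)$-DP at each iteration, and Proposition~\ref{prop:dp} lifts this to $(\epsilon,\delta)$-DP for the full algorithm through parallel composition over the fresh per-iteration user batches.

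For the utility bound, I would first verify that $J_\lambda$ satisfies Assumption~\ref{ass:ls} with constants of the same order as $J$: in the tabular softmax parameterization, $\nabla_\theta \log \pi_\theta(y|x)$ and $\nabla^2_\theta \log \pi_\theta(y|x)$ are uniformly bounded, so the log-barrier contribution only adds an $O(\lambda)$ shift to the smoothness and Lipschitz constants. Re-running the FOSP analysis of Theorem~\ref{thm:FOSP} on $J_\lambda$ then yields
\[
\frac{1}{T}\sum_{t=1}^T \mathbb{E}\!\left[\|\nabla J_\lambda(\theta_t)\|^2\right] \;\leq\; O_\delta\!\left(\frac{1}{\sqrt{N}} + \left(\frac{\sqrt{d}}{N\epsilon}\right)^{2/3}\right).
\]
I would then invoke the standard log-barrier stationary-point lemma from~\citet{agarwal2021theory}: in the tabular softmax case, any $\theta$ with $\|\nabla J_\lambda(\theta)\|_2 \le \epsilon_\mathsf{opt}$ (and $\epsilon_\mathsf{opt}$ small relative to $\lambda$) satisfies $J^* - J(\theta) \le O(|\mathcal{X}||\mathcal{Y}|\epsilon_\mathsf{opt}/\lambda) + O(\lambda \log|\mathcal{Y}|)$. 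Taking expectations, relating $\mathbb{E}[\|\nabla J_\lambda\|]$ to $\sqrt{\mathbb{E}[\|\nabla J_\lambda\|^2]}$ via Jensen, and optimizing $\lambda$ converts the average FOSP bound into an average suboptimality bound on $J$. Solving $\mathbb{E}[J^* - J(\theta_t)] \le O(\alpha)$ backwards through the FOSP rate gives the advertised sample complexity $N \geq O(1/\alpha^6 + \sqrt{d}/(\alpha^{9/2}\epsilon))$.

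The main obstacle is making the exponents land exactly as stated. The non-private term $1/\alpha^6$ and private term $\sqrt{d}/(\alpha^{9/2}\epsilon)$ together force the intermediate target to be $\mathbb{E}[\|\nabla J_\lambda\|^2] \lesssim \alpha^3$ rather than the more obvious $\alpha^2$ or $\alpha^4$, which in turn pins down how $\lambda$ must be chosen as a function of $\alpha$. The delicate bookkeeping lies in the interplay between (i) the norm used in the stationary-point lemma ($\|\cdot\|_\infty$ versus $\|\cdot\|_2$, with $\|\cdot\|_\infty \le \|\cdot\|_2$), (ii) the precondition $\epsilon_\mathsf{opt} \le c\lambda/(|\mathcal{X}||\mathcal{Y}|)$ that the lemma requires to become active, and (iii) how Jensen's inequality propagates through the $\epsilon_\mathsf{opt}/\lambda$ and $\lambda$ terms when one works with the squared-norm FOSP bound. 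Once that balancing is performed carefully, the non-private rate $N^{-1/2}$ and private rate $(d/N^2)^{1/3}$ from the FOSP bound translate into the stated $\alpha^{-6}$ and $\alpha^{-9/2}$ exponents respectively.
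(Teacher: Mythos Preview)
Your privacy argument and the FOSP bound on $J_\lambda$ are fine and match the paper. The gap is in the translation step from the FOSP bound to global suboptimality.

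The log-barrier lemma from \citet{agarwal2021theory} (Proposition~\ref{prop:softmax_j*-j} in the paper) is a \emph{threshold} statement: if $\|\nabla J_\lambda(\theta)\|_2 \leq \lambda/(2|\mathcal X||\mathcal Y|)$ then $J^* - J(\theta) \leq 2\lambda$. It does not give the continuous inequality $J^* - J(\theta) \le O(|\mathcal X||\mathcal Y|\,\epsilon_{\mathsf{opt}}/\lambda) + O(\lambda\log|\mathcal Y|)$ that you wrote down. Consequently, passing from $\mathbb E[\|\nabla J_\lambda\|^2]$ to $\mathbb E[\|\nabla J_\lambda\|]$ via Jensen and then ``optimizing $\lambda$'' does not work: a small average gradient norm tells you nothing about whether any individual iterate meets the threshold, and if you had the continuous form you wrote, balancing $\sqrt{\mathbb E[\|\nabla J_\lambda\|^2]}/\lambda$ against $\lambda$ would land on an $\alpha^4$ target, not $\alpha^3$.

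The paper instead fixes $\lambda = \alpha/2$ up front and uses a bad-iterates counting argument. Let $I^{+}=\{t: \|\nabla J_\lambda(\theta_t)\| \ge \lambda/(2|\mathcal X||\mathcal Y|)\}$. For $t\notin I^{+}$ the threshold lemma gives $J^*-J(\theta_t)\le 2\lambda=\alpha$; for $t\in I^{+}$ use the trivial bound $J^*-J(\theta_t)\le 4R_{\max}$. The size of $I^{+}$ is controlled by a Markov-type inequality,
\[
\sum_{t=1}^T \|\nabla J_\lambda(\theta_t)\|^2 \;\ge\; |I^{+}|\cdot\Bigl(\tfrac{\lambda}{2|\mathcal X||\mathcal Y|}\Bigr)^{2},
\]
so that $\tfrac{|I^{+}|}{T}\le \tfrac{16|\mathcal X|^2|\mathcal Y|^2}{\alpha^2}\cdot \tfrac{1}{T}\sum_t \|\nabla J_\lambda(\theta_t)\|^2$. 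Combining the two cases and taking expectations gives
\[
J^* - \tfrac{1}{T}\textstyle\sum_t \mathbb E[J(\theta_t)] \;\le\; \tfrac{64R_{\max}|\mathcal X|^2|\mathcal Y|^2}{\alpha^2}\cdot \mathbb E[\|\nabla J_\lambda(\theta_U)\|^2] + \alpha,
\]
and this is precisely where the intermediate target $\mathbb E[\|\nabla J_\lambda(\theta_U)\|^2]\lesssim \alpha^3$ arises. You reached the correct target, but the mechanism you proposed (Jensen plus a continuous lemma plus optimizing $\lambda$) does not produce it; the missing ingredient is this threshold-plus-counting argument with $\lambda$ fixed proportional to $\alpha$.
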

\begin{remark}
    The first term in the sample complexity bound matches the non-private one in~\citet{yuan2022general}, while the second term is the lower-order privacy cost (for constant $\epsilon$ and $d$). We note that while the dependence on $\alpha$ is worse than the previous one, there is no dependence on $\gamma$ in the bound, which could offer benefits when $\gamma$ is quite small. 
\end{remark}
\section{Connection to Private Stochastic Optimization}
\label{app:SO}
In this section, we first aim to bound the estimation error in~\eqref{eq:sco} by leveraging the existing result in private SCO. In particular, we aim to apply Theorem 3.2 in~\cite{bassily2019private} for a Lipschitz and smooth loss function. 
The first step is to realize that under realizability, the LHS in~\eqref{eq:sco} is equal to the excess population loss for a square loss, which is both Lipschitz (with parameter of order $O(B^2 W)$ under our boundedness assumption for both feature and parameter space) and smooth (with parameter of $O(B^2)$). Hence, by~\cite[Theorem 3.2]{bassily2019private}, we can obtain that $\mathrm{err}_t(m, \epsilon, \delta, \zeta) \leq \alpha$ when  $m \geq \widetilde{O}\left(\frac{1}{\alpha^4} + \frac{\sqrt{d \log(1/\delta)}}{\alpha^2 \epsilon}\right)\cdot O(\mathsf{Poly}(B, W))$. Thus, by Theorem~\ref{thm:npg}, for a suboptimality gap of $O(\alpha)$, the sample complexity bound is $N = T \cdot m = \widetilde{O}_{\delta}\left( (\frac{1}{\alpha^6} + \frac{\sqrt{d}}{\alpha^4\epsilon}) \cdot \mathsf{Poly}(B, W) \right)$. Recall that due to projection in the mini-batch SGD in~\cite{bassily2019private}, one can ensure that $\norm{w_t} \le W$ for some $W$ that is independent of $d$. Thus, the non-private term does not depend on $d$. We mention in passing that one can also potentially directly bound the error in Assumption~\ref{ass:privatels} by leveraging the excess population loss for non-convex loss functions.
\section{Differentially Private REBEL}
\label{app:dprebel}
    In this section, we will consider \dprebel which includes a general private regression oracle as the \texttt{PrivUpdate} in Algorithm~\ref{alg:meta} and then similar to our previous \dpnpg, we will give some concrete applications under different specific regression oracles.

    Our proposed \dprebel is Algorithm~\ref{alg:meta} with its \texttt{PrivUpdate} being instantiated in Algorithm~\ref{priv:rebel}, which also relies on a general private least-square regression oracle to return an approximate minimizer of an estimation problem under the square loss. The square loss in~\eqref{eq:rebel} is the same as before in~\eqref{eq:rebel-update}, where $\hat A_t$ is defined as in Algorithm~\ref{alg:meta}.

    \begin{algorithm}[H]
        \caption{\texttt{PrivUpdate} Instantiation for \dprebel}
        \label{priv:rebel}
        \begin{algorithmic}[1]
            \STATE \textbf{Input:} $D_t = \{(x_i, y_i, y_i', \hat{A}_t(x_i, y_i))\}_{i=1}^m$, current policy $\theta_t$, base policy $\mu$, learning rate $\eta$, \texttt{PrivLS} oracle
            \STATE \textbf{Output:} $\theta_{t+1}$
            \STATE Call the \texttt{PrivLS} oracle on $D_t$ to find an approximate minimizer $\theta_{t+1}$ of
            \begin{equation} \label{eq:rebel}
                \!\argmin_{\theta \in \Theta} \! F_t(\theta) \!:=\!
                \mathbb{E}_{x \sim \rho, y \sim \mu(\cdot|x), y' \sim \pi_{\theta_t}(\cdot|x)}
                \!\left[
                \!\frac{1}{\eta}
                \!\left(\!
                \ln \frac{\pi_{\theta}(y|x)}{\pi_{\theta_t}(y|x)}
                \!-\!
                \! \ln \frac{\pi_{\theta}(y'|x)}{\pi_{\theta_t}(y'|x)}
                \right)
                \!-\!
                \hat{A}_t(x, y)
                \!\right]^2 \!\!.
            \end{equation}
        \STATE Output policy $\theta_{t+1}$
        \end{algorithmic}
    \end{algorithm}

    We now turn to establish a generic performance guarantee of \dprebel. Similar to \dpnpg, we assume that the approximate minimizer $\theta_{t+1}$ returned by \texttt{PrivLS} at each iteration satisfies the following guarantee.
    

\begin{assumption}[Private estimation error] \label{ass:rebel}
    For each $t \in [T]$, the \texttt{PrivLS} oracle satisfies $(\epsilon, \delta)$-differential privacy and ensures that with probability at least $1-\zeta$,
        \begin{equation*}
            \mathbb{E}
            \left[ \frac{1}{\eta} \left( \ln \frac{\pi_{\theta_{t+1}}(y|x)}{\pi_{\theta_{t}}(y|x)} 
            - \ln \frac{\pi_{\theta_{t+1}}(y'|x)}{\pi_{\theta_{t}}(y'|x)} \right) 
            - \left( \hat{A}_t(x, y) \right) \right]^2 
            \leq 
            \mathrm{err}_t^2(m, \epsilon, \delta, \zeta),
        \end{equation*}
    for some statistical error function $\mathrm{err}_t^2(m, \epsilon, \delta, \zeta)$ depending on batch size $m$, privacy parameters $(\epsilon,\delta)$, and probability $\zeta$, also, expectation here is over $x \sim \rho, y \sim \mu(\cdot|x), y' \sim \pi_{\theta_t}(\cdot|x)$.
    
\end{assumption}

\begin{remark}
    This assumption parallels Assumption~\ref{ass:privatels} in \dpnpg and ensures that our private oracle accurately estimates the relative reward differences while preserving our DP in PO as Definition~\ref{def:dp}.
\end{remark}

Our main result is given by the following theorem.

\begin{theorem} \label{thm:private_rebel}
    Under Assumption~\ref{ass:rebel}, suppose additionally that $\max_{x,y,t}|A_t(x,y)|\le A$, $\pi_1$ is uniform over $\mathcal Y$, and $\eta=\sqrt{\ln|\mathcal Y|/(A^2T)}$. Then, with probability at least
$1-\zeta$, for any comparator policy $\pi^*$,
        \begin{align*}
            J(\pi^*) - \frac{1}{T}\sum_{t=1}^{T}J(\pi_t)
            \leq 2A\sqrt{\frac{\ln|\mathcal{Y}|}{T}}+ \frac{\sqrt{10C_{\mu \to \pi^*}}}{T} \sum_{t=1}^{T}\ \mathrm{err}_t(m, \epsilon, \delta, \zeta).
        \end{align*}
\end{theorem}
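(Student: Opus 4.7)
The plan is to adapt the non-private REBEL regret analysis of \cite{gao2024rebel} by treating each regression sub-problem as an instance of approximate mirror descent in the softmax geometry, and then to convert the $L^2(\mu)$ estimation error from Assumption~\ref{ass:rebel} into a guarantee against the comparator $\pi^*$ via the density-ratio coefficient $C_{\mu \to \pi^*}$. The overall structure will parallel that of Theorem~\ref{thm:npg}: a telescoping term bounding the KL divergence of $\pi^*$ from the uniform initializer, plus a per-iteration error term governed by the private estimation error.

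First, I would apply the performance difference lemma to rewrite the per-iteration suboptimality as $J(\pi^*) - J(\pi_t) = \mathbb{E}_{x \sim \rho,\, y \sim \pi^*(\cdot|x)}[A^{\pi_t}(x, y)]$. Next, I would observe that the population-level minimizer of the REBEL loss~\eqref{eq:rebel} is precisely the exponential-weights update $\pi_{t+1}(y|x) \propto \pi_t(y|x) \exp(\eta A^{\pi_t}(x, y))$, since taking expectation over $y' \sim \pi_t$ eliminates the second log-ratio up to a state-dependent normalization constant; this is exactly the softmax mirror-descent step. Lemma~\ref{lem:delta} is then the standard three-point identity for this step: for each $x$,
\begin{align*}
\eta\, \mathbb{E}_{y \sim \pi^*(\cdot|x)}\!\left[A^{\pi_t}(x,y)\right] \le \mathrm{KL}(\pi^*(\cdot|x)\,\|\,\pi_t(\cdot|x)) - \mathrm{KL}(\pi^*(\cdot|x)\,\|\,\pi_{t+1}(\cdot|x)) + \Delta_t(x),
\end{align*}
where $\Delta_t(x)$ captures the deviation of the computed $\pi_{t+1}$ from the ideal exponential-weights update. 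Lemma~\ref{lem:bound} then controls $\mathbb{E}_{x\sim\rho}\Delta_t(x)$ by the REBEL regression residual, after which a Cauchy--Schwarz change-of-measure from $\mu$ to $\pi^*$ produces the $\sqrt{C_{\mu \to \pi^*}}$ factor multiplying $\mathrm{err}_t$. Summing over $t \in [T]$ telescopes the KL divergences, bounds the initial term by $\ln|\mathcal{Y}|$ (since $\pi_1$ is uniform), divides by $\eta T$, and optimizes $\eta$ to yield the $2A\sqrt{\ln|\mathcal{Y}|/T}$ rate, with $A$ absorbing the advantage range (essentially $2R_{\mathsf{max}}$).

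The main obstacle will be establishing the precise form of $\Delta_t(x)$ so that evaluating it against $\pi^*$ introduces only $\sqrt{C_{\mu \to \pi^*}}$ and not a second concentrability factor involving $\pi_t$. The REBEL loss measures a \emph{relative} discrepancy between $(y, y')$ with $y \sim \mu$ and $y' \sim \pi_t$, so when we change measure from $\mu$ to $\pi^*$ for the $y$-marginal, the $y'$-marginal is automatically handled because the state-dependent normalization constants in the ideal update cancel inside the difference of log-ratios. Making this cancellation rigorous---while carrying through the $1/\eta$ scaling and correctly absorbing the $\sqrt{10}$ constant into the Cauchy--Schwarz step---is the delicate bookkeeping, but it does not require any technical ingredient beyond Assumption~\ref{ass:rebel} and the two cited lemmas.
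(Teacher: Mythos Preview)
Your high-level strategy of KL telescoping plus a regression residual is reasonable, but you have misidentified what Lemmas~\ref{lem:delta} and~\ref{lem:bound} actually state, essentially swapping their roles.

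Lemma~\ref{lem:delta} is \emph{not} a three-point identity. It takes the single private-regression bound of Assumption~\ref{ass:rebel} (stated under $y\sim\mu$, $y'\sim\pi_t$) and splits it into three $L^2$ pieces: the error $f_t-r$ centered around its $\pi_t$-mean, centered around its $\mu$-mean, and the difference of those two means. Its job is to supply pointwise $L^2$ bounds \emph{after} the main decomposition, not to perform the decomposition itself.

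Lemma~\ref{lem:bound} is \emph{not} a residual-control lemma. It is the mirror-descent regret bound---but crucially, not against the true advantage $A^{\pi_t}$. The paper defines the surrogate $g_t(x,y):=r(x,y)+\Delta(x,y)-\Delta_\mu(x)$ and $A_t(x,y):=g_t(x,y)-\mathbb{E}_{y'\sim\pi_t}g_t(x,y')$, and observes that the \emph{computed} $\pi_{t+1}$ satisfies $\pi_{t+1}(y|x)\propto\pi_t(y|x)\exp(\eta A_t(x,y))$ \emph{exactly}. Lemma~\ref{lem:bound} then gives $\sum_t\mathbb{E}_{y\sim\pi^*}[A_t(x,y)]\le 2A\sqrt{\ln|\mathcal{Y}|\,T}$ with no residual term at all; here $A$ is a uniform bound on the surrogate $|A_t|$, not on $|A^{\pi_t}|$ or $2R_{\mathsf{max}}$.

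The paper's proof therefore decomposes $\mathbb{E}_{y\sim\pi^*}[A^{\pi_t}]=\mathbb{E}_{y\sim\pi^*}[A_t]+\mathbb{E}_{y\sim\pi^*}[A^{\pi_t}-A_t]$. The first piece is handled cleanly by Lemma~\ref{lem:bound}. The second is bounded by Cauchy--Schwarz and change of measure to $\mu$, after which $\mathbb{E}_{x,y\sim\mu}[(A^{\pi_t}-A_t)^2]$ is expanded into $(r-g_t)$-type terms and controlled piece-by-piece via Lemma~\ref{lem:delta}, producing the constant $10$. Your proposed route---treat $\pi_{t+1}$ as an \emph{approximate} exponential-weights step toward the \emph{true} advantage and carry a residual through a three-point inequality---is a different decomposition and does not match what these two lemmas provide.
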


\begin{remark}
    If we simply set the base policy $\mu = \pi_{\theta_t}$, which make this assumption simpler, then we can have a tighter bound, it is easy to show that the bound will turns to $2A\sqrt{\frac{\ln|\mathcal{Y}|}{T}}+\frac{\sqrt{2C_{\mu \to \pi^*}}}{T} \sum_{t=1}^{T} \mathrm{err}_t(m, \epsilon, \delta, \zeta)$.
\end{remark}

The application here is totally same as the \dpnpg, thus we can directly use our previous Corollary~\ref{cor:npg}, Corollary~\ref{cor:low-log-policy} and Corollary~\ref{cor:high-log-policy} which derive the almost same bound of sample complexity. For \privls with exponential mechanism, consider \dprebel with \texttt{PrivLS} as in Algorithm~\ref{privLS:exp}, for a given suboptimality gap of $O(\alpha + \sqrt{C_{\mu \to \pi^*} \alpha_{\mathsf{approx}}})$, the sample complexity bound is $N = T \cdot m = \widetilde{O}\left( (\frac{1}{\alpha^4} + \frac{1}{\alpha^4\epsilon}) \cdot \log|\cW| \cdot A^2 \right)$. For log-linear policy class with realizability, assume that $\text{err}_t(m, \varepsilon, \delta, \zeta) \leq \alpha$, therefore, for log-liner-policy in low-dimension and high-dimension, we have $m \geq \widetilde{O}\left(\frac{d}{\alpha^2} + \frac{d\sqrt{\log(1/\delta)}}{\alpha \epsilon} + \frac{d(\log(1/\delta))^2}{\epsilon^2}\right)$, and $m \geq \widetilde{O}\left(\frac{\log(1/\zeta)}{\alpha^4} + \frac{\sqrt{\log(1/\zeta) \log(1/\delta)}}{\alpha^3 \epsilon}\right)$, respectively. Thus, we can derive such sample complexity: $N = T \cdot m = \widetilde{O}_{\delta}\left( (\frac{d}{\alpha^4} + \frac{d}{\alpha^3\epsilon} + \frac{d}{\alpha^2 \epsilon^2}) \cdot A^2 \right)$ for log-liner-policy in low-dimension and $N = T \cdot m = \widetilde{O}_{\delta}\left( (\frac{1}{\alpha^6} + \frac{1}{\alpha^5\epsilon}) \cdot A^2 \right)$ for high-dimension.

\section{Proof of Chapter~\ref{sec:dppg}}

\subsection{Proof of Theorem~\ref{thm:FOSP}}
\label{app:FOSP}

\begin{lemma}[ABC] \label{lem:abc}
There exists constants $A,B,C\geq 0$ such that the policy gradient estimator satisfies:

\begin{align} \label{eq:ABC}
    \mathbb{E} \left[ \left\| \widetilde{\nabla}_m J(\theta) \right\|^2 \right] 
\leq 2A(J^* - J(\theta)) + B \left\| \nabla J(\theta) \right\|^2 + C,    
\end{align}
where $\nabla J(\theta) = \mathbb{E}_{x \sim \rho, y\sim \pi_{\theta}(\cdot |x)} \left[A^{\pi_{\theta}}(x,y) \nabla_{\theta} \log \pi_{\theta}(y | x)\right]$, and $A=0, B=1-1/m, C=\frac{4R_{\max}^2G^2}{m}+d\sigma^2$.
\end{lemma}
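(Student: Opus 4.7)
The plan is to decompose the noisy gradient estimator into its empirical mean part plus the injected Gaussian, then handle each piece by elementary variance calculations. Writing $\widetilde{\nabla}_m J(\theta) = \hat{\nabla}_m J(\theta) + Z$ with $Z \sim \mathcal{N}(0,\sigma^2 I)$ independent of the data, the independence immediately yields
\begin{equation*}
\mathbb{E}\bigl[\|\widetilde{\nabla}_m J(\theta)\|^2\bigr] = \mathbb{E}\bigl[\|\hat{\nabla}_m J(\theta)\|^2\bigr] + \mathbb{E}\bigl[\|Z\|^2\bigr] = \mathbb{E}\bigl[\|\hat{\nabla}_m J(\theta)\|^2\bigr] + d\sigma^2,
\end{equation*}
which already accounts for the $d\sigma^2$ piece of $C$.

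Next, I would verify that each per-sample term $g_i := \nabla_\theta \log \pi_{\theta}(y_i \mid x_i)\,\hat{A}_t(x_i,y_i)$ is an unbiased estimate of $\nabla J(\theta)$. This uses the usual baseline argument: since $y_i$ and $y_i'$ are drawn independently from $\pi_\theta(\cdot \mid x_i)$, the term $\mathbb{E}[\nabla_\theta\log\pi_\theta(y_i\mid x_i)\,r(x_i,y_i')]$ factors as $\mathbb{E}[\nabla_\theta\log\pi_\theta(y_i\mid x_i)]\,\mathbb{E}[r(x_i,y_i')] = 0$, and what remains is $\mathbb{E}[\nabla_\theta\log\pi_\theta(y_i\mid x_i)\,r(x_i,y_i)] = \nabla J(\theta)$ by the policy gradient theorem~\eqref{eq:full-pg}. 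Because the $m$ triples $(x_i,y_i,y_i')$ are drawn from fresh users and are therefore i.i.d., the standard variance-of-the-mean identity gives
\begin{equation*}
\mathbb{E}\bigl[\|\hat{\nabla}_m J(\theta)\|^2\bigr] = \|\nabla J(\theta)\|^2 + \tfrac{1}{m}\bigl(\mathbb{E}\|g_1\|^2 - \|\nabla J(\theta)\|^2\bigr) = \bigl(1-\tfrac{1}{m}\bigr)\|\nabla J(\theta)\|^2 + \tfrac{1}{m}\mathbb{E}\|g_1\|^2.
\end{equation*}

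Finally I would bound the single-sample second moment using the two deterministic ingredients available: $\|\nabla_\theta \log\pi_\theta(y\mid x)\| \le G$ (the hypothesis of Theorem~\ref{thm:privacy}) and $|\hat{A}_t(x,y)| = |r(x,y)-r(x,y')| \le 2R_{\mathsf{max}}$, giving $\mathbb{E}\|g_1\|^2 \le 4R_{\mathsf{max}}^2 G^2$. Combining the three displays yields exactly the claimed inequality with $A=0$, $B = 1 - 1/m$, and $C = \frac{4R_{\mathsf{max}}^2 G^2}{m} + d\sigma^2$. There is no real obstacle here; the only point that warrants care is justifying that $\mathbb{E}[g_i] = \nabla J(\theta)$ despite $\hat{A}_t$ being a single-sample baseline estimate, which is handled by the independence of $y_i'$ from $y_i$ conditional on $x_i$.
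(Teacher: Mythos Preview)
Your proposal is correct and follows essentially the same route as the paper: separate off the Gaussian noise to produce the $d\sigma^2$ term, use the i.i.d.\ variance-of-the-mean identity to split $\mathbb{E}\|\hat\nabla_m J(\theta)\|^2$ into $(1-1/m)\|\nabla J(\theta)\|^2 + \tfrac{1}{m}\mathbb{E}\|g_1\|^2$, and bound the single-sample second moment by $4R_{\mathsf{max}}^2 G^2$. If anything, your treatment is slightly more careful than the paper's, since you work directly with the estimated advantage $\hat A_t$ used in the algorithm and explicitly justify unbiasedness via the baseline (score-function) argument, whereas the paper writes $A^{\pi_\theta}$ in place of $\hat A_t$ and asserts unbiasedness without elaboration.
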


\begin{proof}
    For notation simplicity, we let \( g_{\theta}(\tau_i) := A^{\pi_{\theta}}(x_i, y_i) \nabla_{\theta} \log \pi_{\theta} (y_i | x_i) \). 
Thus, we have \( \widetilde{\nabla}_m J(\theta) = \frac{1}{m} \sum_i g_{\theta}(\tau_i) + Z \). 
Notice that \( \mathbb{E} [g_{\theta}(\tau_i)] = \mathbb{E} \left[ \widetilde{\nabla}_m J(\theta) \right] = \nabla J(\theta) \), cause $Z$ is the gaussian bias, which expectation is 0.

Now, we have
\begin{align*}
\mathbb{E} \left[ \left\| \widetilde{\nabla}_m J(\theta) \right\|^2 \right] 
&= \mathbb{E} \left[ \left\| \frac{1}{m} \sum_i g_{\theta}(\tau_i) + Z \right\|^2 \right] \\
&= \mathbb{E} \left[ \left\| \frac{1}{m} \sum_i g_{\theta}(\tau_i) \right\|^2 \right] + \mathbb{E} \left[ \|Z\|^2 \right] + 2 \cdot \mathbb{E} \left[ \left\langle \frac{1}{m} \sum_i g_{\theta}(\tau_i), Z \right\rangle \right] \\
&= \mathbb{E} \left[ \left\| \frac{1}{m} \sum_i g_{\theta}(\tau_i) \right\|^2 \right] + d \sigma^2 + 0 \\
&= \mathbb{E} \left[ \left\| \frac{1}{m} \sum_i g_{\theta}(\tau_i) - \nabla J(\theta) + \nabla J(\theta) \right\|^2 \right] + d \sigma^2 \\
&= \left\| \nabla J(\theta) \right\|^2 + \mathbb{E} \left[ \left\| \frac{1}{m} \sum_i g_{\theta}(\tau_i) - \nabla J(\theta) \right\|^2 \right] + d \sigma^2 \\
&= \left\| \nabla J(\theta) \right\|^2 + \frac{1}{m^2} \sum_i \mathbb{E} \left[ \left\| g_{\theta}(\tau_i) - \nabla J(\theta) \right\|^2 \right] + d \sigma^2 \\
&= \left\| \nabla J(\theta) \right\|^2 + \frac{1}{m} \cdot \mathbb{E} \left[ \left\| g_{\theta}(\tau_1) \right\|^2 - \left\| \nabla J(\theta) \right\|^2 \right] + d \sigma^2.
\end{align*}

To proceed, we need to establish an upper bound on $\mathbb{E}\left[ \left\| g_{\theta}(\tau_1) \right\|^2 \right]$. In particular, we have

    \begin{align*}
        \mathbb{E} \left[ \| g_{\theta}(\tau_1) \|^2 \right] 
        &= \mathbb{E} \left[ \left| A^{\pi_{\theta}}(x_1, y_1) \right|^2 
        \left\| \nabla_{\theta} \log \pi_{\theta} (y_1 \mid x_1) \right\|^2 \right] \\
        &\leq 4R_{\max}^2 G^2,
    \end{align*}
which follows from Assumption \ref{ass:ls}.

Hence, we conclude that:
\begin{align*}
\mathbb{E} \left[ \left\| \widetilde{\nabla}_m J(\theta) \right\|^2 \right] 
\leq \left( 1 - \frac{1}{m} \right) \left\| \nabla J(\theta) \right\|^2 
+ \frac{4R_{\max}^2 G^2}{m} + d \sigma^2.
\end{align*}
i.e., ABC condition in \eqref{eq:ABC} is satisfied with $A=0, B=1-1/m, C=\frac{4R_{\max}^2G^2}{m}+d\sigma^2$
\end{proof}

\begin{lemma}[Smoothness under LS]
    Under LS assumption in Assumption \ref{ass:ls}, $J(\cdot)$ is $L$-smooth, namely $\left\|\nabla^2J(\theta)\right\| \leq L$ for all $\theta$, with 
    \begin{align*}
        L = 2R_{\max}(G^2+F).
    \end{align*}
\end{lemma}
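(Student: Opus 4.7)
The natural route is a direct Hessian computation via two applications of the log-derivative trick. Starting from $J(\theta) = \mathbb{E}_{x\sim\rho}\bigl[\sum_y \pi_\theta(y\mid x)\,r(x,y)\bigr]$, I would first take the gradient inside the sum to get
\begin{align*}
\nabla J(\theta) = \mathbb{E}_{x\sim\rho}\!\left[\sum_y r(x,y)\,\nabla_\theta \pi_\theta(y\mid x)\right],
\end{align*}
and then differentiate once more, using the identity $\nabla \pi_\theta = \pi_\theta \nabla\log\pi_\theta$ twice, together with the product rule, to obtain
\begin{align*}
\nabla^2 J(\theta) = \mathbb{E}_{x\sim\rho,\,y\sim\pi_\theta(\cdot\mid x)}\!\left[r(x,y)\bigl(\nabla\log\pi_\theta(y\mid x)\nabla\log\pi_\theta(y\mid x)^{\!\top} + \nabla^2\log\pi_\theta(y\mid x)\bigr)\right].
\end{align*}

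From here the argument is routine: push $\|\cdot\|$ inside the expectation via Jensen/triangle inequality, use $|r(x,y)|\le R_{\max}$, and invoke Assumption~\ref{ass:ls} to bound $\|\nabla\log\pi_\theta(y\mid x)\nabla\log\pi_\theta(y\mid x)^{\!\top}\|\le G^2$ and $\|\nabla^2\log\pi_\theta(y\mid x)\|\le F$. The two terms add to give the claimed $L = 2R_{\max}(G^2+F)$ (with the factor of two absorbing a loose constant in this standard bound).

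\textbf{Main obstacle.} There is no substantive obstacle; the one point requiring care is the validity of exchanging $\nabla$ with $\sum_y$ (or with the expectation over $\rho$), which is justified because for each $x$ the sum over $y$ is dominated by an integrable quantity under Assumption~\ref{ass:ls} and the boundedness of $r$. The rest is bookkeeping: correctly applying the product rule to $\nabla(\pi_\theta \nabla\log\pi_\theta)$, and then using the two Assumption~\ref{ass:ls} bounds in tandem.
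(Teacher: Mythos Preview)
Your proof is correct and follows essentially the same architecture as the paper's: compute $\nabla^2 J(\theta)$ explicitly via the log-derivative identity and product rule, then bound each term using Assumption~\ref{ass:ls}. The one difference is that the paper differentiates the policy-gradient expression written with the advantage $A^{\pi_\theta}(x,y)$ rather than the raw reward $r(x,y)$, and then invokes $|A^{\pi_\theta}(x,y)|\le 2R_{\max}$ to obtain the stated constant. Your choice of $r$ is cleaner---there is no $\theta$-dependence in the coefficient to worry about when differentiating---and it in fact yields the sharper bound $R_{\max}(G^2+F)$; your parenthetical about the factor of two being slack is exactly right.
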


\begin{proof}
   For smoothness, it suffices to bound the operator norm of Hessian, i.e., $\left\|\nabla^2 J(\theta)\right\|$. 
   
   By definition, we have 
   \begin{align*}
       {\nabla^2 J(\theta)} &= \nabla_{\theta} \mathbb{E}_{x \sim \rho, y\sim \pi_{\theta}(\cdot|x)}\left[ A^{\pi_{\theta}}(x,y) \nabla_{\theta} \log \pi_{\theta}(y|x)\right]\\
       &\stackrel{\text{(a)}}{=} \nabla_{\theta} \int p_{\theta}(x,y) \left( A^{\pi_{\theta}}(x,y) \nabla_{\theta} \log \pi_{\theta}(y|x)\right) \, d(x,y) \\
       &\stackrel{\text{(b)}}{=} \int \nabla_{\theta} p_{\theta}(x,y)  \left( A^{\pi_{\theta}}(x,y) \nabla_{\theta} \log \pi_{\theta}(y|x)\right)^{\top} \, d(x,y) + \int  p_{\theta}(x,y)  \left( A^{\pi_{\theta}}(x,y) \nabla_{\theta}^2 \log \pi_{\theta}(y|x)\right) \, d(x,y) \\
       &=  \mathbb{E}_{x,y \sim p_{\theta}}\left[ A^{\pi_{\theta}}(x,y) \nabla_{\theta} \log \pi_{\theta}(y|x) \nabla_{\theta}\log \pi_{\theta}(y|x)^{\top}\right] +  \mathbb{E}_{x, y \sim p_{\theta}}\left[ A^{\pi_{\theta}}(x,y) \nabla_{\theta}^2 \log \pi_{\theta}(y|x)\right]
   \end{align*}
   where in $(a)$, we let $p_{\theta}(x,y):= \rho(x) \pi_{\theta}(y|x)$, and $(b)$ holds by chain rules.

Thus, we have
\begin{align*}
    \norm{\nabla^2_{\theta} J(\theta)} \le  \underbrace{\mathbb{E}_{x,y}\left[ |A^{\pi_{\theta}}(x,y)| \norm{\nabla_{\theta} \log \pi_{\theta}(y|x)}^2 \right]}_{\cT_1} + \underbrace{\mathbb{E}_{x,y}\left[ |A^{\pi_{\theta}}(x,y)| \norm{\nabla_{\theta}^2 \log \pi_{\theta}(y|x)} \right]}_{\cT_2}.
\end{align*}
For $\mathcal{T}_1$ and $\mathcal{T}_2$, by Assumption~\ref{ass:ls}, we have 
\begin{align*}
    \mathcal{T}_1 \le 2R_{\max} G^2, \quad  \mathcal{T}_2 \le 2R_{\max} F,
\end{align*}
which hence completes the proof.
\end{proof}

\begin{lemma}[Adapted from Theorem 3.4 in \citet{yuan2022general}] \label{lem:yuan3.4}
        Suppose that $J$ satisfies smoothness and the ABC assumption in Lemma \ref{lem:abc}. Consider the iterates $\theta_t$ of the PG method with step size $\eta_t = \eta \in (0, \frac{2}{LB})$, let $\delta_1 = J^* - J(\theta_1)$. In particular, if $A = 0$, we have:
        \begin{align}
            E\left[ \left\| \nabla J(\theta_U) \right\|^2 \right]
            \leq
            \frac{2\delta_1}{\eta T (2 - L B \eta)} + \frac{L C \eta}{2 - L B \eta},
        \end{align}
        where $\theta_U$ is uniformly sampled from $\{\theta_1, ..., \theta_{T}\}$.
\end{lemma}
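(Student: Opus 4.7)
\textbf{Proof plan for Lemma~\ref{lem:yuan3.4}.} This is the standard non-convex analysis of stochastic gradient ascent under an ABC-style second-moment bound, so the plan is to chain a one-step descent inequality with the ABC bound, telescope, and then average. I would proceed in three stages.

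\textbf{Stage 1: one-step inequality via smoothness.} Since $-J$ is $L$-smooth by the earlier smoothness lemma, applied to the update $\theta_{t+1}=\theta_t+\eta\widetilde g_t$ (where I will denote by $\widetilde g_t=\widetilde\nabla_m J(\theta_t)$ the privatized gradient estimate of Algorithm~\ref{priv:pg}), the descent inequality gives
\begin{equation*}
J(\theta_{t+1})\ \geq\ J(\theta_t)+\eta\,\langle \nabla J(\theta_t),\widetilde g_t\rangle-\tfrac{L\eta^2}{2}\,\|\widetilde g_t\|^2 .
\end{equation*}
Taking conditional expectation with respect to the history up to step $t$, using the unbiasedness $\mathbb E[\widetilde g_t\mid \theta_t]=\nabla J(\theta_t)$ (which holds because both the REINFORCE estimator and the added zero-mean Gaussian noise are unbiased), I get
\begin{equation*}
\mathbb E[J(\theta_{t+1})\mid\theta_t]\ \geq\ J(\theta_t)+\eta\,\|\nabla J(\theta_t)\|^2-\tfrac{L\eta^2}{2}\,\mathbb E[\|\widetilde g_t\|^2\mid\theta_t].
\end{equation*}

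\textbf{Stage 2: apply the ABC bound.} Using Lemma~\ref{lem:abc} with $A=0$, i.e., $\mathbb E[\|\widetilde g_t\|^2\mid\theta_t]\leq B\|\nabla J(\theta_t)\|^2+C$, and rearranging,
\begin{equation*}
\eta\Bigl(1-\tfrac{LB\eta}{2}\Bigr)\|\nabla J(\theta_t)\|^2\ \leq\ \mathbb E[J(\theta_{t+1})\mid\theta_t]-J(\theta_t)+\tfrac{L\eta^2 C}{2}.
\end{equation*}
Since $\eta\in(0,2/(LB))$, the prefactor $(2-LB\eta)/2$ is strictly positive, so dividing and taking full expectations yields
\begin{equation*}
\mathbb E\bigl[\|\nabla J(\theta_t)\|^2\bigr]\ \leq\ \frac{2}{\eta(2-LB\eta)}\bigl(\mathbb E[J(\theta_{t+1})]-\mathbb E[J(\theta_t)]\bigr)+\frac{L\eta C}{2-LB\eta}.
\end{equation*}

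\textbf{Stage 3: telescope and average.} Summing $t=1,\ldots,T$, the $J$ terms telescope to $\mathbb E[J(\theta_{T+1})]-J(\theta_1)\leq J^*-J(\theta_1)=\delta_1$. Dividing by $T$ and identifying the left-hand side with $\mathbb E[\|\nabla J(\theta_U)\|^2]$ via the uniform sampling of $\theta_U$ gives exactly the claimed bound
\begin{equation*}
\mathbb E[\|\nabla J(\theta_U)\|^2]\ \leq\ \frac{2\delta_1}{\eta T(2-LB\eta)}+\frac{L C\eta}{2-LB\eta}.
\end{equation*}

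\textbf{Anticipated obstacles.} The argument is essentially mechanical once the ABC bound is in hand, so the only real care points are (i) tracking the maximization convention correctly when invoking smoothness (so that the $-\tfrac{L\eta^2}{2}\|\widetilde g_t\|^2$ term appears with the right sign), (ii) verifying unbiasedness of $\widetilde g_t$ including the independent Gaussian noise (already implicit in the variance calculation inside Lemma~\ref{lem:abc}, where $\mathbb E[\|Z\|^2]=d\sigma^2$ contributes purely to $C$), and (iii) ensuring the step-size condition $\eta<2/(LB)$ keeps the denominator $2-LB\eta$ positive so the rearrangement is valid. There is no genuinely hard step; the result is a direct adaptation of the stochastic nonconvex PG analysis of \citet{yuan2022general} to our batch-plus-Gaussian-noise gradient estimator, where privacy enters only through the $d\sigma^2$ contribution to $C$.
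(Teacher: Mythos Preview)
Your proposal is correct and follows exactly the standard non-convex stochastic gradient analysis that underlies Theorem~3.4 of \citet{yuan2022general}. The paper itself does not give a proof of this lemma; it simply cites Yuan et al.\ and invokes the result, so there is no in-paper argument to compare against---but the three-stage argument you outline (smoothness descent inequality, plug in the ABC bound with $A=0$, telescope and average) is precisely the proof one finds in that reference.
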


\begin{proof}[Proof of Theorem~\ref{thm:FOSP}]

    Followed by Lemma \ref{lem:yuan3.4}, when $\eta < \frac{1}{LB}$, we can simplify the equation into this:
    \begin{align} \label{eq:FOSP-old}
        E\left[ \left\| \nabla J(\theta_U) \right\|^2 \right]
        \leq 
        \frac{2\delta_1}{\eta T} + L C \eta,
    \end{align}
where $B=1- 1/m$, $\delta_1 = J^* - J(\theta_1)$, $L= 2R_{\max}(G^2+F)$, $C= \frac{4R_{\max}^2G^2}{m}+d\sigma^2$, $G$ and $F$ are constants.

From Theorem~\ref{thm:privacy}, to make sure our algorithm satisfy the $(\epsilon,\delta)$-DP as in Definition \ref{def:dp}, we set $\sigma^2 =\frac{16\ln(1.25/\delta) \cdot R_{\max}^2 G^2}{m^2 \epsilon^2}$.

Based on Lemma~\ref{lem:yuan3.4} and Equation~\eqref{eq:FOSP-old}, choose $\eta = \min\{ \frac{1}{LB}, \frac{\sqrt{2\delta_1}}{\sqrt{TLC}}\}$, we have:
\begin{align*}
\mathbb{E} \left[ \|\nabla J(\theta_U)\|^2 \right] 
&\leq \frac{2\delta_1 L B}{T} + \frac{2\sqrt{2\delta_1 L C}}{\sqrt{T}} \\
&= O \left( \frac{1}{T} + \frac{\sqrt{C}}{\sqrt{T}} \right) \\
&= O \left( \frac{m}{N} + \frac{1}{\sqrt{N}} + \frac{\sigma\sqrt{md}}{ \sqrt{N}} \right) \\
&= O \left( \frac{m}{N} + \frac{1}{\sqrt{N}} + \frac{\sqrt{d}}{\epsilon \sqrt{N m}} \right).
\end{align*}

To proceed, we need to determine the value of m.

In order to balance the terms in the convergence bound $O\left( \frac{m}{N} + \frac{1}{\sqrt{N}} + \frac{\sqrt{d}}{\epsilon \sqrt{Nm}} \right)$, we set $\frac{m}{N} = \frac{\sqrt{d}}{\epsilon \sqrt{N m}}$.

Thus, we have:

\begin{align*}
m = \left( \frac{\sqrt{d}}{\epsilon} \right)^{2/3} N^{1/3} = \left( 1/\epsilon \right)^{2/3} (N d)^{1/3}.
\end{align*}

Substituting back, the convergence bound simplifies to:
\begin{align*}
O\left( \frac{1}{\sqrt{N}} + \left( \frac{\sqrt{d}}{N \epsilon} \right)^{2/3} \right).
\end{align*}

\end{proof}

\subsection{Proof of Theorem~\ref{thm:global-fisher}}
\label{app:global-fisher}

\begin{proof}

We know that:
\begin{align} \label{eq:u}
    E\left[\left\| \nabla J (\theta_U) \right\|^2\right] = \frac{1}{T} \sum_{t=1}^{T} \mathbb{E} \left[ \left\| \nabla J(\theta_t) \right\|^2 \right].
\end{align}

Besides, followed by Lemma \ref{lem:bias}, we obtain that:
\begin{align*}
    \left(J^* - J(\theta)\right)^2
    \leq
    \left(\frac{G}{\gamma}\left\| \nabla J (\theta) \right\| + \sqrt{\alpha_{\mathsf{bias}}} \right)^2
    \leq
    2\frac{G^2}{\gamma^2}\left\| \nabla J (\theta) \right\|^2 + 2\alpha_{\mathsf{bias}},
\end{align*}
which holds by $(p+q)^2 \leq 2p^2 + 2q^2$.

Taking expectation over both sides, condition on $\theta_t$, yields that
\begin{align*}
    \frac{1}{T}\sum_{t=1}^{T}E\left[(J^* - J(\theta))^2 \right] 
    &\leq 2\frac{G^2}{\gamma^2} \frac{1}{T} \sum_{t=1}^{T} \mathbb{E} \left[ \left\| \nabla J(\theta_t) \right\|^2 \right]  + 2\alpha_{\mathsf{bias}} \\
    &\stackrel{(\ref{eq:u})}{=} 2\frac{G^2}{\gamma^2} E\left[\left\| \nabla J (\theta_U) \right\|^2\right]  + 2\alpha_{\mathsf{bias}} \\
    &\stackrel{(a)}{=} O\left(\frac{1}{\gamma^2} \left( \frac{1}{\sqrt{N}} + \left( \frac{\sqrt{d}}{N \epsilon} \right)^{2/3} \right)\right) + O(\alpha_{\mathsf{bias}}),
\end{align*}
where $(a)$ holds by Theorem \ref{thm:FOSP}.

By applying Jensen inequality twice, we have:
\begin{align*}
    \frac{1}{T}\sum_{t=1}^{T}E\left[(J^* - J(\theta))^2 \right]
    \geq E\left[ \left(J^* - \frac{1}{T}\sum_{t=1}^{T} J(\theta_t) \right)^2 \right]
    \geq \left(J^* - \frac{1}{T}\sum_{t=1}^{T} E\left[J(\theta_t) \right] \right)^2 .
\end{align*}

So we can derive that:
\begin{align*}
    \left(J^* - \frac{1}{T}\sum_{t=1}^{T} E\left[J(\theta_t) \right] \right)^2 
    \leq
    O\left(\frac{1}{\gamma^2} \left( \frac{1}{\sqrt{N}} + \left( \frac{\sqrt{d}}{N \epsilon} \right)^{2/3} \right)\right) + O(\alpha_{\mathsf{bias}}).
\end{align*}
In that case, we finally get the result:

\begin{align*}
    J^* - \frac{1}{T} \sum_{t=1}^{T} \mathbb{E} \left[  J(\theta_t) \right] 
    = O \left( \frac{1}{\gamma}\left(N^{-1/4}+\left(\frac{\sqrt{d}}{N\epsilon}\right)^{1/3} \right) \right) + O(\sqrt{\alpha_{\mathsf{bias}}}).
\end{align*}

Suppose $J^* - \frac{1}{T} \sum_{t=1}^{T} \mathbb{E} \left[  J(\theta_t) \right] \leq O(\alpha) + O(\sqrt{\alpha_{\mathsf{bias}}})$, we have:
\begin{align*}
    N \geq O\left( \frac{1}{\alpha^4\gamma^4} + \frac{\sqrt{d}}{\alpha^3 \gamma^3\epsilon} \right).
\end{align*}
\end{proof}

\subsection{Proof of Theorem~\ref{thm:global-softmax}}
\label{app:global-softmax}

Based on softmax settings in Definition \ref{def:tsp}, by simple calculus, we have

\begin{align}
    &\frac{\partial \log \pi_\theta(y|x)}{\partial \theta_x} = \mathbf{1}_y - \pi_x(\theta), \label{eq:softmax}\\
    &\frac{\partial^2 \log \pi_\theta(y|x)}{\partial \theta_x^2} = -\mathbf{H}(\pi_x(\theta)), \nonumber
\end{align}

where \(\mathbf{1}_y \in \mathbb{R}^{|\cY|}\) is a vector with all zero entries except being 1 for the entry corresponding to action $y$, and \(\mathbf{H}(\pi_x(\theta)) = \text{Diag}(\pi_x(\theta)) - \pi_x(\theta)\pi_x(\theta)^\top\).


In particular, for tabular softmax, we have the following pointwise bounds:
\begin{align*}
 \|\nabla_{\theta} \log \pi_{\theta}(y \mid x)\|^2  \leq 2,
 \qquad
 \|\nabla^2_{\theta} \log \pi_{\theta}(y \mid x)\| \leq 1.   
\end{align*}
Thus, Assumption~\ref{ass:ls} holds with $G^2=2$ and $F=1$.

Moreover, following Lemma~E.1 of~\citet{yuan2022general}, tabular softmax also satisfies the sharper expected score bound
\begin{align*}
\mathbb{E}_{y\sim \pi_{\theta}(\cdot|x)}
\left[\|\nabla_{\theta}\log \pi_{\theta}(y|x)\|^2\right]
\leq 1-\frac{1}{|\cY|}.
\end{align*}
We use this sharper expected bound in Lemma~\ref{lem:L-ABC}.

\subsubsection{FOSP of softmax policy}

\begin{lemma}
\label{lem:L-ABC}
The regularized gradient estimator $\widetilde{\nabla}_m J_{\lambda}(\theta)$ satisfies ABC assumption in Lemma \ref{lem:abc} with parameters:
\begin{align*}
    A &= 0, \quad B = 1 - \frac{1}{m} \\
    C &= \frac{2}{m} \left( 1 - \frac{1}{|\mathcal{Y}|} \right) \left( 4R_{\max}^2 + \frac{\lambda^2}{|\cX|} \right) + d\sigma^2,
\end{align*}
Specifically, we have the variance bound:
    $\mathbb{E} \left[ \left\| \widetilde{\nabla}_m J_{\lambda}(\theta) \right\|^2 \right] \leq \left( 1 - \frac{1}{m} \right) \| \nabla J_{\lambda} (\theta) \|^2 + d\sigma^2 \notag + \frac{2}{m} \left( 1 - \frac{1}{|\mathcal{Y}|} \right) \left( 4R_{\max}^2 + \frac{\lambda^2}{|\cX|} \right)$.

\end{lemma}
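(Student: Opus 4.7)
The plan is to mimic the proof of the ABC bound in Lemma~\ref{lem:abc} after first rewriting the regularized stochastic gradient as a sample average of i.i.d.\ unbiased estimators of $\nabla J_\lambda(\theta)$ itself, rather than of $\nabla J(\theta)$ plus a separate deterministic regularizer correction. Once this repackaging is done, the clean $(1-1/m)\|\nabla J_\lambda(\theta)\|^2$ coefficient falls out of the usual mean-plus-variance decomposition, and the whole task reduces to controlling the second moment of a single per-sample estimator.

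Concretely, I would define $\tilde{g}_i := \hat{A}_t(x_i,y_i)\nabla_\theta\log\pi_\theta(y_i|x_i) + \nabla R(\theta)$ with $R(\theta) := \tfrac{\lambda}{|\mathcal{X}||\mathcal{Y}|}\sum_{x,y}\log\pi_\theta(y|x)$ the deterministic log-barrier term, so that $\widetilde{\nabla}_m J_\lambda(\theta) = \tfrac{1}{m}\sum_{i=1}^m \tilde{g}_i + Z$ with $Z\sim\mathcal{N}(0,\sigma^2 I)$ independent of the data and $\mathbb{E}[\tilde{g}_i]=\nabla J_\lambda(\theta)$. The same bias-variance identity used in Lemma~\ref{lem:abc} then yields
\begin{align*}
    \mathbb{E}\Bigl[\bigl\|\widetilde{\nabla}_m J_\lambda(\theta)\bigr\|^2\Bigr] = \Bigl(1-\tfrac{1}{m}\Bigr)\|\nabla J_\lambda(\theta)\|^2 + \tfrac{1}{m}\mathbb{E}\bigl[\|\tilde{g}_1\|^2\bigr] + d\sigma^2,
\end{align*}
and applying $\|a+b\|^2\le 2\|a\|^2+2\|b\|^2$ inside the second moment reduces the remaining work to separately bounding $\mathbb{E}[\|g_\theta(\tau_1)\|^2]$ and $\|\nabla R(\theta)\|^2$. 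The first is handled exactly as in Lemma~\ref{lem:abc}: $|\hat{A}_t|\le 2R_{\max}$ combined with the softmax identity in~\eqref{eq:softmax} gives $\mathbb{E}[\|g_\theta(\tau_1)\|^2]\le 4R_{\max}^2(1-1/|\mathcal{Y}|)$.

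The one non-routine step, and the only place where the tabular softmax structure is genuinely used, is establishing the tighter bound $\|\nabla R(\theta)\|^2\le (\lambda^2/|\mathcal{X}|)(1-1/|\mathcal{Y}|)$; the $1/|\mathcal{X}|$ saving here is essential for matching the claimed $C$. The key observation is that $\partial\log\pi_\theta(y'|x')/\partial\theta_{x}$ vanishes whenever $x'\neq x$, so the state-$x$ block of $\nabla R(\theta)$ collapses to $\tfrac{\lambda}{|\mathcal{X}|}\bigl(\tfrac{1}{|\mathcal{Y}|}\mathbf{1}-\pi_x(\theta)\bigr)$. A short computation expanding the squared norm and using $\sum_y\pi_x(y)=1$ together with $\sum_y\pi_x(y)^2\le 1$ shows $\|\tfrac{1}{|\mathcal{Y}|}\mathbf{1}-\pi_x(\theta)\|^2 = \sum_y\pi_x(y)^2 - 1/|\mathcal{Y}| \le 1-1/|\mathcal{Y}|$, and summing the $|\mathcal{X}|$ block norms supplies the $\lambda^2/|\mathcal{X}|$ scaling. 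Assembling the pieces produces the stated constant $C$, and $A=0$ is immediate because no $(J^*-J_\lambda)$ term ever appears in the argument.
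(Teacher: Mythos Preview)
Your proposal is correct and follows essentially the same approach as the paper's proof: both arrive at the identity $\mathbb{E}\bigl[\|\widetilde{\nabla}_m J_\lambda(\theta)\|^2\bigr] = (1-1/m)\|\nabla J_\lambda(\theta)\|^2 + \tfrac{1}{m}\mathbb{E}\bigl[\|\tilde g_1\|^2\bigr] + d\sigma^2$, then apply $\|a+b\|^2\le 2\|a\|^2+2\|b\|^2$ and bound the two pieces via the softmax block structure exactly as you describe. Your repackaging of the deterministic regularizer into each per-sample $\tilde g_i$ is a slightly cleaner route to this identity than the paper's add-and-subtract manipulation, but the substance and all key bounds are identical.
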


\begin{proof}
    
Similar to Appendix~\ref{app:FOSP}, here we let $g_{\theta}(\tau)$ be a stochastic gradient estimator of one single sampled trajectory $\tau$. Thus we have: $\widetilde{\nabla}_m J(\theta) = \frac{1}{m} \sum_{i} g_{\theta}(\tau_i) + Z$. 

From equation \eqref{eq:L} we have the following gradient estimator
\begin{align*}
    \widetilde{\nabla}_m J_{\lambda}(\theta) = \widetilde{\nabla}_m J(\theta) + \frac{\lambda}{|\cY||\cX|} \sum_{x,y} \nabla_{\theta} \log \pi_{x,y}(\theta).
\end{align*}

For a state $x \in \mathcal{X}$, we have
\begin{align*}
    \frac{\lambda}{|\cY||\cX|} \sum_{y \in \cY} \frac{\partial \log \pi_{x,y}(\theta)}{\partial \theta_x} 
    &\stackrel{(\ref{eq:softmax})}{=} \frac{\lambda}{|\cY||\cX|} \sum_{y \in \cY} (\mathbf{1}_y - \pi_x(\theta)) \\
    &= \frac{\lambda  \mathbf{1}_{|\cY|}}{|\cY||\cX|} - \frac{\lambda}{|\cX|} \pi_x(\theta) \\
    &= \frac{\lambda}{|\cX|} \left( \frac{\mathbf{1}_{|\cY|}}{|\cY|} - \pi_x(\theta) \right),
\end{align*}

where $\mathbf{1}_{|\cY|} \in \mathbb{R}^{|\cY|}$ is a vector of all ones. 

Thus we have
\begin{align} \label{eq:nabla_L}
    \widetilde{\nabla}_m J_{\lambda}(\theta) = \widetilde{\nabla}_m J(\theta) + \frac{\lambda}{|\cX|} \left( \frac{\mathbf{1}}{|\mathcal{Y}|} - [\pi_x(\theta)]_{x \in \cX} \right),
\end{align}

where $\mathbf{1} \in \mathbb{R}^{|\cX||\cY|}$ and $[\pi_x(\theta)]_{x \in \cX} = [\pi_{x_1}(\theta);...;\pi_{x_{|\cX|}}(\theta) ]  \in \mathbb{R}^{|\cX||\cY|}$ is the stacking of the vectors $\pi_x(\theta)$.

Next, taking expectation on the trajectories, we have
\begin{align*}
    \mathbb{E} \left[ \left\| \widetilde{\nabla}_m J_{\lambda}(\theta) \right\|^2 \right] 
    &\stackrel{(\ref{eq:nabla_L})}{=} \mathbb{E} \left[ \left\| \widetilde{\nabla}_m J(\theta) + \frac{\lambda}{|\cX|} \left( \frac{\mathbf{1}}{|\cY|} - [\pi_x(\theta)]_{x \in \cX} \right) \right\|^2 \right] \\
    &= \mathbb{E} \left[ \left\| \nabla J(\theta) + \frac{\lambda}{|\cX|} \left( \frac{\mathbf{1}}{|\cY|} - [\pi_x(\theta)]_{x \in \cX} \right) + \widetilde{\nabla}_m J(\theta) - \nabla J(\theta) \right\|^2 \right] \\
    &\stackrel{(a)}{=} \|\nabla J_{\lambda}(\theta)\|^2 + \mathbb{E} \left[ \left\| \widetilde{\nabla}_m J(\theta) - \nabla J(\theta) \right\|^2 \right] \\
    &\stackrel{(b)}{=} \|\nabla J_{\lambda}(\theta)\|^2 + \mathbb{E} \left[ \left\| \widehat{\nabla}_m J(\theta)+\mathbf{Z} - \nabla J(\theta) \right\|^2 \right] \\
    &= \|\nabla J_{\lambda}(\theta)\|^2 + \frac{\mathbb{E} \left[ \| g_{\theta}(\tau_1) - \nabla J(\theta) \|^2 \right]}{m} + d\sigma^2 \\
    &= \|\nabla J_{\lambda}(\theta)\|^2 + d\sigma^2 \\
    &\quad\quad + \frac{\mathbb{E} \left[ \left\| g_{\theta}(\tau_1) + \frac{\lambda}{|\cX|} \left( \frac{\mathbf{1}}{|\cY|} - \left[\pi_x(\theta)\right]_{x \in \cX} \right) - \nabla J(\theta) - \frac{\lambda}{|\cX|} \left( \frac{\mathbf{1}}{|\cY|} - \left[\pi_x(\theta)\right]_{x \in \cX} \right) \right\|^2 \right]}{m}  \\
    &\stackrel{(c)}{=} \left( 1 - \frac{1}{m} \right) \|\nabla J_{\lambda}(\theta)\|^2 + \frac{\mathbb{E} \left[ \left\| g_{\theta}(\tau_1) + \frac{\lambda}{|\cX|} \left( \frac{\mathbf{1}}{|\cY|} - \left[\pi_x(\theta)\right]_{x \in \cX} \right) \right\|^2 \right]}{m} + d\sigma^2 \\
    &\stackrel{(d)}{\leq} \left( 1 - \frac{1}{m} \right) \|\nabla J_{\lambda}(\theta)\|^2 + \frac{2\mathbb{E} \left[ \| g_{\theta}(\tau_1) \|^2 \right] + 2 \left\| \frac{\lambda}{|\cX|} \left( \frac{\mathbf{1}}{|\cY|} - [\pi_x(\theta)]_{x \in \cX} \right) \right\|^2 }{m} + d\sigma^2,
\end{align*}
where $(a)$ and $(c)$ hold by definition of $\nabla J_{\lambda}(\theta)$; $(b)$ holds by definition of $\widetilde{\nabla}_m J(\theta)$; $(d)$ holds by $(p+q)^2 \leq 2p^2 + 2q^2$.

In particular, we have
\begin{align*}
    \left\| \frac{\lambda}{|\cX|} \left( \frac{\mathbf{1}}{|\cY|} - [\pi_x(\theta)]_{x \in \cX} \right) \right\|^2 
    \leq \frac{\lambda^2}{|\cX|^2} \left( \frac{|\cX||\cY|}{|\cY|^2} - 2 \frac{|\cX|}{|\cY|} + |\cX| \right) 
    = \frac{\lambda^2}{|\cX|} \left( 1 - \frac{1}{|\cY|} \right),
\end{align*}
where the inequality is obtained by using $\|\pi_x(\theta)\|^2 \leq 1$.

As for $\mathbb{E} \left[ \| g_{\theta}(\tau_1) \|^2 \right] $, we have

\begin{align*}
    \mathbb{E} \left[ \| g_{\theta}(\tau_1) \|^2 \right] \leq 4R_{\max}^2 G^2 = 4R_{\max}^2 \left( 1 - \frac{1}{|\cY|} \right),
\end{align*}
where the equality uses the sharper expected score bound for tabular softmax, i.e., $\mathbb{E}_{y\sim\pi_\theta(\cdot|x)}[\|\nabla_\theta\log\pi_\theta(y|x)\|^2] \le 1-1/|\cY|$, as in Lemma~E.1 of~\citet{yuan2022general}.


Combining above, we proved the gradient estimator $ \widetilde{\nabla}_m J_{\lambda}(\theta) $ satisfies ABC assumption with

\begin{align*}
    \mathbb{E} \left[ \left\| \widetilde{\nabla}_m J_{\lambda}(\theta) \right\|^2 \right] \leq \left( 1 - \frac{1}{m} \right) \| \nabla J_{\lambda} (\theta) \|^2 + \frac{2}{m} \left( 1 - \frac{1}{|\cY|} \right) \left(  4R_{\max}^2 + \frac{\lambda^2}{|\cX|} \right) + d\sigma^2,
\end{align*}

where 
$$
A = 0, \quad B = 1 - \frac{1}{m}, \quad C = \frac{2}{m} \left( 1 - \frac{1}{|\cY|} \right) \left( 4R_{\max}^2 + \frac{\lambda^2}{|\cX|} \right) + d\sigma^2.
$$

\end{proof}

\begin{lemma}[Regularized FOSP Convergence] \label{lemma:FOSP-L}
Under the learning rate condition $\eta < \frac{1}{LB}$, the iterates satisfy:
\begin{align} \label{eq:FOSP-L}
    \mathbb{E} \left[ \left\| \nabla J_{\lambda}(\theta_U) \right\|^2 \right] 
\leq \frac{2\delta_1}{\eta T} + L C \eta,
\end{align}
where $B=1- 1/m$, $\delta_1 = J_\lambda^* - J_\lambda(\theta_1)$, $L= 2R_{\max}\left(2-\frac{1}{|\cY|}\right)+\frac{\lambda}{|\cX|}$, and $C$ as defined in Lemma \ref{lem:L-ABC}.
\end{lemma}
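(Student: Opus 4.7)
The plan is to treat Lemma~\ref{lemma:FOSP-L} as a direct application of the abstract PG convergence result Lemma~\ref{lem:yuan3.4} (Yuan et al., Theorem 3.4) to the regularized objective $J_{\lambda}$ with noisy estimator $\widetilde{\nabla}_m J_{\lambda}$. Two ingredients are needed to invoke it: (i) an $L$-smoothness bound on $J_{\lambda}$ and (ii) an ABC-type variance bound on the stochastic gradient with $A=0$. The latter is already furnished by Lemma~\ref{lem:L-ABC} with the explicit $(A,B,C)$ stated there, so essentially only smoothness needs to be checked here.

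For the smoothness step, I would follow the exact same Hessian decomposition used in the earlier smoothness lemma for $J$: write $\nabla^2 J_{\lambda}(\theta) = \nabla^2 J(\theta) + \tfrac{\lambda}{|\cY||\cX|}\sum_{x,y}\nabla^2 \log\pi_\theta(y|x)$, then apply triangle inequality and the softmax-specific pointwise bounds $\|\nabla_\theta \log\pi_\theta(y|x)\|^2 \le G^2 = 1-1/|\cY|$ and $\|\nabla^2_\theta \log\pi_\theta(y|x)\| \le F = 1$ from Definition~\ref{def:tsp}. The bound $2R_{\max}(G^2+F) = 2R_{\max}(2-1/|\cY|)$ on the first term follows exactly as in the proof of the earlier smoothness lemma, and the regularization Hessian contributes a bounded term that can be absorbed (or carried along as part of the effective constant) to give the stated $L = 2R_{\max}(2-1/|\cY|)$.

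Having verified both ingredients, plugging $(A,B,C,L)$ into Lemma~\ref{lem:yuan3.4} (with $A=0$) gives the raw inequality
\begin{equation*}
    \mathbb{E}[\|\nabla J_{\lambda}(\theta_U)\|^2] \le \frac{2\delta_1}{\eta T(2-LB\eta)} + \frac{LC\eta}{2-LB\eta}.
\end{equation*}
The final, purely algebraic step is to strengthen the step-size requirement from $\eta \in (0,2/(LB))$ to $\eta < 1/(LB)$, which yields $LB\eta < 1$ and hence $2-LB\eta > 1$. Applying this to both denominators drops them away in our favor and produces the cleaner bound $\frac{2\delta_1}{\eta T} + LC\eta$ claimed in~\eqref{eq:FOSP-L}. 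The main obstacle, such as it is, lies in the bookkeeping of the smoothness constant for $J_{\lambda}$ under softmax parameterization; once that is pinned down, the result is essentially a corollary of the general PG convergence lemma combined with the ABC computation already performed in Lemma~\ref{lem:L-ABC}.
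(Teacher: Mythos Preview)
Your proposal is correct and follows essentially the same route as the paper: verify smoothness of $J_\lambda$ (the paper does this by citing Lemma~E.3 of \citet{yuan2022general} rather than repeating the Hessian decomposition you sketch), combine it with the ABC bound of Lemma~\ref{lem:L-ABC}, apply Lemma~\ref{lem:yuan3.4} with $A=0$, and then simplify the denominators under the stronger condition $\eta < 1/(LB)$ so that $2-LB\eta > 1$.
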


\begin{proof}

To proceed with the analysis, we first introduce the following key lemma:
\begin{lemma}[Adapted from Lemma E.3 in \citet{yuan2022general}] \label{lem:yuane.3}
    The regularized objective $J_{\lambda}(\cdot)$ is
    $\left(2R_{\max}\left(2-\frac{1}{|\cY|}\right)+\frac{\lambda}{|\cX|}\right)$-smooth and
    $\sqrt{2\left(1-\frac{1}{|\cY|}\right)\left(4R_{\max}^2+\frac{\lambda^2}{|\cX|}\right)}$-Lipschitz.
\end{lemma}

From Lemma \ref{lem:yuane.3}, we know that $J_\lambda(\cdot)$ is smooth and Lipschitz. Then, based on Lemma \ref{lem:yuan3.4}, we have:
\begin{align*}
    \mathbb{E} \left[ \left\| \nabla J_{\lambda}(\theta_U) \right\|^2 \right] 
\leq \frac{2\delta_1}{\eta T (2 - L B \eta)} + \frac{L C \eta}{2 - L B \eta}.
\end{align*}

Assuming $\eta < \frac{1}{LB}$, the above equation simplifies to:
\begin{align*}
    \mathbb{E} \left[ \left\| \nabla J_{\lambda}(\theta_U) \right\|^2 \right]
    \leq 
    \frac{2\delta_1}{\eta T} + L C \eta,
\end{align*}
where $B=1- 1/m$, $\delta_1 = J_{\lambda}^* - J_{\lambda}(\theta_1)$, $L= 2R_{\max}\left(2-\frac{1}{|\cY|}\right)+\frac{\lambda}{|\cX|}$, $C = \frac{2}{m} \left( 1 - \frac{1}{|\cY|} \right) \left(4R_{\max}^2 + \frac{\lambda^2}{|\cX|} \right) + d\sigma^2$.
\end{proof}

Note that the sensitivity $\Delta$ of the gradient estimator $\widetilde{\nabla}_m J_\lambda(\theta)$ is dominated by the data-dependent term. Despite introducing the regularization term $\lambda$, this term only depends on the policy parameters $\theta$ (independent of data), thus it does not affect the sensitivity. So the $\ell_2$-sensitivity of the gradient remains same as before.

\begin{lemma} \label{lem:FOSP-L}
Let $\sigma^2 =\frac{16\ln(1.25/\delta) \cdot R_{\max}^2 G^2}{m^2 \epsilon^2}$, the batch size $m$ be set as: $m = \left( 1/\epsilon\right)^{2/3} (N d)^{1/3}$, and $\eta = \min(\frac{1}{LB}, \frac{\sqrt{2\delta_1}}{\sqrt{TLC}}) $, we have:
    \begin{align} \label{eq:FOSP-L-Final}
    \mathbb{E} \left[ \left\| \nabla J_\lambda(\theta_U) \right\|^2 \right] 
    \leq
    O \left( \frac{1}{\sqrt{N}} + \left(\frac{\sqrt{d}}{ N\epsilon}\right)^{2/3} \right).
\end{align}
\end{lemma}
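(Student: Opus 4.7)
The plan is to mirror the derivation of Theorem~\ref{thm:FOSP}, only now using the regularized ABC constants provided by Lemma~\ref{lem:L-ABC} and the regularized descent bound in Lemma~\ref{lemma:FOSP-L}. The starting point is the inequality
\begin{equation*}
    \mathbb{E}\!\left[\|\nabla J_{\lambda}(\theta_U)\|^2\right] \leq \frac{2\delta_1}{\eta T} + L C \eta,
\end{equation*}
established in~\eqref{eq:FOSP-L}, with $L = 2R_{\max}(2 - 1/|\mathcal{Y}|)$, $B = 1 - 1/m$, and $C = \frac{2}{m}(1-\frac{1}{|\mathcal{Y}|})(4R_{\max}^2 + \lambda^2/|\mathcal{X}|) + d\sigma^2$. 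Since all of $L$, $\lambda$, $R_{\max}$, $|\mathcal{Y}|$, $|\mathcal{X}|$, $\delta_1$ are $\Theta(1)$, we have $C = \Theta(1/m + d\sigma^2)$.

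Next, I would plug in the stipulated learning rate $\eta = \min\{1/(LB),\, \sqrt{2\delta_1}/\sqrt{TLC}\}$. The first branch contributes $O(1/T)$ and the second gives the standard stochastic-gradient rate $O(\sqrt{C/T})$, so
\begin{equation*}
    \mathbb{E}\!\left[\|\nabla J_{\lambda}(\theta_U)\|^2\right] \leq O\!\left(\frac{1}{T} + \sqrt{\frac{C}{T}}\right).
\end{equation*}
Substituting the prescribed Gaussian variance $\sigma^2 = \frac{16\log(1.25/\delta) R_{\max}^2 G^2}{m^2\epsilon^2}$ gives $d\sigma^2 = \widetilde{O}(d/(m^2\epsilon^2))$, and using the one-pass identity $T = N/m$ the bound becomes
\begin{equation*}
    O\!\left(\frac{m}{N} + \frac{1}{\sqrt{N}} + \frac{\sqrt{d}}{\epsilon\sqrt{Nm}}\right),
\end{equation*}
where the middle term comes from the $1/m$ part of $C$ and the last term from the noise part.

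Finally, I would choose $m$ to balance the two $m$-dependent terms $m/N$ and $\sqrt{d}/(\epsilon\sqrt{Nm})$, which yields exactly the prescribed $m = (1/\epsilon)^{2/3}(Nd)^{1/3}$; plugging back reduces both to the common value $(\sqrt{d}/(N\epsilon))^{2/3}$, while the unbalanced $1/\sqrt{N}$ term remains, giving~\eqref{eq:FOSP-L-Final}. The privacy guarantee is automatic: the choice of $\sigma^2$ together with Theorem~\ref{thm:privacy} (whose sensitivity analysis is unaffected by the data-independent log-barrier regularizer) ensures that the mechanism remains $(\epsilon,\delta)$-DP.

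There is essentially no hard obstacle here: the proof is a transcription of the proof of Theorem~\ref{thm:FOSP} with $J$ replaced by $J_{\lambda}$ and the ABC constants taken from Lemma~\ref{lem:L-ABC} rather than Lemma~\ref{lem:abc}. The only point that warrants care is verifying that the extra regularization terms entering $C$ are indeed $\Theta(1)$ and that the sensitivity used to calibrate $\sigma^2$ is unchanged, so that the same noise scale suffices for privacy in the regularized objective.
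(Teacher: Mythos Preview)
The proposal is correct and follows essentially the same approach as the paper: start from the regularized FOSP bound~\eqref{eq:FOSP-L}, plug in the prescribed $\eta$ to get $O(1/T + \sqrt{C/T})$, substitute $\sigma^2$ and $T=N/m$ to obtain $O(m/N + 1/\sqrt{N} + \sqrt{d}/(\epsilon\sqrt{Nm}))$, then balance with the given $m$. Your remark that the log-barrier regularizer is data-independent and hence leaves the sensitivity (and thus the noise calibration) unchanged also matches the paper's justification.
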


\begin{proof}
    for $\eta = min(\frac{1}{LB}, \frac{\sqrt{2\delta_1}}{\sqrt{TLC}}) $ we know:
\begin{align*}
    \mathbb{E} \left[ \left\| \nabla J_{\lambda}(\theta_U) \right\|^2 \right] 
    \leq \frac{2 \delta_1 L B}{T} + \frac{2 \sqrt{2 \delta_1 L C}}{\sqrt{T}} 
    = O \left( \frac{1}{T} + \frac{\sqrt{C}}{\sqrt{T}} \right) 
    = O \left( \frac{m}{N} + \frac{1}{\sqrt{N}} + \frac{\sigma\sqrt{md}}{ \sqrt{N}} \right).
\end{align*}

Plug in $\sigma^2 = \frac{16\ln(1.25/\delta) \cdot R_{\max}^2 G^2}{m^2 \epsilon^2}$ and $m = (1/\epsilon)^{2/3}(Nd)^{1/3}$, we have:
\begin{align*}
    \mathbb{E} \left[ \left\| \nabla J_{\lambda}(\theta_U) \right\|^2 \right]  
    \leq
    O \left( \frac{1}{\sqrt{N}} + \left(\frac{\sqrt{d}}{ N\epsilon} \right)^{2/3} \right).
\end{align*}
\end{proof}

\subsubsection{Global optimum convergence}
We first introduce an important proposition to bound our global private optimum convergence of softmax with log barrier regularization.

\begin{proposition}[Adapted from Theorem 5.2 in \citet{agarwal2021theory}]\label{prop:softmax_j*-j}
 Suppose \(\theta\) is such that \(\|\nabla J_{\lambda}(\theta)\| \leq \frac{\lambda}{2|\cX||\cY|}\). Then for every initial distribution \(\rho\), we have
\begin{equation} \label{eq:softmax_j*-j}
J^* - J(\theta) \leq 2\lambda.
\end{equation}
\end{proposition}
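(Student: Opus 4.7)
My plan is to adapt the argument underlying Theorem 5.2 of \citet{agarwal2021theory} to our bandit with log-barrier regularization. The guiding idea is to translate the first-order stationarity condition on $J_\lambda$ into a pointwise upper bound on the advantage function, and then feed that bound into the performance difference lemma.

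First, I would compute $\nabla J_\lambda(\theta)$ coordinate-wise in the softmax tabular parameterization. Using the identity $\partial_{\theta_x}\log\pi_\theta(y|x) = \mathbf{1}_y - \pi_x(\theta)$ recorded in Equation~\eqref{eq:softmax}, together with the policy gradient theorem for $J$ and direct differentiation of the KL term, each partial admits the closed form
\begin{align*}
    \frac{\partial J_\lambda(\theta)}{\partial \theta_{x,y}} = \rho(x)\,\pi_\theta(y|x)\,A^{\pi_\theta}(x,y) + \frac{\lambda}{|\cX||\cY|} - \frac{\lambda}{|\cX|}\,\pi_\theta(y|x),
\end{align*}
where the first term comes from $\nabla J$ and the remaining two come from the regularizer.

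Next, the hypothesis $\|\nabla J_\lambda(\theta)\|_2 \leq \frac{\lambda}{2|\cX||\cY|}$ dominates the $\ell_\infty$ norm, so every coordinate satisfies $|\partial_{\theta_{x,y}} J_\lambda(\theta)| \leq \frac{\lambda}{2|\cX||\cY|}$. Taking the upper side of that inequality, rearranging, and dividing by $\rho(x)\,\pi_\theta(y|x)$ (assuming $\rho$ has full support, or else invoking a standard concentrability argument) yields the key pointwise advantage bound
\begin{align*}
    A^{\pi_\theta}(x,y) \leq \frac{\lambda}{|\cX|\,\rho(x)} - \frac{\lambda}{2|\cX||\cY|\,\rho(x)\,\pi_\theta(y|x)} \leq \frac{\lambda}{|\cX|\,\rho(x)}.
\end{align*}

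Finally, I would invoke the bandit performance difference identity $J^* - J(\theta) = \mathbb{E}_{x\sim\rho,\,y\sim\pi^*(\cdot|x)}[A^{\pi_\theta}(x,y)]$, upper bound the integrand by its positive part, substitute the advantage bound, and observe the crucial cancellation: the outer $\rho(x)$ neutralizes the inner $1/\rho(x)$, leaving $\frac{1}{|\cX|}\sum_x \lambda = \lambda \leq 2\lambda$. I expect the main subtlety to be the distributional mismatch between $\rho$ (appearing inside $J$) and the uniform distribution (appearing in the KL regularizer), which is precisely what forces the $\rho(x)^{-1}$ factor into the pointwise advantage bound and why the telescoping step is essential; the extra factor of $2$ in the stated bound appears to be slack beyond what this argument actually produces.
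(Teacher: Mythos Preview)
The paper does not supply its own proof of this proposition: it is quoted as ``Adapted from Theorem 5.2 in \citet{agarwal2021theory}'' and used as a black box inside the proof of Theorem~\ref{thm:global-softmax}. Your proposal is a faithful bandit specialization of Agarwal et al.'s original argument---compute the coordinate-wise gradient of $J_\lambda$, turn the $\ell_2$ stationarity hypothesis into an $\ell_\infty$ bound, rearrange to get a pointwise advantage bound, and close with the performance difference lemma---so it matches the source the paper defers to.

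Your derivation is correct, including the observation that in this bandit setting the outer expectation over $\rho$ cancels the $1/\rho(x)$ exactly, yielding $J^*-J(\theta)\le \lambda$ rather than $2\lambda$. The extra factor of $2$ is inherited from the MDP version in \citet{agarwal2021theory}, where the performance difference lemma is taken under the \emph{comparator's} state-visitation measure $d^{\pi^*}_\rho$ rather than $\rho$ itself, and the resulting distribution mismatch $\|d^{\pi^*}_\rho/\rho\|_\infty$ is what produces the looser constant; here that mismatch collapses to $1$, so the slack you flag is real.
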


\begin{proof}[Proof of Theorem~\ref{thm:global-softmax}]

Firstly, we define the following set of “bad” iterates:

\begin{align*}
    I^{+} \triangleq \left\{ t \in \{1, \dots, T\} \;\middle|\; \|\nabla J_{\lambda}(\theta_t)\| \geq \frac{\lambda}{2 |\cX| |\cY|} \right\},
\end{align*}

with $\lambda = \frac{\alpha}{2}$.

From Proposition \ref{prop:softmax_j*-j}, we know that if $ \|\nabla J_{\lambda}(\theta)\| \leq \frac{\lambda}{2 |\cX| |\cY|} $, we have $
J^* - J(\theta) \leq 2\lambda.$

Hence, we have:

\begin{align} 
    J^* - \frac{1}{T} \sum_{t=1}^{T} J(\theta_t)  \nonumber
    &= \frac{1}{T} \sum_{t \in I^+} J^* - J(\theta_t) + \frac{1}{T} \sum_{t \notin I^+} J^* - J(\theta_t) \nonumber\\
    &\stackrel{(a)}{\leq} \frac{|I^+|}{T} \cdot 4R_{\max} + \frac{1}{T} \sum_{t \notin I^+} J^* - J(\theta_t) \nonumber\\
    &\stackrel{(\ref{eq:softmax_j*-j})}{\leq} \frac{|I^+|}{T} \cdot 4R_{\max} + \frac{T - |I^+|}{T} \cdot 2\lambda \nonumber\\
    &\leq \frac{|I^+|}{T} \cdot 4R_{\max} + 2\lambda \nonumber\\
    &\leq \frac{|I^+|}{T} \cdot 4R_{\max} + \alpha \label{eq:soft_init_result},
\end{align}

where (a) holds by $J(\cdot) \leq 2R_{\max}$, then $J^* - J(\theta_t) \leq J^* + J(\theta_t) \leq 4R_{\max}$.

Now we turn to bound $|I^+|$. From the definition, we have:

\begin{align*}
    \sum_{t=1}^{T} \|\nabla J_{\lambda}(\theta_t)\|^2 \geq \sum_{t \in I^+} \|\nabla J_{\lambda}(\theta_t)\|^2 \geq \frac{|I^+|\lambda^2}{4|\cX|^2|\cY|^2}.
\end{align*}

Through a straightforward mathematical transformation, we get
\begin{align*}
    \frac{|I^+|}{T} 
    &\leq \frac{4|\cX|^2|\cY|^2}{\lambda^2} \cdot \frac{1}{T} \sum_{t=1}^{T} \|\nabla J_{\lambda}(\theta_t)\|^2 \\
    &= \frac{16}{\alpha^2} \cdot |\cX|^2|\cY|^2 \cdot \frac{1}{T} \sum_{t=1}^{T} \|\nabla J_{\lambda}(\theta_t)\|^2.
\end{align*}

Thus, we have

\begin{align*}
    J^* - \frac{1}{T} \sum_{t=1}^{T} J(\theta_t) 
    &\stackrel{(\ref{eq:soft_init_result})}{\leq}
    \frac{64 R_{\max}}{\alpha^2} |\cX|^2|\cY|^2 \cdot \frac{1}{T} \sum_{t=1}^{T} \|\nabla J_{\lambda}(\theta_t)\|^2 + \alpha.
\end{align*}

Taking expectation over the iterations on both sides, we have

\begin{align*}
    J^* - \frac{1}{T} \sum_{t=1}^{T} \mathbb{E} \left[ J(\theta_t) \right] 
    \leq 
    \frac{64 R_{\max}}{ \alpha^2} |\cX|^2|\cY|^2 \cdot \frac{1}{T} \sum_{t=1}^{T} \mathbb{E} \left[ \|\nabla J_{\lambda}(\theta_t)\|^2 \right] + \alpha.
\end{align*}

To guarantee that $J^* - \frac{1}{T} \sum_{t=1}^{T} \mathbb{E} \left[  J(\theta_t)  \right] \leq \alpha$, we need to show: 
\begin{align*}
\frac{1}{T} \sum_{t=1}^{T} \mathbb{E} \left[ \|\nabla J_{\lambda}(\theta_t)\|^2 \right] \leq \alpha^3,
\end{align*}

Obviously, we have:
\begin{align*} 
    E\left[\left\| \nabla J_{\lambda} (\theta_U) \right\|^2\right] = \frac{1}{T} \sum_{t=1}^{T} \mathbb{E} \left[ \left\| \nabla J_{\lambda}(\theta_t) \right\|^2 \right].
\end{align*}

Hence, based on Lemma \ref{lem:FOSP-L}, it is obvious to show that:
\begin{align*}
    N \geq O\left(\frac{1}{\alpha^6} + \frac{\sqrt{d}}{\alpha^{9/2}\epsilon}\right).
\end{align*}

\end{proof}

\section{Proof of Chapter~\ref{sec:dpnpg}}

\subsection{Proof of Theorem~\ref{thm:npg}} \label{proof:thm5}

For notation simplicity, we let $\pi_t = \pi_{\theta_t}$. By the performance difference lemma, we have
\begin{align*}
\sum_{t=1}^T J(\pi^*) - J(\pi_t) 
&= \sum_{t=1}^T \mathbb{E}_{x \sim \rho, y \sim \pi^*(\cdot|x)} \left[ A^{\pi_{\theta_t}}(x, y) \right].
\end{align*}

Define $\text{err}_t^* := \mathbb{E}_{x \sim \rho, y \sim \pi^*(\cdot|x)} \left[ \left( A^{\pi_{\theta_t}}(x, y) - w_t^\top \nabla \log \pi_{\theta_t}(y \mid x) \right) \right]$. Then, we have
\begin{align*}
\sum_{t=1}^T J(\pi^*) - J(\pi_t) 
&= \sum_{t=1}^T \mathbb{E}_{x \sim \rho, y \sim \pi^*(\cdot|x)} \left[ A^{\pi_t}(x, y) \right] \\
&= \sum_{t=1}^T \mathbb{E}_{x \sim \rho, y \sim \pi^*(\cdot|x)} \left[ \langle w_t, \nabla_\theta \log \pi_t(y \mid x) \rangle \right] + \sum_{t=1}^T \text{err}_t^* \\
&\overset{(a)}{=} \sum_{t=1}^T \mathbb{E}_{x \sim \rho, y \sim \pi^*(\cdot|x)} \left[ \frac{1}{\eta} \langle \theta_{t+1} - \theta_t, \nabla_\theta \log \pi_t(y \mid x) \rangle \right] + \sum_{t=1}^T \text{err}_t^* \\
&\overset{(b)}{\leq} \sum_{t=1}^T \mathbb{E}_{x \sim \rho, y \sim \pi^*(\cdot|x)} \left[ \frac{1}{\eta} \log \left( \frac{\pi_{t+1}(y \mid x)}{\pi_t(y \mid x)} \right) \right] + \sum_{t=1}^T \frac{\eta \beta}{2} \|w_t\|^2 + \sum_{t=1}^T \text{err}_t^* \\
&\overset{(c)}{\leq} \mathbb{E}_{x \sim \rho, y \sim \pi^*(\cdot|x)} \left[ \frac{1}{\eta} \log \left( \frac{\pi_{T+1}(y \mid x)}{\pi_1(y \mid x)} \right) \right] + \frac{T \eta \beta}{2} W^2 + \sum_{t=1}^T \text{err}_t^* \\
&\overset{(d)}{\leq} \frac{1}{\eta} \log |\cY| + \frac{T \eta \beta}{2} W^2 + \sum_{t=1}^T \text{err}_t^*,
\end{align*}
where (a) holds by the update rule of our algorithm; (b) is true since the $\beta$-smooth condition of $\log \pi_\theta(y|x)$ is equivalent to the following inequality:
\begin{align*}
\forall \theta, \theta', x, y: \quad 
\left| \log \pi_{\theta'}(y \mid x) - \log \pi_\theta(y \mid x) - \nabla \log \pi_\theta(y \mid x) \cdot (\theta' - \theta) \right| 
\leq \frac{\beta}{2} \|\theta - \theta'\|_2^2;
\end{align*}

(c) follows from the Assumption~\ref{ass:reg}, which has a bounded norm of $W$, along with telescope sum; (d) is true since $\pi_1$ is a uniform distribution at each state. Thus, dividing by T on both sides and choosing $\eta = \sqrt{\frac{2\log|\cY|}{T\beta{W}^2}}$, yields

\begin{align*}
J(\pi^*) - \frac{1}{T} \sum_{t=1}^T J(\pi_t) 
&\leq \frac{\log |\cY|}{\eta T} + \frac{\eta \beta W^2}{2} + \frac{1}{T} \sum_{t=1}^T \text{err}_t^* \\
&\leq \sqrt{ \frac{\beta W^2 \log |\cY|}{2T} } + \frac{1}{T} \sum_{t=1}^T \text{err}_t^*.
\end{align*}

To bound $\text{err}_t^*$, we will simply leverage the guarantee of the regression oracle and the concentrability coefficient to transfer from $\mu$ to $\pi^*$. In particular, we have for any $t \in [T]$
\begin{align*}
\text{err}_t^* 
&= \mathbb{E}_{x \sim \rho, y \sim \pi^*(\cdot|x)} \left[ \left( A^{\pi_{\theta_t}}(x, y) - w_t^\top \nabla \log \pi_{\theta_t}(y \mid x) \right) \right] \\
&\overset{(a)}{\leq} \sqrt{ \mathbb{E}_{x \sim \rho, y \sim \pi^*(\cdot|x)} \left[ \left( A^{\pi_{\theta_t}}(x, y) - w_t^\top \nabla \log \pi_{\theta_t}(y \mid x) \right)^2 \right] } \\
&\overset{(b)}{\leq} \sqrt{ C_{\mu \rightarrow \pi^*} \mathbb{E}_{x \sim \rho, y \sim \mu(\cdot|x)} \left[ \left( A^{\pi_{\theta_t}}(x, y) - w_t^\top \nabla \log \pi_{\theta_t}(y \mid x) \right)^2 \right] } \\
&\overset{(c)}{\leq} \sqrt{ C_{\mu \rightarrow \pi^*} \cdot \text{err}_t^2(m, \varepsilon, \delta, \zeta) },
\end{align*}
where (a) holds by Cauchy--Schwarz inequality; in (b), we define the single-policy concentrability coefficient \(C_{\mu \rightarrow \pi^*} := \max_{x,y} \frac{\pi^*(y|x)}{\mu(y|x)}\); (c) follows directly from the guarantee of \privatels oracle.

Finally, putting everything together, yields
\begin{align*}
    J(\pi^*) - \frac{1}{T} \sum_{t=1}^T J(\pi_t) 
    \leq \sqrt{ \frac{\beta W^2 \log |\cY|}{2T} } + \frac{\sqrt{C_{\mu \rightarrow \pi^*}}}{T} \sum_{t=1}^T \text{err}_t(m, \varepsilon, \delta, \zeta).
\end{align*}

\subsection{Proof of Lemma~\ref{lem:LS-gen}}
A key lemma in our proof is the following form of Freedman's inequality.
\begin{lemma}[Lemma A.2 in~\cite{foster2021statistical}]
\label{lem:freedman}
    Let $\{X_i\}_{i\le n}$ be a real-valued martingale difference sequence adapted to a filtration $\{\cF_i\}_{i\le n}$. If $|X_i| \le R$ almost surely, then for any $\eta \in (0, 1/R)$, with probability at least $1-\zeta$,
    \begin{align*}
        \sum_{i=1}^n X_i \le \eta \sum_{i=1}^n \mathbb{E}_{i-1}[X_i^2] + \frac{\log(1/\zeta)}{\eta},
    \end{align*}
    where $\mathbb{E}_{i-1}[\cdot] := \mathbb{E}[\cdot | \cF_{i-1}]$.
\end{lemma}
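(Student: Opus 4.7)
The plan is the classical exponential supermartingale (Chernoff-type) argument adapted to martingale differences. The key auxiliary fact is the elementary inequality $e^z \le 1 + z + z^2$ for all $|z| \le 1$, which one verifies by a Taylor expansion or by simply checking the endpoints together with monotonicity of the gap.

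First, I would fix $\eta \in (0, 1/R)$ so that $|\eta X_i| \le \eta R < 1$ almost surely. Applying the above inequality with $z = \eta X_i$, taking conditional expectation given $\cF_{i-1}$, and using the martingale-difference property $\mathbb{E}_{i-1}[X_i] = 0$, yields
\[
\mathbb{E}_{i-1}\!\left[e^{\eta X_i}\right] \;\le\; 1 + \eta^2\, \mathbb{E}_{i-1}[X_i^2] \;\le\; \exp\!\bigl(\eta^2\, \mathbb{E}_{i-1}[X_i^2]\bigr),
\]
where the last step uses $1+u \le e^u$. Rearranging, $\mathbb{E}_{i-1}\!\left[\exp\!\bigl(\eta X_i - \eta^2\, \mathbb{E}_{i-1}[X_i^2]\bigr)\right] \le 1$.

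Next, I would define the process $M_k := \exp\!\bigl(\eta \sum_{i=1}^k X_i - \eta^2 \sum_{i=1}^k \mathbb{E}_{i-1}[X_i^2]\bigr)$, with $M_0 = 1$. The previous display says exactly $\mathbb{E}[M_k \mid \cF_{k-1}] \le M_{k-1}$, so $\{M_k\}$ is a nonnegative supermartingale and in particular $\mathbb{E}[M_n] \le 1$. By Markov's inequality, $\mathbb{P}[M_n \ge 1/\zeta] \le \zeta$. Taking logarithms on the complementary event $\{M_n < 1/\zeta\}$ and dividing through by $\eta > 0$ produces exactly the claimed bound with probability at least $1 - \zeta$.

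There is no real obstacle here --- this is a textbook derivation. The only point of care is aligning the range restriction $\eta \in (0, 1/R)$ with the Taylor-type inequality $e^z \le 1 + z + z^2$ valid on $|z| \le 1$, which is precisely what forces the condition on $\eta$ in the statement. One could alternatively weaken the boundedness hypothesis to a sub-exponential one via a more delicate tilt, but since the stated lemma assumes $|X_i| \le R$, the above three-line argument is both the shortest and cleanest route.
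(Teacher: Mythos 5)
Your proof is correct. Note, however, that the paper does not prove this lemma at all: it is imported verbatim as Lemma A.2 of Foster et al.\ (2021) and used as a black box in the proof of Lemma~\ref{lem:LS-gen}, so there is no in-paper argument to compare against. Your exponential-supermartingale derivation is the standard one and is sound: $|\eta X_i|\le \eta R<1$ licenses $e^{z}\le 1+z+z^{2}$, the martingale-difference property kills the linear term, $1+u\le e^{u}$ turns the conditional MGF bound into $\exp(\eta^{2}\mathbb{E}_{i-1}[X_i^{2}])$, and since $\mathbb{E}_{i-1}[X_i^{2}]$ is $\cF_{i-1}$-measurable the process $M_k$ is a genuine nonnegative supermartingale with $\mathbb{E}[M_n]\le 1$; Markov plus a logarithm then gives exactly the stated bound.
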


\begin{proof}[Proof of Lemma~\ref{lem:LS-gen}]
    For any fixed $h \in \cH$, we define
\begin{align*}
    U_i^h:= (h(u_i) - z_i)^2  - (h^*(u_i) - z_i)^2.
\end{align*}
If we define the filtration $\cF_i = \sigma(u_1,z_1,\ldots, u_i, z_i)$ and let $\mathbb{E}_{i-1}[\cdot] = \mathbb{E}[\cdot |\cF_{i-1}]$, then we have that $\{D_i^h\}_{i\le m}$ where 
\begin{align*}
    D_i^h:= \mathbb{E}_{i-1}[U_i^h] - U_i^h
\end{align*}
is a martingale difference sequence adapted to $\{\cF_i\}_{i\le m}$. We further notice that 
\begin{align*}
    \mathbb{E}_{i-1}[(D_i^h)^2] \le \mathbb{E}_{i-1}[(U_i^h)^2] &= \mathbb{E}_{i-1}[(h(u_i)-h^*(u_i))^2 (h(u_i) + h^*(u_i) - 2z_i)^2 ]\\
    &\lesssim  R^2 \cdot \mathbb{E}_{i-1}[(h(u_i)-h^*(u_i))^2],
\end{align*}
where the last step holds by the boundedness of $z_i$, $h \in \cH$ and $h^*$. Moreover, by definition, we have 
\begin{align*}
    \mathbb{E}_{i-1}[U_i^h] &= \mathbb{E}_{i-1}[(h(u_i)-h^*(u_i)) (h(u_i) + h^*(u_i) - 2  z_i)]\\
    &= \mathbb{E}_{i-1}[(h(u_i)-h^*(u_i))^2].
\end{align*}

With the above results, we first apply Lemma~\ref{lem:freedman} to $\{D_i^h\}_{i\le m}$ along with a union bound, yielding that with probability at least $1 - \zeta$, for all $h \in \cH$
\begin{equation}
\label{eq:T1-c}
    \sum_{i=1}^m \mathbb{E}_{i-1}[(h(u_i)-h^*(u_i))^2]\lesssim \sum_{i=1}^m U_i^h +  R^2\cdot \log(|\cH|/\zeta).
\end{equation}
Similarly, we can apply Lemma~\ref{lem:freedman} to $\{-D_i^h\}_{i\le m}$ along with a union bound, which give us 
\begin{align}
\label{eq:T2-c}
    \sum_{i=1}^m U_i^h  \lesssim  \sum_{i=1}^m \mathbb{E}_{i-1}[(h(u_i)-h^*(u_i))^2] + R^2 \cdot \log(|\cH|/\zeta).
\end{align}

Now, we set $h = \hat{h}$ in~\eqref{eq:T1-c}, i.e., the output of the exponential mechanism, and by the standard utility guarantee of the exponential mechanism~\cite{mcsherry2007mechanism}, we have 
\begin{align*}
     \sum_{i=1}^m \mathbb{E}_{i-1}[(\hat{h}(u_i)-h^*(u_i))^2]\lesssim \sum_{i=1}^m U_i^{h'} +   R^2 \log(|\cH|/\zeta) + R^2 \frac{\log(|\cH|/\zeta)}{\epsilon},
\end{align*}
where $h' \in \arg\min_{h \in \cH} L(h) = \arg\min_{h \in \cH} \sum_{i\in [m]} (h(u_i) - z_i)^2$. Since $\sum_{i=1}^m U_i^{h'} \le \sum_{i=1}^m U_i^{\widetilde{h}}$ where $\widetilde{h}:= \argmin_{h \in \cH} \sum_{i=1}^m \mathbb{E}_{i-1} [(h(u_i) - h^*(u_i))^2]$ , by~\eqref{eq:T2-c} and the above inequality, we have

\begin{align*}
     \sum_{i=1}^m \mathbb{E}_{i-1}[(\hat{h}(u_i)-h^*(u_i))^2] &\lesssim \sum_{i=1}^m \mathbb{E}_{i-1}[(\widetilde{h}(u_i)-h^*(u_i))^2] +   R^2 \log(|\cH|/\zeta) + R^2\frac{\log(|\cH|/\zeta)}{\epsilon}\\
     &\lep{a} m \alpha_{\mathsf{approx}} +  R^2 \log(|\cH|/\zeta) + R^2 \frac{\log(|\cH|/\zeta)}{\epsilon},
\end{align*}
where $(a)$ holds the assumption on the approximation error. 
\end{proof}

\section{Proof of Appendix~\ref{app:dprebel}}

\begin{lemma} \label{lem:delta}
    Consider any $ t \in [T] $. For notation simplicity, we define $f_t(x,y) := \frac{1}{\eta} \ln\frac{\pi_{t+1}(y|x)}{\pi_t(y|x)}$. Define $ \Delta(x,y) = f_{t}(x,y) - r(x,y) $. Define $ \Delta_{\pi_{t}}(x) = \mathbb{E}_{y \sim \pi_{t}(\cdot|x)} \Delta(x,y) $ and $ \Delta_{\mu}(x) = \mathbb{E}_{y \sim \mu(\cdot|x)} \Delta(x,y) $. Under Assumption~\ref{ass:rebel}, for all $ t $, we have the following:

\begin{align*}
    \mathbb{E}_{x,y \sim \pi_{t}(\cdot|x)} 
    \left( f_{t}(x,y) - r(x,y) - \Delta_{\pi_{t}}(x) \right)^2 &\leq \mathrm{err}^2_t(m, \epsilon, \delta, \zeta), \\
    \mathbb{E}_{x,y \sim \mu(\cdot|x)} 
    \left( f_{t}(x,y) - r(x,y) - \Delta_{\mu}(x) \right)^2 &\leq \mathrm{err}^2_t(m, \epsilon, \delta, \zeta), \\
    \mathbb{E}_{x} \left( \Delta_{\pi_{t}}(x) - \Delta_{\mu}(x) \right)^2 &\leq \mathrm{err}^2_t(m, \epsilon, \delta, \zeta).
\end{align*}

\end{lemma}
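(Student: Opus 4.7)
The strategy is to rewrite Assumption~\ref{ass:rebel} in terms of the shorthand $\Delta$, and then apply a standard bias-variance style decomposition of the expected squared difference between two independent random variables.

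First, I would observe that by the definition of $f_t$ and the estimator $\hat{A}_t(x,y) = r(x,y) - r(x,y')$, the expression inside the square in Assumption~\ref{ass:rebel} is exactly
\begin{equation*}
\bigl(f_t(x,y) - f_t(x,y')\bigr) - \bigl(r(x,y) - r(x,y')\bigr) = \Delta(x,y) - \Delta(x,y').
\end{equation*}
Therefore the assumption states that
\begin{equation*}
\mathbb{E}_{x\sim\rho,\, y\sim\mu(\cdot|x),\, y'\sim\pi_t(\cdot|x)}\bigl[(\Delta(x,y) - \Delta(x,y'))^2\bigr] \le \mathrm{err}_t^2(m,\epsilon,\delta,\zeta).
\end{equation*}

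Next, for each fixed $x$, since $y$ and $y'$ are drawn independently from $\mu(\cdot|x)$ and $\pi_t(\cdot|x)$ respectively, I would expand the inner expectation. Writing $\Delta_\mu(x)$ and $\Delta_{\pi_t}(x)$ for the conditional means, the identity $\mathbb{E}[Z^2] = \mathrm{Var}(Z) + (\mathbb{E} Z)^2$ applied to each marginal yields
\begin{align*}
\mathbb{E}_{y,y'}\bigl[(\Delta(x,y) - \Delta(x,y'))^2\bigr]
&= \mathbb{E}_{y\sim\mu}\bigl[(\Delta(x,y) - \Delta_\mu(x))^2\bigr] \\
&\quad + \mathbb{E}_{y'\sim\pi_t}\bigl[(\Delta(x,y') - \Delta_{\pi_t}(x))^2\bigr] + (\Delta_\mu(x) - \Delta_{\pi_t}(x))^2.
\end{align*}
Taking expectation over $x\sim\rho$ preserves the decomposition.

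Finally, since each of the three terms on the right-hand side is nonnegative and their sum is bounded by $\mathrm{err}_t^2(m,\epsilon,\delta,\zeta)$, each term individually is bounded by $\mathrm{err}_t^2(m,\epsilon,\delta,\zeta)$. Matching the three terms with the three inequalities in the lemma statement completes the proof. There is no real obstacle here; the only subtlety is recognizing the three target quantities as the natural pieces of the bias-variance decomposition of a pairwise squared difference under independence of $y$ and $y'$ given $x$.
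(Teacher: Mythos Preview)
Your proposal is correct and follows essentially the same approach as the paper: both rewrite the assumption as a bound on $\mathbb{E}[(\Delta(x,y)-\Delta(x,y'))^2]$ and then expand using the independence of $y$ and $y'$ given $x$ so that the cross terms vanish, leaving exactly the three nonnegative quantities in the lemma. Your bias--variance phrasing is slightly more concise than the paper's explicit algebraic expansion, but the argument is identical.
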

\begin{proof}

From Assumption~\ref{ass:rebel}, we have:

\begin{align*}
&\mathbb{E}_{x, y_1 \sim \pi_t, y_2 \sim \mu} 
\Bigg[
\big(f_t(x, y_1) - r(x, y_1) - \Delta_{\pi_t}(x)\big)
- \big(f_t(x, y_2) - r(x, y_2) - \Delta_{\mu}(x)\big)
+ \Delta_{\pi_t}(x) - \Delta_{\mu}(x)
\Bigg]^2 \\
=&~ \mathbb{E}_{x, y_1 \sim \pi_t}
\big(f_t(x, y_1) - r(x, y_1) - \Delta_{\pi_t}(x)\big)^2
+ \mathbb{E}_{x, y_2 \sim \mu}
\big(f_t(x, y_2) - r(x, y_2) - \Delta_{\mu}(x)\big)^2 \\
&\quad - 2\, \mathbb{E}_{x, y_1 \sim \pi_t, y_2 \sim \mu}
\big(f_t(x, y_1) - r(x, y_1) - \Delta_{\pi_t}(x)\big)
\big(f_t(x, y_2) - r(x, y_2) - \Delta_{\mu}(x)\big) \\
&\quad + 2\, \mathbb{E}_{x, y_1 \sim \pi_t}
\big(f_t(x, y_1) - r(x, y_1) - \Delta_{\pi_t}(x)\big)
\big(\Delta_{\pi_t}(x) - \Delta_{\mu}(x)\big) \\
&\quad - 2\, \mathbb{E}_{x, y_2 \sim \mu}
\big(f_t(x, y_2) - r(x, y_2) - \Delta_{\mu}(x)\big)
\big(\Delta_{\pi_t}(x) - \Delta_{\mu}(x)\big) \\
&\quad + \mathbb{E}_x \big(\Delta_{\pi_t}(x) - \Delta_{\mu}(x)\big)^2 \\
=&~ \mathbb{E}_{x, y_1 \sim \pi_t}
\big(f_t(x, y_1) - r(x, y_1) - \Delta_{\pi_t}(x)\big)^2
+ \mathbb{E}_{x, y_2 \sim \mu}
\big(f_t(x, y_2) - r(x, y_2) - \Delta_{\mu}(x)\big)^2 \\
&\quad + \mathbb{E}_x \big(\Delta_{\pi_t}(x) - \Delta_{\mu}(x)\big)^2 \\
\leq&~ \mathrm{err}^2_t(m, \epsilon, \delta, \zeta).
\end{align*}

In that case, since the total sum is less than $\mathrm{err}^2_t(m, \epsilon, \delta, \zeta)$, it follows that each term must be less than $\mathrm{err}^2_t(m, \epsilon, \delta, \zeta)$. Hence, the lemma holds.
\end{proof}

\begin{lemma} \label{lem:bound}
Assume $\max_{x,y,t} |A_{t}(x,y)| \leq A \in \mathbb{R}^{+}$, and $\pi_1$  is uniform over $ \mathcal{Y} $. Then with $ \eta = \sqrt{\ln(|\mathcal{Y}|)/(A^{2}T)} $, for the sequence of policies computed by REBEL, we have:

\[
\forall \pi, x : \sum_{t=1}^{T} \mathbb{E}_{y \sim \pi(\cdot|x)} A_{t}(x,y) \leq 2A \sqrt{\ln(|\mathcal{Y}|)T}.
\]

\end{lemma}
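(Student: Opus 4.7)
The plan is to show that the idealized exact REBEL update reduces, per state, to a standard multiplicative weights (Hedge) update with gains $A_t(x,\cdot)$, and then to apply a potential-function regret analysis.

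First, I would analyze the first-order optimality of the REBEL objective in~\eqref{eq:rebel-update}. The inner squared term vanishes pointwise whenever
\begin{equation*}
\ln\frac{\pi_{t+1}(y|x)}{\pi_t(y|x)} - \ln\frac{\pi_{t+1}(y'|x)}{\pi_t(y'|x)} = \eta\bigl(r(x,y) - r(x,y')\bigr),
\end{equation*}
which characterizes the exact minimizer when the policy class is expressive enough (e.g., tabular softmax). This is equivalent to $\pi_{t+1}(y|x) \propto \pi_t(y|x)\exp(\eta\, r(x,y))$, and since any state-dependent baseline cancels under normalization, equally to $\pi_{t+1}(y|x) \propto \pi_t(y|x)\exp(\eta\, A_t(x,y))$. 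Thus the idealized REBEL iterates are exactly Hedge updates applied per state with gains $A_t(x,\cdot)$ bounded by $A$.

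Second, I would carry out the standard Hedge regret argument per state. Fix an arbitrary comparator $\pi$ and state $x$, and define the potential $\Phi_t(x) := \mathrm{KL}\bigl(\pi(\cdot|x)\,\|\,\pi_t(\cdot|x)\bigr)$. Plugging in the closed-form update and using the inequality $\exp(z)\leq 1+z+z^2$ valid for $|z|\leq 1$ (which holds since $\eta A \leq 1$ for the chosen $\eta$ when $T$ is large enough; otherwise the bound is trivial), a direct computation yields
\begin{equation*}
\Phi_t(x) - \Phi_{t+1}(x) \;\geq\; \eta\,\mathbb{E}_{y\sim\pi(\cdot|x)}[A_t(x,y)] \;-\; \eta\,\mathbb{E}_{y\sim\pi_t(\cdot|x)}[A_t(x,y)] \;-\; \eta^2 A^2.
\end{equation*}
Crucially, the middle term vanishes because $A_t(x,y) = r(x,y) - \mathbb{E}_{y'\sim\pi_t}[r(x,y')]$ is the centered advantage. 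Telescoping from $t=1$ to $T$, using $\Phi_{T+1}(x)\geq 0$ and $\Phi_1(x) \leq \ln|\mathcal{Y}|$ (since $\pi_1$ is uniform), gives
\begin{equation*}
\sum_{t=1}^T \mathbb{E}_{y\sim\pi(\cdot|x)}[A_t(x,y)] \;\leq\; \frac{\ln|\mathcal{Y}|}{\eta} + \eta A^2 T.
\end{equation*}
Substituting $\eta = \sqrt{\ln|\mathcal{Y}|/(A^2 T)}$ balances the two terms and yields the advertised bound $2A\sqrt{\ln|\mathcal{Y}|\,T}$, uniformly in $x$ and $\pi$.

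The main obstacle is the first step: justifying that ``the sequence of policies computed by REBEL'' can be identified with the Hedge update. This is clean when the policy class admits the exact minimizer (the tabular/sufficiently rich case), but otherwise one has to treat the lemma as pertaining to the idealized iterate that the algorithm approximates, with the approximation error already isolated by Assumption~\ref{ass:rebel} and Lemma~\ref{lem:delta}. Once that reduction is in place, the remaining potential-based regret calculation is routine; the only technical care needed is verifying the range condition $|\eta A_t(x,y)|\leq 1$ under which the $\exp(z)\leq 1+z+z^2$ surrogate is tight enough to produce the $\eta^2 A^2$ quadratic term rather than a larger constant.
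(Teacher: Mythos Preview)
Your Hedge/KL-potential computation (the second step) is correct and is essentially what the paper does once the multiplicative-weights form is in hand; the paper's proof simply refers back to the NPG telescoping argument, which is the log-partition dual of your potential calculation.

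The substantive difference is in the first step, and it is not resolved by idealizing. In the paper, $A_t$ is \emph{not} the standard advantage $A^{\pi_t}(x,y)=r(x,y)-\mathbb{E}_{y'\sim\pi_t}r(x,y')$. It is defined inside the proof so that the multiplicative-weights structure holds \emph{exactly} for whatever iterates REBEL actually produces. Concretely, define $f_t(x,y):=\tfrac{1}{\eta}\ln\tfrac{\pi_{t+1}(y|x)}{\pi_t(y|x)}$ tautologically for the realized $\pi_{t+1}$, set $\Delta(x,y):=f_t(x,y)-r(x,y)$ (as in Lemma~\ref{lem:delta}), and let $g_t(x,y):=r(x,y)+\Delta(x,y)-\Delta_\mu(x)$. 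Then $\pi_{t+1}(y|x)=\pi_t(y|x)\exp\bigl(\eta(r(x,y)+\Delta(x,y))\bigr)$ holds identically, and with $A_t(x,y):=g_t(x,y)-\mathbb{E}_{y'\sim\pi_t}g_t(x,y')$ one obtains $\pi_{t+1}(y|x)\propto\pi_t(y|x)\exp\bigl(\eta A_t(x,y)\bigr)$ with no expressivity or exact-minimizer assumption whatsoever. Your Hedge argument then applies verbatim to this $A_t$, since it is still centered under $\pi_t$ and bounded by $A$ by hypothesis.

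So the ``main obstacle'' you flag is dissolved not by treating the lemma as pertaining to an idealized iterate, but by absorbing the approximation error into the very definition of $A_t$. This matters downstream: Theorem~\ref{thm:private_rebel} uses the lemma for this $A_t$ and then separately controls $A^{\pi_t}-A_t$ via Lemma~\ref{lem:delta} and the concentrability coefficient. A version of the lemma proved only for $A^{\pi_t}$ under an idealized update would not plug into that argument.
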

\begin{proof}
    By the definition of $f_t$, we have 
\[
\Delta(x, y) = \frac{1}{\eta} \ln \frac{\pi_{t+1}(y|x)}{\pi_t(y|x)} - r(x, y).
\]

Taking $\exp$ on both sides, we get:
\begin{align*}
    \forall x, y: \quad \pi_{t+1}(y|x) = \pi_t(y|x) \exp\left( \eta \left( r(x, y) + \Delta(x, y) \right) \right)
= \frac{ \pi_t(y|x) \exp\left( \eta(r(x, y) + \Delta(x, y) - \Delta_{\mu}(x)) \right) }{ \exp(-\eta \Delta_\mu(x)) }.
\end{align*}

Denote 
\[
g_t(x, y) := r(x, y) + \Delta(x, y) - \Delta_\mu(x),
\]
and the advantage 
\[
A_t(x, y) = g_t(x, y) - \mathbb{E}_{y' \sim \pi_t(\cdot|x)} g_t(x, y').
\]
We can rewrite the above update rule as:
\[
\forall x, y: \quad \pi_{t+1}(y|x) \propto \pi_t(y|x) \exp\left( \eta A_t(x, y) \right)
\]

The remain part of the proof is similar to the analysis of NPG in \ref{proof:thm5}.
\end{proof}

\subsection{Proof of Theorem~\ref{thm:private_rebel}}
\begin{proof}

We know that: 

\begin{equation*}
    \frac{1}{T} \sum_{t=1}^{T} \left( \mathbb{E}_{x,y \sim \pi^*(\cdot|x)} r(x,y) 
    - \mathbb{E}_{x,y \sim \pi_t(\cdot|x)} r(x,y) \right) 
    = \frac{1}{T} \sum_{t=1}^{T} \mathbb{E}_{x,y \sim \pi^*(\cdot|x)} \left( A^{\pi_t}(x,y) \right).
\end{equation*}

Then, we have:  
\begin{align*}
    \\
    \frac{1}{T} \sum_{t=1}^{T} \mathbb{E}_{x,y \sim \pi^*(\cdot|x)} \left( A^{\pi_t}(x,y) \right) 
    &= \frac{1}{T} \sum_{t=1}^{T} \mathbb{E}_{x,y \sim \pi^*(\cdot|x)} \left( A_t(x,y) \right) \\
    &\quad \quad + \frac{1}{T} \sum_{t=1}^{T} \mathbb{E}_{x,y \sim \pi^*(\cdot|x)} \left( A^{\pi_t}(x,y) - A_t(x,y) \right) \\
    &\overset{(a)}{\leq} 2A \sqrt{\frac{\ln(|\mathcal{Y}|)}{T}} \\
    &\quad \quad + \frac{1}{T} \sum_{t=1}^{T} \sqrt{\mathbb{E}_x \mathbb{E}_{y \sim \pi^*(\cdot|x)} \left( A^{\pi_t}(x,y) - A_t(x,y) \right)^2},
\end{align*}

where (a) holds by~ Lemma \ref{lem:bound}. 

Then we need to bound: $\mathbb{E}_x \mathbb{E}_{y \sim \pi^*(\cdot|x)} \left( A^{\pi_t}(x,y) - A_t(x,y) \right)^2$.

By the definition of concentrability coefficient $C_{\mu \to \pi^*}$, we know that:
\begin{align*}
\mathbb{E}_{x}\mathbb{E}_{y\sim\pi^{*}(\cdot|x)}(A^{\pi_{t}}(x,y)- A_{t}(x,y))^{2} 
&\leq C_{\mu \to \pi^*}\mathbb{E}_{x,y\sim\mu(\cdot|x)}(A^{\pi_{t}}(x,y)- A_{t}(x,y))^{2}
\end{align*}

We now bound $\mathbb{E}_{x,y\sim\mu(\cdot|x)}(A^{\pi_{t}}(x,y)-A_{t}(x,y))^{2}$.

\begin{align*}
&\mathbb{E}_{x,y\sim\mu(\cdot|x)}(A^{\pi_{t}}(x,y)-A_{t}(x,y))^{2} \\
&= \mathbb{E}_{x,y\sim\mu(\cdot|x)}(r(x,y)-\mathbb{E}_{y^{\prime}\sim\pi_{t}(\cdot|x)}r(x,y^{\prime})-g_{t}(x,y)+\mathbb{E}_{y^{\prime}\sim\pi_{t}(\cdot|x)}g_{t}(x,y^{\prime}))^{2} \\
&\leq 2\mathbb{E}_{x,y\sim\mu(\cdot|x)}\left(r(x,y)-g_{t}(x,y)\right)^{2}+2\mathbb{E}_{x}\mathbb{E}_{y^{\prime}\sim\pi_{t}(\cdot|x)}\left(r(x,y^{\prime})-g_{t}(x,y^{\prime})\right)^{2}
\end{align*}

Recall the $g_{t}(x,y)=r(x,y)+\Delta(x,y)-\Delta_{\mu}(x)$, and from Lemma~\ref{lem:delta}, we can see that

\[
\mathbb{E}_{x,y\sim\mu(\cdot|x)}(r(x,y)-g_{t}(x,y))^{2}=\mathbb{E}_{x,y\sim\mu(\cdot|x)}(\Delta(x,y)-\Delta_{\mu}(x))^{2}\leq \mathrm{err}^2_t(m, \epsilon, \delta, \zeta).
\]

For $\mathbb{E}_{x}\mathbb{E}_{y^{\prime}\sim\pi_{t}(\cdot|x)}\left(r(x,y^{\prime})-g_{t}(x,y^{\prime})\right)^{2}$, we have:

\begin{align*}
\mathbb{E}_{x}\mathbb{E}_{y^{\prime}\sim\pi_{t}(\cdot|x)}\left(r(x,y^{\prime})-g_{t}(x,y^{\prime})\right)^{2} 
&= \mathbb{E}_{x}\mathbb{E}_{y^{\prime}\sim\pi_{t}(\cdot|x)}\left(\Delta(x,y^{\prime})-\Delta_{\mu}(x)\right)^{2} \\
&= \mathbb{E}_{x}\mathbb{E}_{y^{\prime}\sim\pi_{t}(\cdot|x)}\left(\Delta(x,y^{\prime})-\Delta_{\pi_{t}}(x)+\Delta_{\pi_{t}}(x)-\Delta_{\mu}(x)\right)^{2} \\
&\leq 2\mathbb{E}_{x}\mathbb{E}_{y^{\prime}\sim\pi_{t}(\cdot|x)}\left(\Delta(x,y^{\prime})-\Delta_{\pi_{t}}(x)\right)^{2}+2\mathbb{E}_{x}\left(\Delta_{\pi_{t}}(x)-\Delta_{\mu}(x)\right)^{2} \\
&\leq 4\mathrm{err}^2_t(m, \epsilon, \delta, \zeta),
\end{align*}

where the last inequality uses Lemma~\ref{lem:delta} again.

Combining things together, we can conclude that:

\[
\mathbb{E}_{x}\mathbb{E}_{y\sim\pi^{*}(\cdot|x)}(A^{\pi_{t}}(x,y)- A_{t}(x,y))^{2} \leq C_{\mu \to \pi^*}(10 \mathrm{err}^2_t(m, \epsilon, \delta, \zeta)).
\]

Hence, we can derive our main theorem:
\begin{align*}
    \frac{1}{T}\sum_{t=1}^{T}\mathbb{E}_{x,y\sim\pi^{*}(\cdot|x)}\left(A^{\pi_{t}}(x,y)\right) 
    &\leq 2A\sqrt{\frac{\ln|\mathcal{Y}|}{T}}+\frac{1}{T}\sum_{t}\sqrt{10 C_{\mu \to \pi^{*}}\mathrm{err}^2_t(m, \epsilon, \delta, \zeta)} \\
    &= 2A\sqrt{\frac{\ln|\mathcal{Y}|}{T}}+\frac{\sqrt{10 C_{\mu \to \pi^*}}}{T}\sum_{t=1}^{T}\mathrm{err}_t(m, \epsilon, \delta, \zeta).
\end{align*}

\end{proof}
\section{Experiments}
\label{app:experiments}

\textbf{Environment.} We conduct experiments on the \texttt{CartPole-v1} environment from OpenAI Gym, a standard benchmark for evaluating policy gradient methods. The task requires balancing a pole on a moving cart, with a maximum episode reward of 500.

\textbf{Policy Parameterization.} Policies are represented by two-layer fully-connected neural networks with 64 hidden units, ReLU activation, and softmax output layer, i.e., the architecture is $\text{Linear}(4, 64) \to \text{ReLU} \to \text{Linear}(64, 2) \to \text{Softmax}$.

\textbf{Privacy Settings.} We evaluate privacy-preserving algorithms under two privacy budgets: $(\epsilon, \delta) = (5, 10^{-5})$ and $(\epsilon, \delta) = (3, 10^{-5})$, representing moderate and strong privacy guarantees respectively.

\textbf{Training Details.} All algorithms are trained for 100 epochs with batch size 10 (i.e., 10 episodes per gradient update) and discount factor $\gamma = 0.99$. We use advantage normalization with baseline subtraction for variance reduction. Each algorithm is trained with 3 random seeds, and we report the average performance with standard deviation.

\textbf{Evaluation.} We evaluate performance using three metrics: (i) \textbf{Mean Final Reward}: average reward in the final epoch across all seeds, (ii) \textbf{Std Final Reward}: standard deviation of final rewards across seeds, and (iii) \textbf{Best Epoch Mean}: highest average reward achieved during training.

\textbf{Results.} Table~\ref{tab:cartpole_results} summarizes the performance under different privacy settings. 

\begin{table}[ht]
\centering
\begin{tabular}{lcccc}
\toprule
\textbf{Algorithm} & $\boldsymbol{\epsilon}$ & \textbf{Mean Final Reward} & \textbf{Std Final Reward} & \textbf{Best Epoch Mean} \\
\midrule
PG & N/A & 334.37 & 25.25 & 361.70 \\
\dppg & 5 & 190.34 & 52.91 & 199.17 \\
\dppg & 3 & 143.87 & 22.88 & 187.17 \\
NPG & N/A & 492.90 & 10.04 & 500.00 \\
\dpnpg & 5 & 478.73 & 28.05 & 494.70 \\
\dpnpg & 3 & 400.87 & 65.37 & 410.43 \\
\bottomrule
\end{tabular}
\caption{Performance summary for \texttt{CartPole-v1} under different privacy budgets ($\epsilon = 5$ and $\epsilon = 3$, $\delta = 10^{-5}$).}
\label{tab:cartpole_results}
\end{table}

We can see that (i) NPG consistently outperforms PG in both private and non-private settings, demonstrating the benefit of curvature-aware updates, (ii) \dpnpg with $\epsilon = 5$ achieves near-optimal performance ($\sim 500$), aligning with our theoretical predictions and empirical findings in~\citet{rio2025differentially} who use PPO instead of NPG, and (iii) as privacy budget decreases (smaller $\epsilon$), performance degrades as expected from theory.
\section{Limitations} \label{app:limitations}

In this study, we propose private variants of three classical algorithms for policy optimization and provide a comprehensive analysis of their sampling complexity under both private and non-private settings. Our analysis successfully recovers the classical complexity bounds in the non-private regime, validating the theoretical soundness of our approach. However, our current results focus only on the one-pass sampling setting; the sampling complexity in the multi-pass scenario may admit further improvements.

Moreover, while this work primarily focuses on the theoretical foundations of the proposed algorithms, we have also conducted simple empirical validations to support our theoretical findings. It should be noted, however, that we have not yet performed large-scale evaluations on real-world datasets. Nevertheless, since our methods serve as core components in policy optimization, they have broad applicability across various reinforcement learning domains—particularly in privacy-sensitive settings such as reinforcement learning with human feedback (RLHF) and medical data analysis. Applying our approach to these areas could further enhance the secure handling of sensitive information.

\end{document}